\useunder{\uline}{\ul}{}
\newtheorem{lemma}{Lemma}  
\journal{elsarticle}
\begin{document}

\begin{frontmatter}



\title{CONTINA: Confidence Interval for Traffic Demand Prediction with Coverage Guarantee} 
\begin{highlights}
\item  Propose an adaptive confidence interval modeling method for traffic demand prediction.
\item Prove coverage guarantee of our method for both average and worst-case scenarios.
\item Experiments across 4 datasets demonstrate the effectiveness of our method.
\end{highlights}

\author[a,b]{Chao Yang} 
\author[a]{Xiannan Huang}
\author[a]{Shuhan Qiu}
\author[a]{Yan Cheng \corref{cor1}}
\cortext[cor1]{Corresponding author (e-mail: yan\_cheng@tongji.edu.cn)}
\affiliation[a]{organization={Key Laboratory of Road and Traffic Engineering, Ministry of Education at Tongji University},
            addressline={4800 Cao’an Road}, 
            city={Shanghai},
            postcode={201804}, 
            state={Shanghai},
            country={China}}
\affiliation[b]            {organization={Urban Mobility Institute, Tongji University},
            addressline={1239 Siping Road}, 
            city={Shanghai},
            postcode={200092}, 
            state={Shanghai},
            country={China}}

\begin{abstract}
Accurate short-term traffic demand prediction is critical for the operation of traffic systems. Besides point estimation, the confidence interval of the prediction is also of great importance. Many models for traffic operations, such as shared bike rebalancing and taxi dispatching, take into account the uncertainty of future demand and require confidence intervals as the input. However, existing methods for confidence interval modeling rely on strict assumptions, such as unchanging traffic patterns and correct model specifications, to guarantee enough coverage. Therefore, the confidence intervals provided could be invalid, especially in a changing traffic environment. To fill this gap, we propose an efficient method, CONTINA (Conformal Traffic Intervals with Adaptation) to provide interval predictions that can adapt to external changes. By collecting the errors of interval during deployment, the method can adjust the interval in the next step by widening it if the errors are too large or shortening it otherwise. Furthermore, we theoretically prove that the coverage of the confidence intervals provided by our method converges to the target coverage level. Experiments across four real-world datasets and prediction models demonstrate that the proposed method can provide valid confidence intervals with shorter lengths. Our method can help traffic management personnel develop a more reasonable and robust operation plan in practice. And we release the code, model and dataset in  \href{ https://github.com/xiannanhuang/CONTINA/}{ GitHub}.

\end{abstract}


\begin{keyword}
traffic demand prediction\sep confidence interval\sep conformal prediction\sep dynamically self-adaptive


\end{keyword}
\end{frontmatter}
\definecolor{myblue}{RGB}{0,0,255}
\section{Introduction}
\label{sec1}
Short-term traffic demand prediction aims to forecast future traffic demand, such as taxi or bike-sharing inflow and outflow, across various urban regions. This task typically involves predicting demand for the next several time steps (e.g., next few hours) based on historical observations and other relevant features. This prediction is crucial as accurate forecasts can help traffic management authorities to allocate resources efficiently, such as rebalancing shared bikes or dispatching taxis \cite{Xu2023MultitaskSP}. Such efforts contribute to alleviating traffic congestion and building a more environmental-friendly society. Substantial work has emerged in this field in recent years, with most focusing on providing more accurate point predictions of future traffic demand. 
\begin{figure}[!h]
\centering
\includegraphics[width=0.99\linewidth]{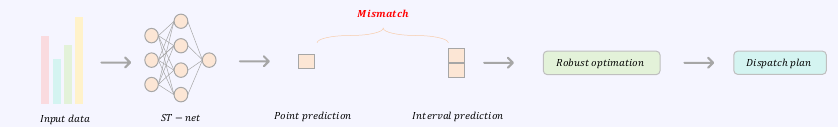}
\caption{The motivation of our paper: point prediction is not enough for may dispatch methods.}
\label{fig:motivation}
\end{figure}

However, point predictions alone are insufficient. Confidence intervals of these predictions are also important because traffic demand inherently involves uncertainty, making it nearly impossible to build a perfectly accurate point prediction model. Therefore, many studies on bike rebalancing or taxi dispatching do not rely solely on point predictions. Instead, they consider confidence intervals as inputs for their models, as shown in Figure \ref{fig:motivation}. These methods often use 
robust optimization \cite{WANG20211,HUANG202390} and assume that future bike usage falls into a certain interval, then return the rebalancing plan that remains efficient based on this assumption \cite{fu2022two,ZHAO2025104933,Yu2024RobustOM,GUO2021161,miao2017data,CHEN2023235}.

From this perspective, the key requirements for predicting confidence intervals are two-fold. 1) \textbf{Validity}. i.e. the model's confidence interval must have a high probability of covering the actual demand in the future. If coverage cannot be guaranteed, future traffic demand may exceed what the rebalancing plan can handle, thus reducing its robustness; and 2) \textbf{Efficiency}. i.e. when ensuring coverage, the confidence interval should be as short as possible. If the confidence interval is too long, the rebalancing plan has to account for highly improbable extreme scenarios, making the plan overly conservative. To this end, it is imperative to provide valid and efficient   confidence intervals for traffic demand prediction with coverage guarantee. 

Recently, some researchers have focused on predicting confidence intervals for traffic demand prediction \cite{Xu2023MultitaskSP,SENGUPTA2024104585,10495765,10458002,WangWZKZ24}. They often use some algorithms like model ensemble \cite{10495765}, Monte Carlo  (MC) Dropout \cite{SENGUPTA2024104585,9310711}, predicting variance and quantile regression \cite{9847117} to construct confidence interval. Although these methods are insightful, they require some rigorous assumptions to ensure validity (i.e. coverage guarantee). First, the functional form of the model, as well as the distributional assumptions about the errors, should be correctly specified. Second, the data distribution during deployment should be consistent with that during training. However, these assumptions may not hold in traffic prediction. Traffic patterns often  change over time \cite{10304591} and could violate the identical distribution assumption. Therefore, confidence intervals derived using these methods may not maintain adequate coverage . Typically, the coverage rate deteriorates with deployment.

To address the issue of overly strict assumptions, some other studies have applied conformal prediction to traffic forecasting \cite{10304591,10472567}, which adopts a dynamic calibration method. But they either focus solely on point predictions or use overly simplistic methods and fail to ensure validity. Therefore, we propose a valid and efficient confidence interval modeling method  by tackling the major challenges in original conformal prediction methods \cite{10.5555} which can provide a valid and efficient confidence interval prediction.

The contributions of our paper are two-fold:
\begin{enumerate}
    \item \textbf{Algorithmically:} We propose CONTINA (Conformal Traffic Intervals with Adaptation), a novel method to provide valid and efficient confidence intervals for traffic demand prediction. By combining adaptive conformal prediction, quantile prediction and dynamic learning rate mechanism, this method is capable of offering shorter confidence interval predictions while maintaining sufficient coverage rates, with evidence from experiments across four datasets.
    \item \textbf{Theoretically:} We prove that by using our approach, with the increase in deployment duration, not only the average coverage of the confidence intervals, but also the coverage of the region with the worst coverage will converge to the target coverage rate.
\end{enumerate}
The rest of this paper is organized as follows. Section 2 reviews the related literature and summarizes the research gaps. Section 3 introduces the method, especially the three improvements made to tackle the challenges in original conformal predictions. Section 4 presents theoretical proof of coverage guarantees. Experiments across four different real-world datasets and results are presented in Section 5. Section 6 concludes this study. Details of proof and results are listed in Appendix. 
\section{Literature review and preliminary}
\subsection{Modeling uncertainty in traffic demand prediction}

Most research on traffic prediction focuses on point forecasting. Early studies employed traditional modeling approaches such as VAR \cite{chandra2009predictions}, Kalman filtering \cite{okutani1984dynamic} VARMA \cite{van1996combining} and prophet \cite{chikkakrishna2022short}. With the advancement of deep learning techniques, spatio-temporal neural networks have increasingly become the mainstream method for traffic prediction \cite{luo2023stg4traffic}, for example LSTnet \cite{li2022lstnet}. Therefore, this section concentrates on how to provide confidence intervals when using neural networks for traffic demand prediction.

In traffic demand prediction, the modeling of confidence intervals typically separates the prediction uncertainty into epistemic uncertainty and aleatoric uncertainty. epistemic uncertainty refers to the mismatch between the patterns captured by the model and the true underlying patterns, which leads to prediction errors. This can be mitigated by increasing the training data or developing more appropriate models. aleatoric uncertainty reflects the inherent uncertainty in the problem itself. A more rigorous definition can be referred to \cite{10.5555/3666122.3666976}.

To obtain epistemic uncertainty, the primary challenge is estimating the probability over models given the training dataset \cite{10.5555/3666122.3666976}. A relatively simplified approach is to use ensemble learning. This involves generating multiple models through different parameter initializations or hyperparameter configurations, then using the variance of predictions across these models to estimate epistemic uncertainty \cite{10495765,NIPS2017_9ef2ed4b,10.5555/3495724.3496270}. This method assumes that training each model is equivalent to sampling from the  distribution of model given then training data. The other approach is to use Bayesian Neural Networks (BNNs), which treats network parameters not as fixed values but as distributions. The posterior distribution of the parameters is inferred from the training data. Since the posterior distribution is often intractable, methods like variational inference \cite{3305890.3305910} or Markov Chain Monte Carlo (MCMC) \cite{10458002,WangWZKZ24,3045118.3045248} are employed to perform sampling.

To obtain aleatoric uncertainty, one method is directly modeling the predicted distribution. For example, the true value is assumed to follow a Gaussian distribution, and the neural network outputs the mean and variance of the distribution \cite{3295222.3295309}. Alternatively, some models assume a negative binomial distribution for the true value and output the corresponding parameters \cite{3583780.3615215}. The other  method to obtain aleatoric uncertainty is outputting confidence interval \cite{pmlr-v80-pearce18a}.

Most studies focusing on confidence intervals in traffic prediction consider both epistemic uncertainty and aleatoric uncertainty to derive the final confidence intervals. Recently, some studies use inherently probabilistic neural networks to output uncertainties,  such as Gaussian Process Regression \cite{Xu2023MultitaskSP,DGP_TITS} or diffusion models \cite{10.1145/3589132.3625614,abs-2401-08119}. 

Recently, deep evidential learning \cite{amini2020deep,meinert2023unreasonable} has emerged as a promising approach for uncertainty quantification in machine learning. Unlike BNNs that place priors on weights, evidential learning places priors directly over the likelihood function's parameters (e.g., mean and variance for Gaussian likelihood). This allows a single deterministic neural network to infer the hyperparameters of a higher-order evidential distribution (such as the Normal-Inverse-Gamma distribution), which naturally encapsulates both aleatoric and epistemic uncertainties without requiring sampling during inference or out-of-distribution data for training \cite{Httel2023DeepEL}.

However these methods have certain limitations. First, these methods assume that the training and test data come from the same distribution, thereby overlooking the dynamic nature of traffic patterns and leading to invalid confidence intervals.  If this assumption is violated, which is common in traffic prediction, these methods would fail. Second, even in a stationary environment, the validity of these methods still relies on strong assumptions \cite{10.5555/3666122.3666976}, including a) large sample size (assuming an infinite amount of training data) and b) model correctness (assuming the model accurately captures the underlying relationships in the training data). 

However, such ideal conditions and assumptions are nearly impossible in real-world scenarios. This raises doubts about their applicability and effectiveness in traffic prediction tasks. Therefore, some researchers in statistics and machine learning fields have introduced conformal prediction to relax the strong assumptions.

\subsection{Conformal prediction}
Conformal prediction includes full conformal prediction and split conformal prediction. Here, we mainly focus on split conformal prediction, whose core idea is to infer the error on the test set using the error on the validation set, thus obtaining the confidence interval for test samples. Specifically, given a validation set $\{(x_i,y_i )\}_{i=1}^n$ and a prediction model $f$, the procedure works as follows: compute prediction error for each data in validation set and gather these errors together, resulting in a set $E=\{|f(x_i )-y_i |:i=1,2,…,n\}$. Then for a test data $x_{n+1}$, the prediction interval $C_{1-\alpha} (x_{n+1})$ can be constructed as \cite{10.5555}:
$$C_{1-\alpha}(x_{n+1}) = \left[ f(x_{n+1}) - Q_{1-\alpha}(E), f(x_{n+1}) + Q_{1-\alpha}(E) \right]$$

where the $Q_{1-\alpha}(E)$ is the $1-\alpha$ quantile of $E$. In detail, $Q_{1-\alpha}(E)$ is the $(1-\alpha)n$-th smallest value in $E$. The theorem vindicating this procedure is based on the assumption of exchangeability, which can be referred to \cite{Lei03072018}.

Although the assumption of exchangeability is weaker than i.i.d. (independent and identically distributed), which is used in many models, it often does not hold in practice. For example, the traffic pattern in future is usually not the same as the pattern in the past. As a result, original conformal prediction methods cannot guarantee the required coverage for confidence intervals in such cases. To tackle this challenge in the context of dynamic forecasting for time series, methods like online conformal prediction have been proposed to improve the original conformal prediction methods. 

The earliest work on online conformal prediction introduced a method to adjust the width of confidence intervals based on their performance during deployment \cite{NEURIPS2021_0d441de7}. For instance, if a confidence interval fails to cover the true value at a given time step, it would be widened for the next step; if it succeeds, it would be narrowed \cite{lin2022conformal}. The rates of widening and narrowing are predefined. Subsequent research extended this idea by removing the need for fixed adjustment rates, proposing adaptive approaches using methods like aggregating experts \cite{pmlr-v162-zaffran22a,JMLR:v25:22-1218} to determine these rates. Some studies framed this as an online convex optimization (OCO) problem \cite{10.1561/2400000013} and used some OCO algorithms to improve interval width adjustments \cite{10.5555/3618408.3618508,Zhang2024TheBO,3692070.3694489}. Beyond adjusting interval widths, other studies focused on dynamically updating the calibration set \cite{10121511}. New data observed during deployment is added to the calibration set, while the oldest data is removed, ensuring the set is updated at each time step. Under certain conditions, this method can also guarantee coverage. Additionally, research has shown that weighting data by similarity to prioritize relevant patterns can also improve coverage \cite{pmlr-v230-jonkers24a,10.1214/23-AOS2276}.

The other direction of improving original conformal prediction  methods focuses on constructing shorter prediction intervals. Traditional methods often yield intervals of uniform length, which can be suboptimal. Some studies have shown that variance differs across data, suggesting that confidence intervals should be longer for high-variance data and shorter for low-variance data \cite{Lei03072018}. Accounting for variance can produce data-specific intervals. Others proposed constructing intervals for each test sample using errors from its nearest neighbors in the calibration set \cite{Lei2014}. Other approaches include partitioning data by features to assign feature-specific interval lengths \cite{10.5555/3692070.3693062}. Replacing point prediction models with conditional distribution prediction models \cite{pnas2107794118,sesia2021conformal} to derive intervals has also been attempted. When data distributions are asymmetric, directly adding or subtracting the same value to the predicted mean is inappropriate. Conformal prediction based on quantile regression \cite{3618408.3620021} has been proposed to address this issue. 

\subsection{Addressing changing patterns for traffic forecasting}
Existing research has noted that changing traffic patterns can cause a data distribution shift between training and test sets. Some studies have proposed methods to address this.
One approach uses adaptive normalization \cite{Kim2022ReversibleIN,Fan2023DishTSAG,xie2023evolving}. Since traffic volume statistics (like mean and variance) differ across time periods, these methods learn specific adaptive parameters for different periods. This ensures data from various times is normalized to a more standard distribution.
Other methods focus on having the model learn latent background variables that capture the underlying time-varying factors affecting traffic patterns \cite{Huang2024}.
Another idea involves online updates to model parameters. For example, \cite{10.5555/3692070.3693235} borrowed a concept from large language models: adding a time-varying "prompt" to the input, which adapts to evolving traffic patterns.
Some studies focus on deployment strategies, like dynamically adjusting a model's parameters after deployment to help it adapt to the changing patterns \cite{chen2024calibration}. Others add an adapter module that learns extra parameters during online deployment to correct the model's output \cite{guo2024online}.

A key point is that these works primarily focus on point forecasting, with less attention paid to confidence intervals. However, they all emphasize that handling time-varying traffic patterns is crucial. This insight directly informs our work: it is essential to account for these dynamic changes when performing probabilistic forecasting.

\subsection{Challenges of conformal prediction in traffic demand forecasting}
Most existing studies on confidence interval modeling for traffic demand forecasting assume that traffic patterns remain unchanged, which is inconsistent with real-world scenarios. Conformal prediction, particularly its extensions, offers an effective way to account for changing traffic patterns when constructing confidence intervals. Recent works have applied conformal prediction methods to model confidence intervals for traffic demand forecasting. However, these approaches are often simplistic, which fails to guarantee coverage under dynamic conditions \cite{10304591}. Additionally, some methods focus solely on pointwise predictions, which is inadequate \cite{10472567}. As a result, there remain some major challenges when adopting original conformal prediction in traffic demand forecasting, which include:
\begin{enumerate}
    \item \textbf{Asymmetric distribution of confidence intervals.} The distribution of traffic demand is not symmetric. Therefore, the method of constructing a confidence interval by adding and subtracting the same value—which is applied when using the absolute residual as the conformity score—might be inappropriate. For example, if the prediction is 7 and the estimated error is 10, then we will get a prediction interval of [-3,17]. The -3 is pretty unreasonable since the traffic demand is at least 0.
    \item \textbf{Dynamic traffic patterns.} In the real world, the traffic pattern is changing over time and data in different time points cannot be considered as exchangeable. This fact will make the confidence interval provided by the original conformal prediction invalid. 
    \item \textbf{Multiple regions.} The traffic demand prediction task is a multivariate task and we need to predict traffic demand in each region. However, the traffic laws might change at different rates in different regions. For example, the traffic law in the region where a new railway station opens might change drastically, but the laws in other regions may not change significantly.
\end{enumerate}

Our work aims to tackle the above mentioned challenges by reforming some conformal prediction methods from machine learning area and developing a conformal prediction framework specifically tailored to traffic demand forecasting problem, with theoretical guarantees. Furthermore, since traffic demand forecasting involves multiple regions, our method seeks to ensure both global and region-specific coverage.
\section{Method}
\subsection{Problem definition}
In this paper, we focus on the problem of predicting the confidence interval for traffic demand at a future time step, given historical traffic demand data from several previous time steps. Suppose there are $n$ regions, and for each region, we need to estimate confidence intervals for both inflow and outflow. Let $y_{t,i,1}$ represent the actual inflow at time $t$ for region $i$, and $y_{t,i,2}$ represent the actual outflow. The estimated confidence bounds are denoted by $[\mathrm{low}_{t,i,1}, \mathrm{up}_{t,i,1}]$ for inflow and $[\mathrm{low}_{t,i,2}, \mathrm{up}_{t,i,2}]$ for outflow. We employ a predictive model $f$ to forecast traffic demand across different regions, aiming to control the confidence level at $1-\alpha$.

Our objective is to provide confidence intervals with sufficient coverage. First, we consider the average coverage. Suppose we deploy our method over $T$ time steps; the average coverage is defined as:
\begin{equation}
\mathrm{cov} = \frac{1}{2nT} \sum_{t=1}^T \sum_{i=1}^n \sum_{j=1}^2 \mathbb{I}\left(y_{t,i,j} \in [\mathrm{low}_{t,i,j}, \mathrm{up}_{t,i,j}]\right)
\label{eq:cov}
\end{equation}
where $\mathbb{I}(\cdot)$ denotes the indicator function, defined as:
$$
\mathbb{I}(a) = \begin{cases} 
1 & \text{if } a \text{ is true} \\
0 & \text{otherwise}
\end{cases}
$$

Additionally, since we provide prediction intervals for each region individually, we also aim to ensure adequate coverage for every single region. The reason for this requirement is that sometimes the overall coverage may meet the target—say, 90\%—but the specific coverage for individual regions can vary significantly. Some regions might have a high coverage, such as 95\%, while others may have a much lower rate, for example, 70\%. As a result, in those regions with lower coverage, the dispatch plan based on the predictions may fail to meet the actual demand. Therefore, we aim to ensure that the coverage requirement is met in every region. This means we want to guarantee that even the region with the lowest coverage maintains a reasonably high coverage. Consequently, we also focus on the following metric (minimum regional coverage, i.e., minRC):
\begin{equation}
\mathrm{minRC} = \min_{i} \left\{ \frac{1}{2T} \sum_{t=1}^T \sum_{j=1}^2 \mathbb{I}\left(y_{t,i,j} \in [\mathrm{low}_{t,i,j}, \mathrm{up}_{t,i,j}]\right) \right\}
\label{eq:minRC}
\end{equation}

As described in the Introduction section, we need to ensure both the \textbf{Validity} and \textbf{Efficiency} of the confidence intervals.
Regarding validity, our targets are to ensure:
\begin{align*}
\mathrm{cov} &= 1-\alpha \\
\mathrm{minRC} &= 1-\alpha
\end{align*}
Regarding efficiency, we aim to provide confidence intervals that are as short as possible. In other words, we need to minimize the following metric ($Length$):
\begin{equation}
\text{Length} = \frac{1}{2nT} \sum_{t=1}^T \sum_{i=1}^n{\sum_{j=1}^2}  |\text{up}_{t,i,j} - \text{low}_{t,i,j}|
\label{eq:length}
\end{equation}

\subsection{Quantile conformal prediction}
As mentioned above, traditional conformal prediction returns symmetric prediction intervals of the form $[\hat{y}-\delta, \hat{y}+\delta]$ when absolute residual is used as the conformity score. However, traffic demand distributions are inherently asymmetric since traffic demand is always non-negative. To address this challenge, we employ quantile conformal prediction \cite{NEURIPS2019_5103c358}, which uses quantile regression to predict different quantiles and handles their asymmetry. We adapt this approach by transforming a point prediction model into a quantile prediction model.

Specifically, we modify the model to predict both the $\alpha/2$ and $1-\alpha/2$ quantiles of traffic demand. This transformation is straightforward for most deep-learning-based traffic prediction models, which typically consist of a spatial-temporal net to excavate features and a prediction head to get the prediction. The only required modification is changing the prediction head's output dimension from 1 to 2 (from predicting just the mean value to predicting both quantiles). During training, we use the quantile loss function:

\begin{align}
\mathcal{L}_{\alpha/2}(y, y_{\alpha/2}) &= (1-\alpha/2)(y_{\alpha/2}-y)\mathbb{I}(y \leq y_{\alpha/2}) + \alpha/2(y-y_{\alpha/2})\mathbb{I}(y > y_{\alpha/2}) \\
\mathcal{L}_{1-\alpha/2}(y, y_{1-\alpha/2}) &= \alpha/2(y_{1-\alpha/2}-y)\mathbb{I}(y \leq y_{1-\alpha/2}) + (1-\alpha/2)(y-y_{1-\alpha/2})\mathbb{I}(y > y_{1-\alpha/2})
\end{align}

where $y$ is the true traffic demand value, $y_{\alpha/2}$ and $y_{1-\alpha/2}$ are the predicted $\alpha/2$ and $1-\alpha/2$ quantiles and $\mathbb{I}(\cdot)$ is the indicator function (1 if condition is true, 0 otherwise)

The total loss is the sum of these individual losses:
$$
\mathcal{L} = \mathcal{L}_{\alpha/2} + \mathcal{L}_{1-\alpha/2}
$$

We compute gradients with respect to the model parameters and update them using standard optimization algorithms.

After training, we adjust the quantile predictions on the validation set following the procedure proposed in \cite{NEURIPS2019_5103c358}. For any given region $i$ and flow direction $j$ (where $j=1$ for inflow and $j=2$ for outflow), consider a data point $(x_{t,i,j}, y_{t,i,j})$ in the validation set with predicted quantiles $y_{t,i,j,\alpha/2}$ (lower) and $y_{t,i,j,1-\alpha/2}$ (upper). The conformal score is calculated as:

\begin{equation}
e_{t,i,j} = \max\left\{y_{t,i,j} - y_{t,i,j,1-\alpha/2}, \ y_{t,i,j,\alpha/2} - y_{t,i,j}\right\}
\label{eq_e}
\end{equation}

This implies:
\begin{itemize}
    \item When $y_{t,i,j} \leq y_{t,i,j,\alpha/2}$, the score becomes $e_{t,i,j} = y_{t,i,j,\alpha/2} - y_{t,i,j}$
    \item When $y_{t,i,j} \geq y_{t,i,j,1-\alpha/2}$, the score becomes $e_{t,i,j} = y_{t,i,j} - y_{t,i,j,1-\alpha/2}$
\end{itemize}

We collect all conformity scores $e_{t,i,j}$ into a set $E_{i,j}$. The $(1-\alpha)$-quantile of $E_{i,j}$ is denoted as $Q_{1-\alpha}(E_{i,j})$. 

For a new test observation $x_{i,j}$, the final prediction interval for $y_{i,j}$ is:

\begin{equation}
C_{1-\alpha}(x_{i,j}) = \left[y_{i,j,\alpha/2} - Q_{1-\alpha}(E_{i,j}), \ y_{i,j,1-\alpha/2} + Q_{1-\alpha}(E_{i,j})\right]
\end{equation}
\begin{figure}
    \centering
    \includegraphics[width=0.95\linewidth]{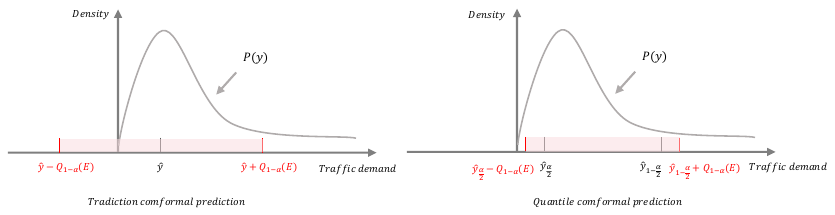}
    \caption{The comparison between traditional conformal prediction ana quantile conformal prediction (The pink area represents confidence interval)}
    \label{fig:qcp}
\end{figure}
where $y_{i,j,\alpha/2}$ and $y_{i,j,1-\alpha/2}$ are the predicted lower and upper quantiles, respectively. The comparison between traditional conformal prediction and quantile conformal prediction is shown in Figure \ref{fig:qcp}, and it can be found that quantile conformal prediction could result in more reasonable confidence interval when the distribution of $y$ is asymmetric. Besides, as proved in \cite{NEURIPS2019_5103c358}, if $y$ is the actual value and data points in the validation and test are exchangeable, then:

$$
\mathbb{P}\left(y_{i,j} \in C_{1-\alpha}(x_{i,j})\right) \geq 1-\alpha
$$

This conclusion means the coverage of quantile conformal prediction can be guaranteed for exchangeable dataset.
\begin{figure}[!b]
\centering
\includegraphics[width=0.99\linewidth]{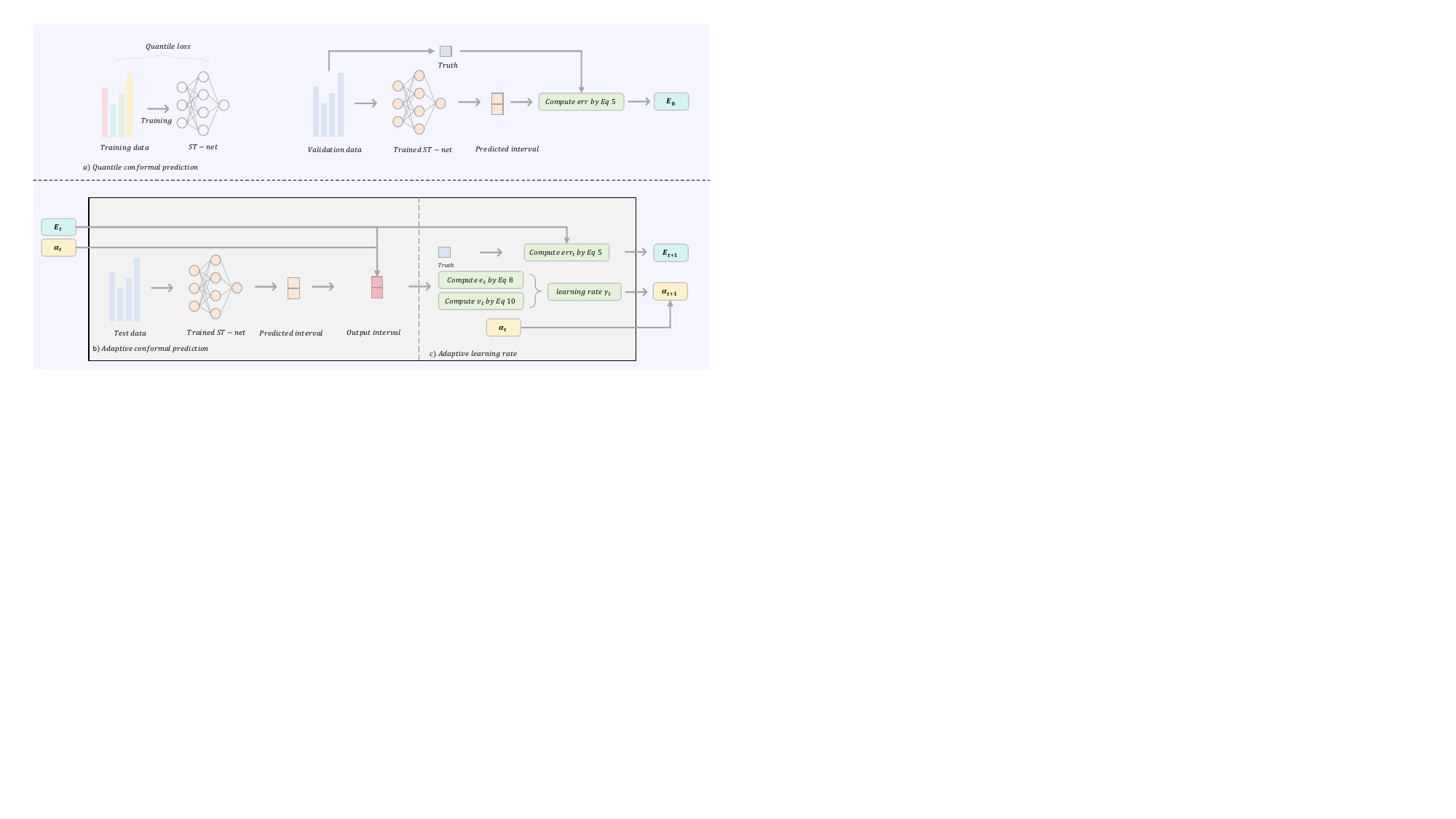}
\caption{The generation procedure of adaptive confidence interval}
\label{fig:workflow}
\end{figure}
\subsection{Dynamic updating of confidence intervals}
Although previous studies have shown that quantile conformal prediction can provide coverage guarantees under the assumption of exchangeable data, our problem involves data that are not exchangeable due to the second challenge, i.e., Dynamic traffic patterns. Therefore, the aforementioned method of using only the validation set to adjust quantile prediction cannot ensure coverage. To address this issue and obtain confidence intervals with coverage guarantee when traffic patterns change, we draw inspiration from adaptive conformal prediction \cite{NEURIPS2021_0d441de7} and propose to update confidence intervals dynamically during deployment based on the actual coverage achieved by the past intervals. For example, if the true demand is not in the confidence interval, then the confidence interval will be elongated in the next time step. Specifically, for a specific region $i$, time step $t$, and flow index $j$ (where $j=1$ for inflow and $j=2$ for outflow), the prediction interval is:

\begin{equation}
C_{1-\alpha}(x_{t,i,j}) = \left[y_{t,i,j,\alpha/2} - Q_{1-\alpha_{t,i}}(E_{t,i,j}), \ y_{t,i,j,1-\alpha/2} + Q_{1-\alpha_{t,i}}(E_{t,i,j})\right]
\label{eq:pred_interval}
\end{equation}

Equation \eqref{eq:pred_interval} replaces the fixed $1-\alpha$ quantile with an adaptive $1-\alpha_{t,i}$ quantile of $E_{t,i,j}$. 

We first explain how $\alpha_{t,i}$ is determined for each time step, then describe the construction of $E_{t,i,j}$.

The parameter $\alpha_{t,i}$ is updated iteratively. At time $t$, we calculate the coverage error as:

\begin{equation}
\mathrm{err}_{t,i} = 1 - \frac{\mathbb{I}(y_{t,i,1} \in C_{1-\alpha}(x_{t,i,1})) + \mathbb{I}(y_{t,i,2} \in C_{1-\alpha}(x_{t,i,2}))}{2}
\label{eq:coverage_error}
\end{equation}

This error represents the proportion of true values falling outside the confidence interval. The parameter $\alpha_{t,i}$ is then updated using:

\begin{equation}
\alpha_{t+1,i} = \alpha_{t,i} + \gamma_{t,i}(\alpha - \mathrm{err}_{t,i})
\label{eq:alpha_update}
\end{equation}

where $\gamma_{t,i}$ serves as a learning rate (discussed in detail later). Intuitively, this update rule compares the observed coverage error with the target $\alpha$. If the actual error exceeds $\alpha$, $\alpha_{t+1,i}$ decreases. For example, if $\alpha_{t,i} = 0.1$ and the observed error ($\mathrm{err}_{t,i}$) is too large. Then $\alpha_{t+1,i}$ will be smaller than $\alpha_{t,i}$, for example, 0.09. This means the 91st percentile quantile will be used instead of the 90th quantile in the next time step, resulting in a wider prediction interval, as the 91st percentile quantile is greater than the 90th percentile quantile.

For cases where the $1-\alpha_{t,i}$ falls outside the interval $[0,1]$, we establish specific rules. Theoretically, when $1-\alpha_{t,i} > 1$, we define $Q_{1-\alpha_{t,i}}(E_{t,i,j}) = +\infty$, and when $1-\alpha_{t,i} < 0$, we define $Q_{1-\alpha_{t,i}}(E_{t,i,j}) = -\infty$. In practical implementation, since infinite values cannot be processed directly, we adopt the following approximations: when $1-\alpha_{t,i} > 1$, we set $Q_{1-\alpha_{t,i}}(E_{t,i,j})$ to be twice the maximum value in $E_{t,i,j}$; when $1-\alpha_{t,i} < 0$, we simply define $C_{1-\alpha}(x_{t,i,j})$ as the empty set.

As for $E_{t,i,j}$, we add the most recent conformal score $e_{t,i,j}$ into $E_{t,i}$, and delete the earliest one in each time step, as suggested by \cite{xu_conformal_2021}.

\subsection{Adaptive learning rate determination}
In Equation \ref{eq:alpha_update}, the learning rate $\gamma_{t,i}$ should be determined. In the earliest adaptive conformal prediction research \cite{NEURIPS2021_0d441de7}, the learning rate was set as a constant. And some later researchers \cite{pmlr-v162-zaffran22a,JMLR:v25:22-1218} pointed out that a constant learning rate could be suboptimal. And the situation in our problem could be even more complicated because the rate of traffic pattern change in different regions might be different. As a result, the learning rate for each region could be distinct. 

To address this challenge, when updating the lengths of confidence intervals for different regions, the rate of updating should vary according to regions. In consequence, we propose a method that decides the updating rate adaptively for each region. We draw inspiration from optimization algorithms for deep learning (such as Adam \cite{Kingma2014AdamAM} and SGDM \cite{1983A}) 
which can set different learning rates for different parameters adaptively, and propose to use second order momentum to adjust the learning rate for each region. This method could keep the learning more stable and accelerate convergence rate \cite{JMLR:v12:duchi11a}. We will elaborate this procedure in the following part.

Suppose the initial learning rate is $\gamma_1$ and initial moment is $v_{1,i}=0$ for region $i$. Then in time step $t$, we have $v_{t-1,i}$ from the past step and obtain $\mathrm{err}_{t,i}$ in this step, then:

\begin{equation}
v_{t,i} = \beta v_{t-1,i} + (1 - \beta)(\mathrm{err}_{t,i} - \alpha)^2 
\label{eq_v}
\end{equation}

Then:

\begin{equation}
\alpha_{t+1,i} = \alpha_{t,i} - \frac{\gamma_1}{\sqrt{v_{t,i}}+\epsilon} (\mathrm{err}_{t,i} - \alpha)
\label{eq_al}
\end{equation}

Therefore, the learning rate at each time step $t$ for each region $i$ is:

$$
\gamma_{t,i} = \frac{\gamma_1}{\sqrt{v_{t,i} } +\epsilon}
$$

where $\epsilon$ is a small number used to prevent dividing by zero.

We summarize our algorithm in Figure \ref{fig:workflow}.

\section{Theoretical results}
\newtheorem{theorem}{Theorem} 
\begin{theorem}[Average guarantee]
For arbitrary prediction models and arbitrary data distributions, we have the following guarantee:
$$
|\mathrm{cov} - (1 - \alpha)| \leq \frac{c}{T}
$$
where $c$ is a constant.
\label{Tho:Avg_cov}
\end{theorem}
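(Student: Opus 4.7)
The plan is to adapt the adaptive-conformal-prediction coverage argument of Gibbs and Candès to the present setting, with care for the adaptive learning rate and the aggregation across regions. The strategy is to work per region $i$ and prove $\bigl|\sum_{t=1}^T(\alpha-\mathrm{err}_{t,i})\bigr|\le c_0$ for a constant $c_0$ independent of $T$; since $\mathrm{cov}-(1-\alpha)=\frac{1}{nT}\sum_i\sum_t(\alpha-\mathrm{err}_{t,i})$ by \eqref{eq:cov} and \eqref{eq:coverage_error}, averaging immediately gives $|\mathrm{cov}-(1-\alpha)|\le c_0/T$.

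First I would show that $\alpha_{t,i}$ stays in a fixed bounded interval. The $\pm\infty$ quantile conventions enforce a self-correcting mechanism: whenever $\alpha_{t,i}<0$, the interval $C_{1-\alpha}$ covers everything, so $\mathrm{err}_{t,i}=0$ and \eqref{eq:alpha_update} gives $\alpha_{t+1,i}=\alpha_{t,i}+\gamma_{t,i}\alpha>\alpha_{t,i}$; symmetrically when $\alpha_{t,i}>1$. Since $v_{t,i}\ge 0$ yields $\gamma_{t,i}\le \gamma_1/\epsilon$, a simple induction gives $\alpha_{t,i}\in[-\gamma_1(1-\alpha)/\epsilon,\,1+\gamma_1\alpha/\epsilon]$, hence $|\alpha_{t,i}|\le B$ for some $B$ depending only on $\gamma_1,\epsilon,\alpha$. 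Telescoping \eqref{eq:alpha_update} then gives the $\gamma$-weighted deviation bound
$$\Bigl|\sum_{t=1}^T\gamma_{t,i}(\alpha-\mathrm{err}_{t,i})\Bigr|=|\alpha_{T+1,i}-\alpha_{1,i}|\le 2B.$$

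To convert this into an unweighted bound, I would use that $v_{t,i}$ in \eqref{eq_v} is a weighted average of values in $\{\alpha^2,(1/2-\alpha)^2,(1-\alpha)^2\}\subset[0,1]$ with weights summing to at most $1$, so $v_{t,i}\in[0,1]$ and therefore $\gamma_{t,i}\in[\gamma_1/(1+\epsilon),\,\gamma_1/\epsilon]$. Abel summation applied to
$$\sum_{t=1}^T(\alpha-\mathrm{err}_{t,i})=\sum_{t=1}^T\frac{\alpha_{t+1,i}-\alpha_{t,i}}{\gamma_{t,i}}$$
rewrites the unweighted sum as the bounded boundary terms $\alpha_{T+1,i}/\gamma_{T,i}-\alpha_{1,i}/\gamma_{1,i}$ plus a correction $\sum_{t=2}^T\alpha_{t,i}\bigl(1/\gamma_{t,i}-1/\gamma_{t-1,i}\bigr)$. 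Controlling this correction with the boundedness of $\alpha_{t,i}$ and the step-to-step smoothness of $1/\gamma_{t,i}$ produced by the exponential moving average in \eqref{eq_v} then yields the desired constant $c_0$.

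The main obstacle is the Abel-summation step. In the constant-learning-rate case the weighted-to-unweighted conversion is immediate, but here a naive bound on the correction term is controlled only by the total variation of $1/\gamma_{t,i}$, which can in principle grow linearly in $T$. A clean resolution likely requires exploiting the fact that the increments of $\alpha_{t,i}$ and of $v_{t,i}$ are both driven by the common quantity $\mathrm{err}_{t,i}-\alpha$, so that oscillations in $1/\gamma_{t,i}$ are paired with compensating movements in $\alpha_{t,i}$. A joint potential-function analysis that tracks $(\alpha_{t,i},v_{t,i})$ simultaneously, rather than a bare Abel calculation, is the most natural way to close this gap and recover a constant $c_0$ that depends on $\gamma_1,\epsilon,\beta,\alpha$ but not on $T$.
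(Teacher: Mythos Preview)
Your overall plan coincides with the paper's proof: bound $\alpha_{t,i}$ by the self-correcting mechanism (paper's Lemma~1), bound $v_{t,i}\in[0,1)$ (paper's Lemma~2), and use Abel summation to pass from the telescoping $\gamma$-weighted sum to the unweighted one. The paper's Abel step is exactly your decomposition: setting $\Delta_{i,r}=1/\gamma_{r,i}-1/\gamma_{r-1,i}$ (with $\Delta_{i,1}=1/\gamma_{1,i}$), it arrives at
\[
\Bigl|\tfrac{1}{T}\sum_{t}(\alpha-\mathrm{err}_{t,i})\Bigr|\le \tfrac{1}{T}\,\max_r|\alpha_{T,i}-\alpha_{r,i}|\cdot\sum_{r=1}^T|\Delta_{i,r}|,
\]
so the whole question reduces, as you say, to bounding the total variation of $1/\gamma_{t,i}$. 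One minor difference: the paper's Lemma~1 uses the lower bound $v_{t,i}\ge(1-\beta)k\alpha^2$ (with $k=\min\{1,(0.5-\alpha)^2/\alpha^2,(1-\alpha)^2/\alpha^2\}$) to get a tighter upper bound on $\gamma_{t,i}$ than your $\gamma_1/\epsilon$, but either suffices for bounding $\alpha_{t,i}$.

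Where you stop and flag an obstacle, the paper simply writes $\Delta_{i,r}=(\sqrt{v_{r,i}}-\sqrt{v_{r-1,i}})/\gamma_1$ and asserts $\sum_r|\Delta_{i,r}|=\sqrt{v_{T,i}}/\gamma_1<1/\gamma_1$; no potential-function argument is used. Note, however, that this equality is a telescoping identity for $\sum_r\Delta_{i,r}$, not for $\sum_r|\Delta_{i,r}|$; the two agree only when $\sqrt{v_{t,i}}$ is monotone, which the paper does not establish and which need not hold since $(\mathrm{err}_{t,i}-\alpha)^2$ takes three distinct values. So your concern about oscillation is well placed: you have reconstructed the paper's argument and put your finger on the very step the paper passes over without justification.
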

This theorem states that, the average coverage achieved by our method will converge to the target coverage as the deployment time increases. Moreover, the convergence rate is inversely proportional to the deployment time. We do not need any unrealistic assumptions, such as i.i.d. data, larger sample size or correct model specification, to ensure coverage.

Additionally, we aim to establish a result indicating that, even in the region with the lowest coverage, our method can still maintain a reasonable level of coverage or converge to the desired target coverage rate. To achieve this result, we need to introduce an additional assumption: the coverage error defined by Equation \ref{eq:coverage_error} at a given time step only depends on the data from the $K$-most recent time steps and is independent of data from earlier time steps. This assumption is not so strict because it is reasonable to assume the traffic demand in pretty early time is independent of the traffic demand in the future, and this will result in the independence of errors.

\begin{theorem}[Coverage guarantee for the worst region]
If for any region $i$, index $j$ and time step $t,t'$ such that $|t'-t|\geq K$, we have:
$$
err_{t,i,j}\perp err_{t',i,j}
$$
then:
$$
\min\mathrm{RC} \geq 1 - \alpha - \frac{c_1}{T} - \sqrt{\frac{c_2 K \log n}{T}}
$$
where $c_1$, $c_2$ are constants, $K$ is the dependence window size, and $n$ is the number of regions.
\label{Tho:min_cov}
\end{theorem}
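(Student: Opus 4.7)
The plan is to write $\bar{\mathrm{err}}_i - \alpha = \bigl(\mathbb{E}[\bar{\mathrm{err}}_i] - \alpha\bigr) + \bigl(\bar{\mathrm{err}}_i - \mathbb{E}[\bar{\mathrm{err}}_i]\bigr)$ with $\bar{\mathrm{err}}_i := (1/T)\sum_t \mathrm{err}_{t,i}$, and to bound each piece separately. The first piece is the bias of the adaptive conformal update at region $i$ and will contribute the $c_1/T$ term, inherited from the deterministic analysis behind Theorem~\ref{Tho:Avg_cov}. The second piece is a concentration term that exploits the $K$-dependence hypothesis and, after a union bound over the $n$ regions, contributes the $\sqrt{c_2 K \log n / T}$ term. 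Because $\min\mathrm{RC} = 1 - \max_i \bar{\mathrm{err}}_i$, combining the two bounds with the triangle inequality yields the theorem.

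For the bias term I would replay the argument behind Theorem~\ref{Tho:Avg_cov} one region at a time. The update $\alpha_{t+1,i} = \alpha_{t,i} + \gamma_{t,i}(\alpha - \mathrm{err}_{t,i})$ acts independently for each $i$, and because $\mathrm{err}_{t,i} \in [0,1]$ the second-moment surrogate $v_{t,i}$ stays bounded, so $\gamma_{t,i}$ is pinched between two positive constants of the form $\gamma_1/(1+\epsilon)$ and $\gamma_1/\epsilon$. Telescoping yields $\sum_{t=1}^T \gamma_{t,i}(\alpha - \mathrm{err}_{t,i}) = \alpha_{T+1,i} - \alpha_{1,i}$, which is $O(1)$ once one verifies that $\alpha_{t,i}$ itself cannot escape a bounded range (the same invariant used for Theorem~\ref{Tho:Avg_cov}). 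Dividing by $\gamma_{\min}$ and $T$ gives $|\mathbb{E}[\bar{\mathrm{err}}_i] - \alpha| \leq c_1/T$ for every region.

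For the concentration term I would use a blocking argument. Partition $\{1,\ldots,T\}$ into $K$ subsequences $S_k = \{k, k+K, k+2K,\ldots\}$ for $k=1,\ldots,K$; consecutive indices within any $S_k$ are at least $K$ time steps apart, so by hypothesis the errors inside that $S_k$ are independent. Hoeffding's inequality on each subsequence of length $\approx T/K$ controls its partial mean to within $O(\sqrt{K\log(1/\delta)/T})$ with probability $\geq 1-\delta$. Aggregating across the $K$ subsequences gives the same rate for $|\bar{\mathrm{err}}_i - \mathbb{E}[\bar{\mathrm{err}}_i]|$, and a union bound over the $n$ regions at level $\delta = 1/n$ yields $\max_i |\bar{\mathrm{err}}_i - \mathbb{E}[\bar{\mathrm{err}}_i]| \leq \sqrt{c_2 K \log n / T}$ with high probability, exactly the form needed to combine with the bias estimate.

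The main obstacle will be making the blocking argument genuinely work under the stated hypothesis. Pairwise independence alone is not enough for a Hoeffding-type bound, so I would need the hypothesis to be read as full $K$-dependence, meaning that any two index blocks with time gap at least $K$ are mutually independent; this is what the partition into subsequences really uses. Once that is granted, the remaining difficulties---bookkeeping for non-integral block lengths, verifying that $\alpha_{t,i}$ stays in a bounded range, and extracting the uniform learning-rate constants $\gamma_{\min}, \gamma_{\max}$ that feed into $c_1$---are essentially routine. A secondary subtlety is that a strict pathwise per-region version of Theorem~\ref{Tho:Avg_cov} would already give $\min\mathrm{RC} \geq 1-\alpha - c_1/T$ without any $\log n$ term; the $\sqrt{K\log n/T}$ contribution in the stated bound appears precisely because the argument here passes through expectations so as to absorb the randomness across regions via the union bound.
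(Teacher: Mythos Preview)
Your proposal is correct and mirrors the paper's proof: the bias term is handled via the per-region telescoping argument behind Theorem~\ref{Tho:Avg_cov}, and the concentration term via blocking $\{1,\ldots,T\}$ into $K$ arithmetic subsequences and applying Hoeffding within each. The only notable difference is the final aggregation over regions: you use a union bound to get a high-probability statement, whereas the paper bounds $\mathbb{E}[\max_i v_i]$ directly via the sub-Gaussian moment-generating-function maximal inequality and thus states the conclusion for the \emph{expectation} of $\min\mathrm{RC}$; both routes give the same $\sqrt{K\log n/T}$ rate, and your remark that a pathwise per-region version of Theorem~\ref{Tho:Avg_cov} would already render the $\log n$ term unnecessary is well taken.
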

This theorem demonstrates that even for the region with the lowest coverage, our method can maintain a relatively high coverage level. Furthermore, as the deployment time increases, the coverage rate will converge to the desired target coverage rate. And it is needed to emphasize that the number of regions $n$ just causes error proportion to $\sqrt{log(n)}$, which means that even for a city with a larger number of regions, the worst regional coverage will not deteriorate very much.

The proofs of these two theorems are in \ref{proofs}.
\section{Experiments}
\subsection{Datasets}
In the experiments, we used four real-world datasets. Each dataset spans 16 months, from January to April of the next year.
\begin{enumerate}
    \item \textbf{NYCbike}: This dataset covers shared bike usage records in New York City from January 2022 to April 2023. Each entry represents a bike pickup and drop-off event. Following previous methods in \cite{10.1145/3474717.3483923}, we divided New York into 200 grids and calculated the bike pickup and drop-off quantities for each grid every hour. 
    \item \textbf{NYCtaxi}: This dataset covers taxi usage data from January 2018 to April 2019 in New York City, with hourly usage for each region, including both pick-up and drop-off dimensions. The division of regions is based on the scheme provided by the official website.
    \item \textbf{CHIbike}: The dataset contains shared bike usage records in Chicago from January 2022 to April 2023. Chicago is divided into 200 grids and hourly bike pickup and drop-off quantities for each grid are collected.
    \item \textbf{CHItaxi}: This dataset contains hourly taxi pick-up and drop-off values of census regions in Chicago from January 2016 to April 2017.
\end{enumerate}
The descriptions of datasets are summarized in Table \ref{tab:data_des} and we plot the regions or grids of each dataset in Figure \ref{fig:map}.
\begin{table}[!h]
\centering
\small
\setlength{\tabcolsep}{4pt} 
\begin{tabular}{@{}ccccccc@{}}
\toprule
\multirow{2}{*}{Dataset} & \multirow{2}{*}{No. regions} & \multirow{2}{*}{Mean length/$km$} & \multirow{2}{*}{Mean wide/$km$} & \multirow{2}{*}{Mean area/$km^2$} & \multirow{2}{*}{Mean usage}  \\
                         &                              &                                 &                               &                             &                                                         \\ \midrule
NYCBike                  & 200                          & 1.02                            & 2.89                          & 2.95                        & 17.8                                            \\
NYCTaxi                  & 263                          & 7.59                            & 7.69                          & 32.45                       & 41.8                                              \\
ChiBike                  & 200                          & 0.92                            & 0.82                          & 0.75                        & 2.03                                            \\
ChiTaxi                  & 171                          & 0.97                            & 1.06                          & 0.89                        & 10.3                                                \\ \bottomrule
\end{tabular}
\caption{Descriptions of datasets}
\label{tab:data_des}
\end{table}
\begin{figure}[!h]
    \centering
    \includegraphics[width=0.9\linewidth]{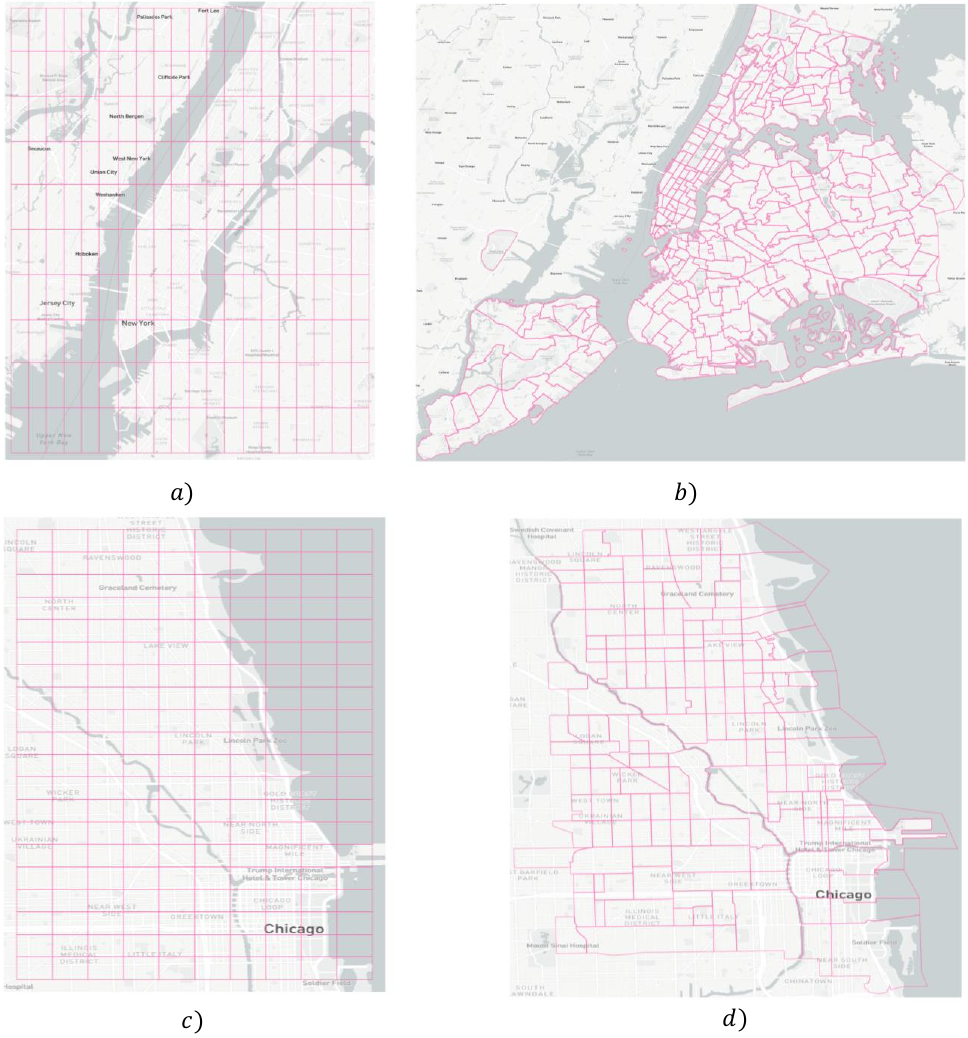}
    \caption{ The grids or regions for datasets. a) the grids of NYCbike dataset, b) the regions of NYCtaxi dataset, c) the grids of CHIbike dataset, d) the regions of CHItaxi datasets. }
    \label{fig:map}
\end{figure}
\subsection{Setup}
The task is predicting bike/taxi usage in the following hour using usage records in the preceding 6 hours. We used data from January to November for training and data in December for validation, then deployed the trained model on data from January to April of the next year. The grids or regions with average bike or taxi usage below 2 were deleted in the experiments.

To validate that our method can be applied to a wide range of models, we selected four classic spatial-temporal prediction models, STGCN \cite{10.5555/3304222.3304273}, DCRNN \cite{li2018diffusion}, MTGNN \cite{10.1145/3394486.3403118} and GWNET \cite{10.5555/3367243.3367303}, as our prediction model. $\alpha$ is set as 0.1, in other words, target coverage rate is 90\%. In the adaptive learning rate algorithm, $\beta$ is set as 0.99, $\gamma_1$ is 0.005, and $\epsilon$ is $e^{-8}$. 
\subsection{Baselines and evaluation metrics}
We choose baseline methods from three perspectives: 
\begin{itemize}
    \item Traditional confidence interval prediction methods: quantile regression (QR), Bootstrap \cite{10495765}, MC dropout \cite{10.5555/3045390.3045502}, directly minimizing Mean interval score (MIS) \cite{10.1145/3447548.3467325}, DeepJMQR \cite{rodrigues2020beyond}.
    \item Methods for confidence interval modeling in traffic prediction task: DESQRUQ \cite{10495765}, ProbGNN \cite{WangWZKZ24}, UATGCN \cite{10458002} and QUANTARFFIC \cite{10304591}. 
    \item Conformal prediction and its online versions: CP (traditional conformal prediction) \cite{10.5555}, ACI (adaptive conformal prediction) \cite{NEURIPS2021_0d441de7}, DtACI \cite{JMLR:v25:22-1218}, QCP (quantile conformal prediction) \cite{NEURIPS2019_5103c358}
\end{itemize}

Regarding evaluation metrics, as we mentioned earlier, confidence intervals must satisfy two key criteria: The intervals should cover the true value with a specified probability. This applies both globally (average coverage across all regions) and locally (coverage for each individual region, ensuring no area is underserved). While meeting the coverage requirement, the intervals should be as short as possible to avoid overly conservative predictions and ensure practical utility. Therefore, we used 3 metrics to evaluate the quality of confidence intervals.

\begin{itemize}
    \item Coverage (Cov): The proportion of true values included within the confidence interval, defined in Equation \ref{eq:cov}.
    \item Minimum Regional Coverage (minRC): The coverage of the confidence interval in the region with the lowest coverage, defined in Equation \ref{eq:minRC}
    \item Length: The average length of the confidence interval, defined in Equation \ref{eq:length}.
\end{itemize}

Besides, in some confidence interval related literature \cite{WangWZKZ24,10495765}, coverage is referred to as ICP (Interval Coverage Probability), and length is referred to as MIL (Mean Interval Length).
\subsection{Results}
\definecolor{myblue}{HTML}{D9E1F4}
\definecolor{myred}{HTML}{FF0000}
\definecolor{mygreen}{HTML}{7030A0}
The results of our experiments are summarized in Table \ref{tab:res_main}. The result with coverage greater than 88\% and minimum regional coverage greater than 85\% is considered as valid and we color the cells of valid results in \textcolor{myblue}{blue}. Among these valid results, the minimum length is expressed in \textcolor{myred}{\textbf{red}} text and the second minimum length is expressed in \textcolor{mygreen}{\ul \textbf{purple}} text with an underline. Besides, the results in Table \ref{tab:res_main} are the average values among all four prediction models, and the full results can be found in \ref{A:full_res}.
\newpage
\begin{table}[!h]
\centering
\setlength{\tabcolsep}{1.pt}
\renewcommand{\arraystretch}{1.}
\fontsize{5.}{7.5}\selectfont 


\caption{Results of experiments }
\label{tab:res_main}
\end{table}
First, from an overall perspective, our method achieved the best performance in 19 out of 20 cases and the second-best result in the remaining one case. This validates the effectiveness of our proposed approach, which is capable of achieving shorter confidence intervals while maintaining coverage.

Next, we report the experimental results for each dataset in detail. For the NYCbike dataset, several methods (DESQRUQ, UATGCN, ProbGNN) are able to maintain validity during the first month; however, by the last month, only three conformal prediction-based methods (ACI, DtACI, CONTINA) could produce valid confidence intervals. This indicates that the effectiveness of traditional confidence interval construction methods may gradually diminish over time. Additionally, for many confidence interval construction methods (QR, MIS), although they sometimes achieve a coverage rate of near 90\% on average, their performance can be poor in the worst-case region, even below 75\%.

The situation for the NYCtaxi dataset is similar to that of the NYCbike dataset. While some methods (DESQRUQ, UATGCN, ProbGNN) could achieve the target coverage rate on average, their coverage in the worst-performing region are significantly lower than required. Among all valid methods, our approach consistently produced the shortest confidence intervals.

For the two datasets from Chicago, our method also achieved the shortest confidence intervals while ensuring coverage guarantees. Compared to the datasets from New York, the baseline methods show slightly more competitive performance. Notably, many baseline methods are able to maintain validity, particularly in the worst-performing region, where their coverage rates do not drop significantly. This suggests that the heterogeneity of traffic patterns across different regions in the Chicago Bike and Chicago Taxi datasets may not be as pronounced as in the New York datasets. Furthermore, the decline in coverage rates over time for the baseline methods is less significant in the Chicago datasets, indicating that the traffic pattern changes in the Chicago datasets may not be as substantial as those in the New York datasets.

Finally, the average coverage and regional minimum coverage obtained by our approach is always greater 89\% and 88\%, respectively, which cannot be achieved by the any other method. This demonstrates the effectiveness of our approach to maintain coverage. And if we adjust the threshold for regional minimum coverage from 85\% to 86\%, our method can provide the best result in 20 out of 20 cases.
\subsection{Method analysis}
\subsubsection{Sensitive Analysis of initial learning rate}
We conducted an additional sensitivity analysis for the initial learning rate, $\gamma_1$. Specifically, we adjusted $\gamma_1$ to four different values: 0.001, 0.002, 0.005, and 0.01, and repeated our experiments. The results of NYCbike dataset are presented in the following Figure \ref{fig:sen_nycbike}. 
\begin{figure}[!h]
    \centering
    \includegraphics[width=\linewidth]{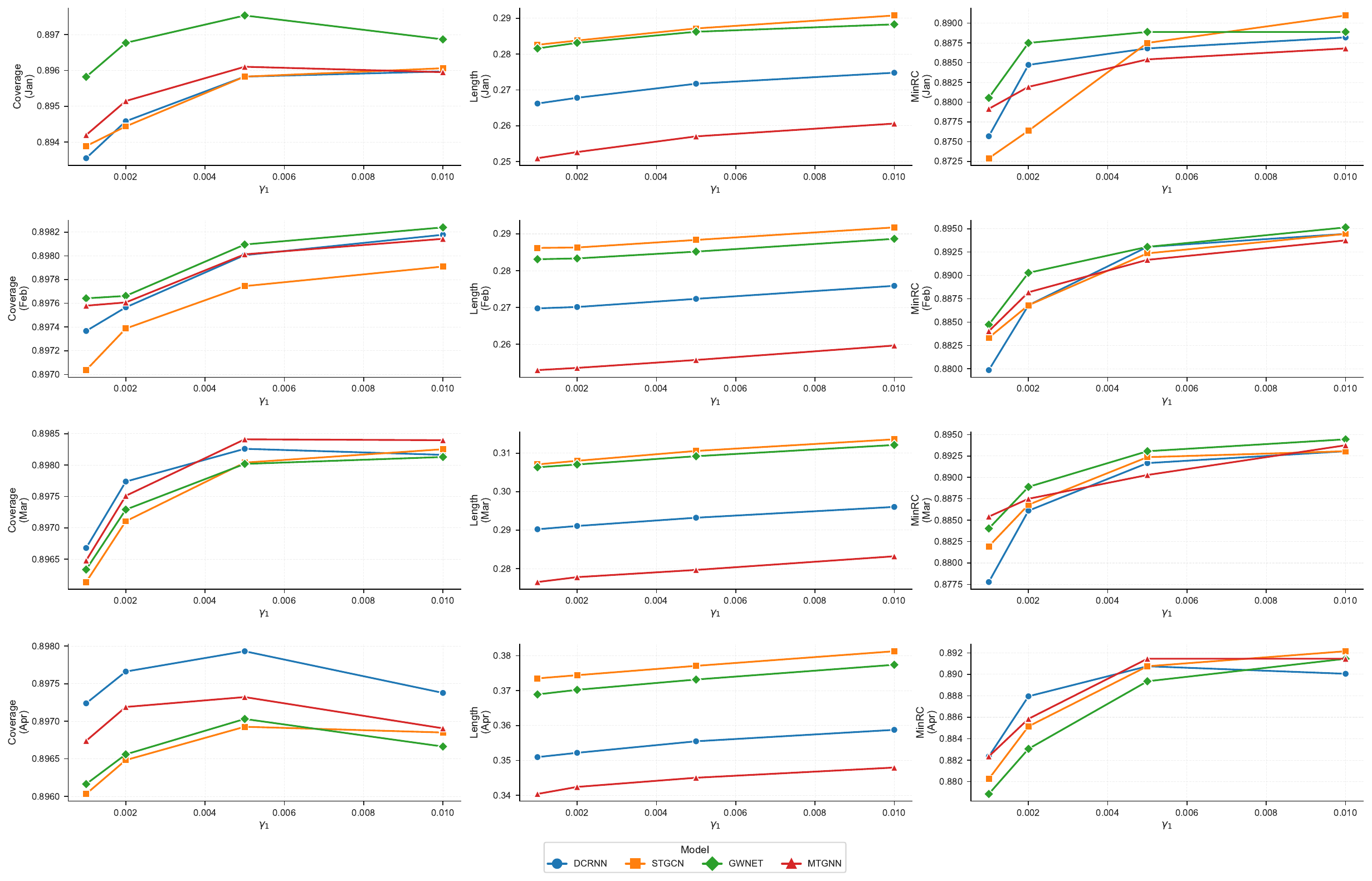}
    \caption{Results of using different initial rates in NYCbike dataset}
    \label{fig:sen_nycbike}
\end{figure}

It can be observed that our method is relatively robust to different initial learning rates. When the initial learning rate is adjusted, the average coverage rate obtained by our method remains above 89\%, and the coverage rate in the worst-case regions stays above 87.5\%. The length of the returned intervals tends to become longer as the learning rate increases, but the magnitude of this change is very limited. The Sensitive Analysis of all datasets is provided in \ref{section:sa_full}.
\subsubsection{The benefit of using adaptive learning rate}
\label{section:main_dr}
To validate whether using an adaptive learning rate for different regions improves the results of the confidence interval, we conducted additional experiments by replacing the adaptive learning rates with a fixed learning rate. We summarized the coverage of the confidence interval for each region and each day, and plotted the results in Figure \ref{fig:day_gwn1} to Figure \ref{fig:day_gwn4}. The solid lines represent the mean coverage across all regions in each day. We also calculate the standard deviation of coverage across regions in each day and plot it in shadow part. (These Figure \ref{fig:day_gwn1} to Figure \ref{fig:day_gwn4} show the situations where GWNET was used as prediction model and the results for other prediction models are in \ref{section:dr}.)
\begin{figure}
    \centering
    \includegraphics[width=\linewidth]{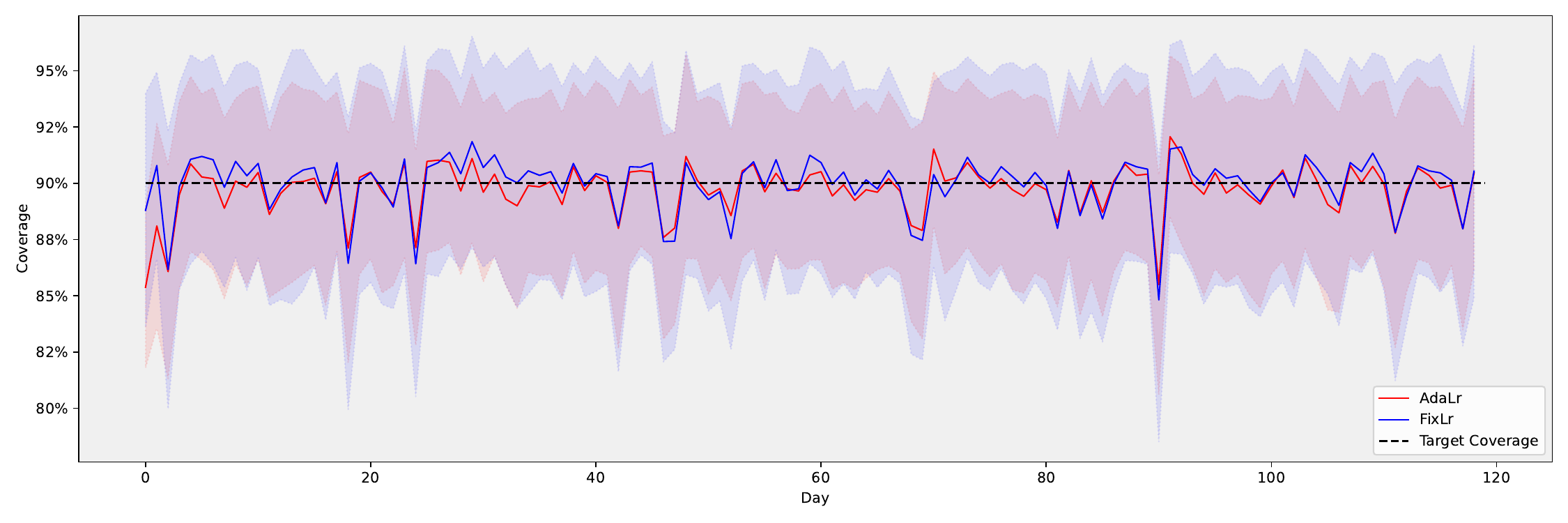}
    \caption{Daily regional coverage for NYCbike dataset}
    \label{fig:day_gwn1}
\end{figure}
\begin{figure}
    \centering
    \includegraphics[width=\linewidth]{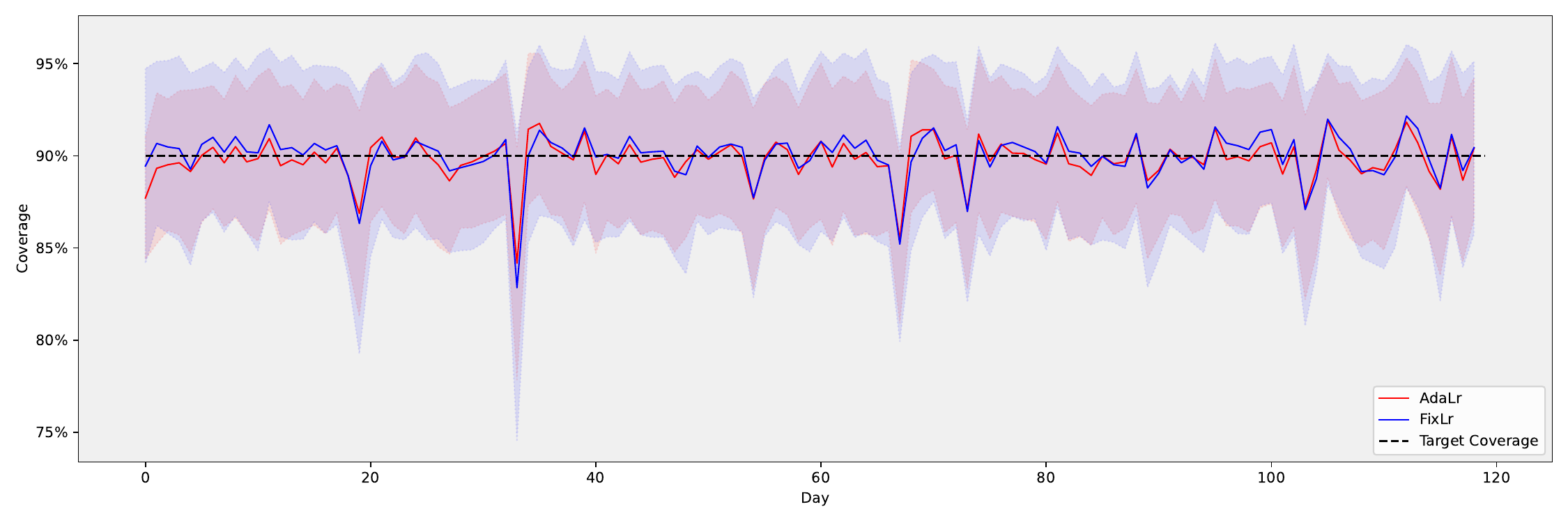}
    \caption{Daily regional coverage for NYCtaxi dataset}
    \label{fig:day_gwn2}
\end{figure}
\begin{figure}
    \centering
    \includegraphics[width=\linewidth]{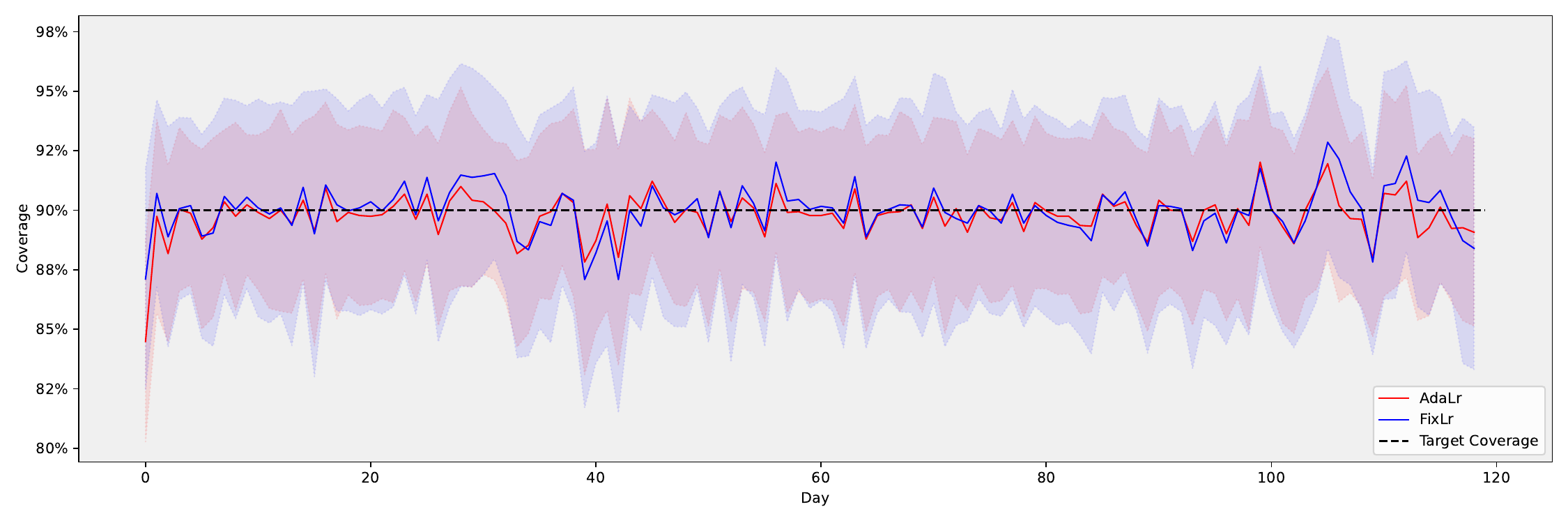}
    \caption{Daily regional coverage for CHIbike dataset}
    \label{fig:day_gwn3}
\end{figure}
\begin{figure}
    \centering
    \includegraphics[width=\linewidth]{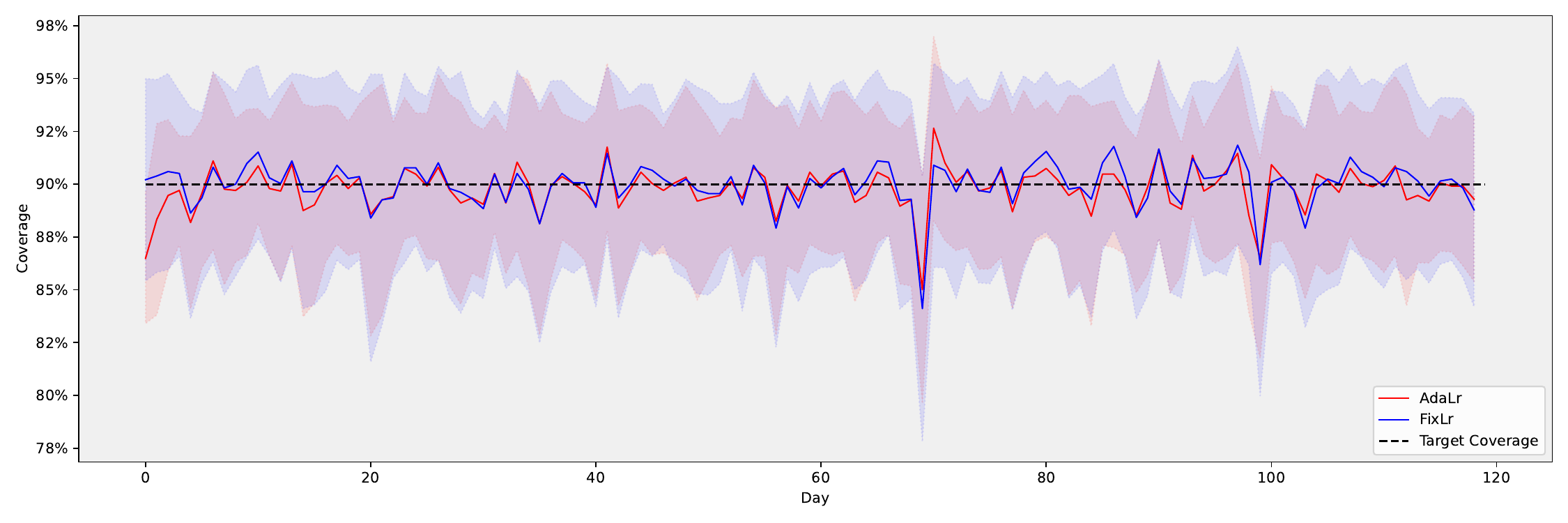}
    \caption{Daily regional coverage for CHItaxi dataset}
    \label{fig:day_gwn4}
\end{figure}

The results suggest that when using an adaptive learning rate, the coverage for all regions is more concentrated around 90\%. However, when using a fixed learning rate, the coverages across regions are more spread out. This indicates that while the overall coverage may still be around 90\%, some regions may have a coverage rate greater than 95\%, while others may have a coverage rate below 85\%. This disparity suggests that fixed learning rate cannot handle the varying transportation patterns in different regions as well as adaptive learning rate. For some regions, the learning rate could be too fast, while for others, it could be too slow. In conclusion, using an adaptive learning rate results in more stable performance, with coverage more consistently aligned with the target rate.
\section{Conclusion}
In this paper, a valid and efficient method for constructing confidence intervals for traffic demand prediction is proposed. To overcome changes in traffic patterns, an adaptive quantile conformal prediction method is introduced. Besides, an adaptive learning rate scheme is used to manage the heterogeneity of traffic changes across different regions. Unlike traditional approaches, the proposed method does not require some strict conditions to ensure coverage. 

Theoretical guarantees for the proposed method are also provided. First, our method ensures that the desired coverage rate can be achieved at the citywide level. Second, even for the regions with the lowest coverage, our method delivers satisfactory performance. Furthermore, both overall and regional coverage rates converge to the desired levels as the deployment period increases. Experiments were conducted using real-world data from bike-sharing and taxi systems to validate the effectiveness of the proposed method. 
\definecolor{myblue}{RGB}{0,0,255}

In the future, we plan to propose some methods to construct confidence intervals for multiple-step prediction problems and try to use the feature of deep learning models to construct more precise confidence intervals. As for theoretical aspects, how to provide confidence interval with conditional coverage or less conditional coverage error could be a promising research question. Furthermore, the current proof for minRC relies on the assumption that errors with large time intervals are independent of each other. This assumption may not hold perfectly in practice. In the future, a weaker assumption, such as strong mixing \cite{2007Introduction}, could be employed to derive the minRC guarantee.
Additionally, while our experiments show that the performance is not highly sensitive to the choice of learning rate, tuning this parameter remains a potential challenge in practical deployment. One promising direction is to adopt an ensemble approach that combines multiple learning rates like DtACI \cite{JMLR:v25:22-1218}

Furthermore, enhancing CONTINA's capability in sparse data scenarios through smoothing techniques and meta-learning represents a critical direction for future improvement. For example, when updating parameters for regions with sparse feedback, incorporating information from neighboring areas or analogous time periods can provide more stable pseudo-feedback. Additionally, implementing a meta-learning framework would enable the system to rapidly adapt update strategies for data-sparse regions by leveraging patterns learned across all available regions.

Besides, several recent studies have explored integrating spatial relationships into conformal prediction. For instance, some approaches utilize prediction errors from other regions to inform the confidence intervals of a given region \cite{jiang2024spatio,jiang2024spatial}, while others incorporate the errors of neighboring regions into the nonconformity score calculation of the target region \cite{wang2025non,song2024similarity}. Additionally, there are works on multivariate conformal prediction that leverage covariance across multiple sequences to derive more reasonable confidence intervals \cite{xu2024conformal,sun2022copula}. Future work can draw inspiration from these methodologies to incorporate spatial relationships into our CONTINA framework.
\definecolor{myblue}{HTML}{D9E1F4}

In conclusion, the proposed approach has practical applications in traffic operation, such as shared-bike rebalancing or taxi dispatching. Additionally, the method allows for real-time monitoring of prediction interval widths. When the intervals become excessively wide, practitioners are alerted that the current model no longer reflects traffic patterns adequately and needs some updates.

\bibliographystyle{elsarticle-harv} 
\bibliography{ref}

@article{Xu2023MultitaskSP,
  title={Multi-task supply-demand prediction and reliability analysis for docked bike-sharing systems via transformer-encoder-based neural processes},
  author={Meng Xu and Yining Di and Han Yang and Xiqun (Michael) Chen and Zheng Zhu},
  journal={Transportation Research Part C: Emerging Technologies},
  year={2023},

}

@article{fu2022two,
  title={A two-stage robust approach to integrated station location and rebalancing vehicle service design in bike-sharing systems},
  author={Fu, Chenyi and Zhu, Ning and Ma, Shoufeng and Liu, Ronghui},
  journal={European Journal of Operational Research},
  volume={298},
  number={3},
  pages={915--938},
  year={2022},
  publisher={Elsevier}
}

@article{ZHAO2025104933,
title = {Research on rebalancing of large-scale bike-sharing system driven by zonal heterogeneity and demand uncertainty},
journal = {Transportation Research Part C: Emerging Technologies},
volume = {170},
pages = {104933},
year = {2025},
issn = {0968-090X},
doi = {https://doi.org/10.1016/j.trc.2024.104933},
url = {https://www.sciencedirect.com/science/article/pii/S0968090X24004546},
author = {Rui Zhao and Zihao Tian and Lixin Tian and Wenshan Liu and David Z.W. Wang},
keywords = {Bike-sharing, Initial inventory, Rebalancing problem, Robust optimization, Column-and-constraint generation},
}

@article{Yu2024RobustOM,
  title={Robust Optimization Model Based on a Static Rebalancing Design for Bike- Sharing Systems Affected by Demand Uncertainty},
  author={Lijun Yu and Yaogeng Xu and Haochen Shi and Zhimin Zhang},
  journal={IEEE Access},
  year={2024},
  volume={12},
  pages={73936-73949},

}

@article{miao2017data,
  title={Data-driven robust taxi dispatch under demand uncertainties},
  author={Miao, Fei and Han, Shuo and Lin, Shan and Wang, Qian and Stankovic, John A and Hendawi, Abdeltawab and Zhang, Desheng and He, Tian and Pappas, George J},
  journal={IEEE Transactions on Control Systems Technology},
  volume={27},
  number={1},
  pages={175--191},
  year={2017},
  publisher={IEEE}
}

@article{GUO2021161,
title = {Robust matching-integrated vehicle rebalancing in ride-hailing system with uncertain demand},
journal = {Transportation Research Part B: Methodological},
volume = {150},
pages = {161-189},
year = {2021},
issn = {0191-2615},
doi = {https://doi.org/10.1016/j.trb.2021.05.015},
url = {https://www.sciencedirect.com/science/article/pii/S0191261521001004},
author = {Xiaotong Guo and Nicholas S. Caros and Jinhua Zhao},
keywords = {Ride-hailing, Vehicle rebalancing, Robust optimization, Demand uncertainty},
}

@article{SENGUPTA2024104585,
title = {A Bayesian approach to quantifying uncertainties and improving generalizability in traffic prediction models},
journal = {Transportation Research Part C: Emerging Technologies},
volume = {162},
pages = {104585},
year = {2024},
issn = {0968-090X},
doi = {https://doi.org/10.1016/j.trc.2024.104585},
url = {https://www.sciencedirect.com/science/article/pii/S0968090X24001062},
author = {Agnimitra Sengupta and Sudeepta Mondal and Adway Das and S. Ilgin Guler},
keywords = {Bayesian inference, Deep learning, Traffic prediction, Uncertainty quantification, Spectral normalization},
}

@ARTICLE{10495765,
  author={Mallick, Tanwi and Macfarlane, Jane and Balaprakash, Prasanna},
  journal={IEEE Transactions on Intelligent Transportation Systems}, 
  title={Uncertainty Quantification for Traffic Forecasting Using Deep-Ensemble-Based Spatiotemporal Graph Neural Networks}, 
  year={2024},
  volume={25},
  number={8},
  pages={9141-9152},
  keywords={Uncertainty;Forecasting;Spatiotemporal phenomena;Predictive models;Bayes methods;Graph neural networks;Data models;Traffic forecasting;uncertainty quantification;deep ensemble;spatiotemporal graph neural network},
  doi={10.1109/TITS.2024.3381099}}

@ARTICLE{10458002,
  author={Qian, Weizhu and Nielsen, Thomas Dyhre and Zhao, Yan and Larsen, Kim Guldstrand and Yu, James Jianqiao},
  journal={IEEE Transactions on Intelligent Transportation Systems}, 
  title={Uncertainty-Aware Temporal Graph Convolutional Network for Traffic Speed Forecasting}, 
  year={2024},
  volume={25},
  number={8},
  pages={8578-8590},
  keywords={Uncertainty;Forecasting;Predictive models;Data models;Roads;Deep learning;Convolutional neural networks;Traffic speed forecasting;graph convolutional network;gated recurrent unit;spatio-temporal model;uncertainty quantification},
  doi={10.1109/TITS.2024.3365721}}

@inproceedings{3045118.3045248,
author = {Salimans, Tim and Kingma, Diederik P. and Welling, Max},
title = {Markov Chain Monte Carlo and variational inference: bridging the gap},
year = {2015},
publisher = {JMLR.org},
booktitle = {Proceedings of the 32nd International Conference on International Conference on Machine Learning - Volume 37},
pages = {1218–1226},
numpages = {9},
location = {Lille, France},
series = {ICML'15}
}

@article{WangWZKZ24,
  author={Qingyi Wang and Shenhao Wang and Dingyi Zhuang and Haris N. Koutsopoulos and Jinhua Zhao},
  title={Uncertainty Quantification of Spatiotemporal Travel Demand With Probabilistic Graph Neural Networks},
  year={2024},
  month={August},
  cdate={1722470400000},
  journal={IEEE Transactions on Intelligent Transportation Systems},
  volume={25},
  number={8},
  pages={8770-8781},
  url={https://doi.org/10.1109/TITS.2024.3367779}
}

@inproceedings{10.5555/3045390.3045502,
author = {Gal, Yarin and Ghahramani, Zoubin},
title = {Dropout as a Bayesian approximation: representing model uncertainty in deep learning},
year = {2016},
publisher = {JMLR.org},
booktitle = {Proceedings of the 33rd International Conference on International Conference on Machine Learning - Volume 48},
pages = {1050–1059},
numpages = {10},
location = {New York, NY, USA},
series = {ICML'16}
}

@inproceedings{10.1145/3447548.3467325,
author = {Wu, Dongxia and Gao, Liyao and Chinazzi, Matteo and Xiong, Xinyue and Vespignani, Alessandro and Ma, Yi-An and Yu, Rose},
title = {Quantifying Uncertainty in Deep Spatiotemporal Forecasting},
year = {2021},
isbn = {9781450383325},
publisher = {Association for Computing Machinery},
address = {New York, NY, USA},
url = {https://doi.org/10.1145/3447548.3467325},
doi = {10.1145/3447548.3467325},
booktitle = {Proceedings of the 27th ACM SIGKDD Conference on Knowledge Discovery \& Data Mining},
pages = {1841–1851},
numpages = {11},
location = {Virtual Event, Singapore},
series = {KDD '21}
}

@ARTICLE{9310711,
  author={Li, Can and Bai, Lei and Liu, Wei and Yao, Lina and Waller, S Travis},
  journal={IEEE Transactions on Intelligent Transportation Systems}, 
  title={Graph Neural Network for Robust Public Transit Demand Prediction}, 
  year={2022},
  volume={23},
  number={5},
  pages={4086-4098},
  keywords={Predictive models;Uncertainty;Convolution;Correlation;Demand forecasting;Bayes methods;Planning;Probabilistic demand prediction;public transit;graph convolution network;Bayesian inference},
  doi={10.1109/TITS.2020.3041234}}

@inproceedings{3295222.3295309,
author = {Kendall, Alex and Gal, Yarin},
title = {What uncertainties do we need in Bayesian deep learning for computer vision?},
year = {2017},
isbn = {9781510860964},
publisher = {Curran Associates Inc.},
address = {Red Hook, NY, USA},
booktitle = {Proceedings of the 31st International Conference on Neural Information Processing Systems},
pages = {5580–5590},
numpages = {11},
location = {Long Beach, California, USA},
series = {NIPS'17}
}

@inproceedings{3583780.3615215,
author = {Jiang, Xinke and Zhuang, Dingyi and Zhang, Xianghui and Chen, Hao and Luo, Jiayuan and Gao, Xiaowei},
title = {Uncertainty Quantification via Spatial-Temporal Tweedie Model for Zero-inflated and Long-tail Travel Demand Prediction},
year = {2023},
isbn = {9798400701245},
publisher = {Association for Computing Machinery},
address = {New York, NY, USA},
url = {https://doi.org/10.1145/3583780.3615215},
doi = {10.1145/3583780.3615215},
booktitle = {Proceedings of the 32nd ACM International Conference on Information and Knowledge Management},
pages = {3983–3987},
numpages = {5},
keywords = {uncertainty quantification, travel demand prediction, spatial-temporal sparse data, graph neural networks},
location = {Birmingham, United Kingdom},
series = {CIKM '23}
}

@InProceedings{pmlr-v80-pearce18a,
  title = 	 {High-Quality Prediction Intervals for Deep Learning: A Distribution-Free, Ensembled Approach},
  author =       {Pearce, Tim and Brintrup, Alexandra and Zaki, Mohamed and Neely, Andy},
  booktitle = 	 {Proceedings of the 35th International Conference on Machine Learning},
  pages = 	 {4075--4084},
  year = 	 {2018},
  editor = 	 {Dy, Jennifer and Krause, Andreas},
  volume = 	 {80},
  series = 	 {Proceedings of Machine Learning Research},
  month = 	 {10--15 Jul},
  publisher =    {PMLR},
  pdf = 	 {http://proceedings.mlr.press/v80/pearce18a/pearce18a.pdf},
}

@ARTICLE{DGP_TITS,
  author={Jiang, Yunliang and Fan, Jinbin and Liu, Yong and Zhang, Xiongtao},
  journal={IEEE Transactions on Intelligent Transportation Systems}, 
  title={Deep Graph Gaussian Processes for Short-Term Traffic Flow Forecasting From Spatiotemporal Data}, 
  year={2022},
  volume={23},
  number={11},
  pages={20177-20186},
  keywords={Gaussian processes;Feature extraction;Spatiotemporal phenomena;Monitoring;Kernel;Predictive models;Data models;Deep graph Gaussian processes;spatiotemporal data;traffic flow forecasting},
  doi={10.1109/TITS.2022.3178136}}

@ARTICLE{9847117,
  author={Li, Yiqun and Chai, Songjian and Wang, Guibin and Zhang, Xian and Qiu, Jing},
  journal={IEEE Transactions on Intelligent Transportation Systems}, 
  title={Quantifying the Uncertainty in Long-Term Traffic Prediction Based on PI-ConvLSTM Network}, 
  year={2022},
  volume={23},
  number={11},
  pages={20429-20441},
  keywords={Predictive models;Deep learning;Logic gates;Uncertainty;Reliability;Probabilistic logic;Feature extraction;Deep learning;prediction interval;long-term prediction;passenger car units;traffic flow prediction},
  doi={10.1109/TITS.2022.3193184}}

@ARTICLE{10304591,
  author={Wu, Ying and Ye, Yongchao and Zeb, Adnan and Yu, James Jianqiao and Wang, Zheng},
  journal={IEEE Transactions on Intelligent Transportation Systems}, 
  title={Adaptive Modeling of Uncertainties for Traffic Forecasting}, 
  year={2024},
  volume={25},
  number={5},
  pages={4427-4442},
  keywords={Predictive models;Uncertainty;Forecasting;Planning;Adaptation models;Data models;Computational modeling;Traffic prediction;uncertainty qualification;quantile model},
  doi={10.1109/TITS.2023.3327100}}

@ARTICLE{10472567,
  author={Laña, Ibai and Olabarrieta, Ignacio and Ser, Javier Del},
  journal={IEEE Transactions on Intelligent Transportation Systems}, 
  title={Measuring the Confidence of Single-Point Traffic Forecasting Models: Techniques, Experimental Comparison, and Guidelines Toward Their Actionability}, 
  year={2024},
  volume={25},
  number={9},
  pages={11180-11199},
  keywords={Uncertainty;Predictive models;Forecasting;Estimation;Measurement uncertainty;Data models;Machine learning;Uncertainty estimation;confidence;traffic forecasting},
  doi={10.1109/TITS.2024.3375936}}

@article{10.5555,
author = {Shafer, Glenn and Vovk, Vladimir},
title = {A Tutorial on Conformal Prediction},
year = {2008},
issue_date = {6/1/2008},
publisher = {JMLR.org},
volume = {9},
issn = {1532-4435},
journal = {Journal of Machine Learning Research},
month = jun,
pages = {371–421},
numpages = {51}
}

@inproceedings{10.5555/3666122.3666976,
author = {Schweighofen, Kajetan and Aichberger, Lukas and Ielanskyi, Mykyta and Klambauer, G\"{u}nter and Hochreiter, Sepp},
title = {Quantification of uncertainty with adversarial models},
year = {2023},
publisher = {Curran Associates Inc.},
address = {Red Hook, NY, USA},
booktitle = {Proceedings of the 37th International Conference on Neural Information Processing Systems},
articleno = {854},
numpages = {39},
location = {New Orleans, LA, USA},
series = {NIPS '23}
}

@inproceedings{NIPS2017_9ef2ed4b,
 author = {Lakshminarayanan, Balaji and Pritzel, Alexander and Blundell, Charles},
 booktitle = {Advances in Neural Information Processing Systems},
 editor = {I. Guyon and U. Von Luxburg and S. Bengio and H. Wallach and R. Fergus and S. Vishwanathan and R. Garnett},
 pages = {},
 publisher = {Curran Associates, Inc.},
 title = {Simple and Scalable Predictive Uncertainty Estimation using Deep Ensembles},
 url = {https://proceedings.neurips.cc/paper_files/paper/2017/file/9ef2ed4b7fd2c810847ffa5fa85bce38-Paper.pdf},
 volume = {30},
 year = {2017}
}

@inproceedings{10.5555/3495724.3496270,
author = {Wenzel, Florian and Snoek, Jasper and Tran, Dustin and Jenatton, Rodolphe},
title = {Hyperparameter ensembles for robustness and uncertainty quantification},
year = {2020},
isbn = {9781713829546},
publisher = {Curran Associates Inc.},
address = {Red Hook, NY, USA},
booktitle = {Proceedings of the 34th International Conference on Neural Information Processing Systems},
articleno = {546},
numpages = {14},
location = {Vancouver, BC, Canada},
series = {NIPS '20}
}

@inproceedings{3305890.3305910,
author = {Louizos, Christos and Welling, Max},
title = {Multiplicative normalizing flows for variational Bayesian neural networks},
year = {2017},
publisher = {JMLR.org},
pages = {2218–2227},
numpages = {10},
location = {Sydney, NSW, Australia},
series = {ICML'17}
}

@inproceedings{10.1145/3589132.3625614,
author = {Wen, Haomin and Lin, Youfang and Xia, Yutong and Wan, Huaiyu and Wen, Qingsong and Zimmermann, Roger and Liang, Yuxuan},
title = {DiffSTG: Probabilistic Spatio-Temporal Graph Forecasting with Denoising Diffusion Models},
year = {2023},
isbn = {9798400701689},
publisher = {Association for Computing Machinery},
address = {New York, NY, USA},
url = {https://doi.org/10.1145/3589132.3625614},
doi = {10.1145/3589132.3625614},
booktitle = {Proceedings of the 31st ACM International Conference on Advances in Geographic Information Systems},
articleno = {60},
numpages = {12},
keywords = {spatio-temporal graph forecasting, probabilistic forecasting, diffusion model},
location = {Hamburg, Germany},
series = {SIGSPATIAL '23}
}

@article{abs-2401-08119,
  publtype={informal},
  author={Lequan Lin and Dai Shi and Andi Han and Junbin Gao},
  title={SpecSTG: A Fast Spectral Diffusion Framework for Probabilistic Spatio-Temporal Traffic Forecasting},
  year={2024},
  cdate={1704067200000},
  journal={CoRR},
  volume={abs/2401.08119},
  url={https://doi.org/10.48550/arXiv.2401.08119}
}

@article{Lei03072018,
author = {Jing Lei and Max G’Sell and Alessandro Rinaldo and Ryan J. Tibshirani and Larry Wasserman and},
title = {Distribution-Free Predictive Inference for Regression},
journal = {Journal of the American Statistical Association},
volume = {113},
number = {523},
pages = {1094--1111},
year = {2018},
publisher = {ASA Website},
doi = {10.1080/01621459.2017.1307116},
URL = { 
     https://doi.org/10.1080/01621459.2017.1307116
},
}

@inproceedings{NEURIPS2021_0d441de7,
 author = {Gibbs, Isaac and Candes, Emmanuel},
 booktitle = {Advances in Neural Information Processing Systems},
 editor = {M. Ranzato and A. Beygelzimer and Y. Dauphin and P.S. Liang and J. Wortman Vaughan},
 pages = {1660--1672},
 publisher = {Curran Associates, Inc.},
 title = {Adaptive Conformal Inference Under Distribution Shift},
 url = {https://proceedings.neurips.cc/paper_files/paper/2021/file/0d441de75945e5acbc865406fc9a2559-Paper.pdf},
 volume = {34},
 year = {2021}
}

@inproceedings{
lin2022conformal,
title={Conformal Prediction with Temporal Quantile Adjustments},
author={Zhen Lin and Shubhendu Trivedi and Jimeng Sun},
booktitle={Advances in Neural Information Processing Systems},
editor={Alice H. Oh and Alekh Agarwal and Danielle Belgrave and Kyunghyun Cho},
year={2022},
url={https://openreview.net/forum?id=PM5gVmG2Jj}
}

@InProceedings{pmlr-v162-zaffran22a,
  title = 	 {Adaptive Conformal Predictions for Time Series},
  author =       {Zaffran, Margaux and Feron, Olivier and Goude, Yannig and Josse, Julie and Dieuleveut, Aymeric},
  booktitle = 	 {Proceedings of the 39th International Conference on Machine Learning},
  pages = 	 {25834--25866},
  year = 	 {2022},
  editor = 	 {Chaudhuri, Kamalika and Jegelka, Stefanie and Song, Le and Szepesvari, Csaba and Niu, Gang and Sabato, Sivan},
  volume = 	 {162},
  series = 	 {Proceedings of Machine Learning Research},
  month = 	 {17--23 Jul},
  publisher =    {PMLR},
  url = 	 {https://proceedings.mlr.press/v162/zaffran22a.html},
}

@article{1983A,
  title={A method for solving the convex programming problem with convergence rate $O(1/k^2$)},
  author={ Nesterov, Y. E. },
  journal={Dokl.akad.nauk Sssr},
  volume={269},
  year={1983},
}

@article{JMLR:v12:duchi11a,
  author  = {John Duchi and Elad Hazan and Yoram Singer},
  title   = {Adaptive Subgradient Methods for Online Learning and Stochastic Optimization},
  journal = {Journal of Machine Learning Research},
  year    = {2011},
  volume  = {12},
  number  = {61},
  pages   = {2121--2159},
  url     = {http://jmlr.org/papers/v12/duchi11a.html}
}

@inproceedings{xu_conformal_2021,
	title = {Conformal prediction interval for dynamic time-series},
	url = {https://proceedings.mlr.press/v139/xu21h.html},
	 year    = {2021},
	eventtitle = {International Conference on Machine Learning},
	pages = {11559--11569},
	booktitle = {Proceedings of the 38th International Conference on Machine Learning},
	publisher = {{PMLR}},
	author = {Xu, Chen and Xie, Yao},
	date = {2021-07-01},
	langid = {english},
	note = {{ISSN}: 2640-3498},
}

@inproceedings{10.1145/3474717.3483923,
author = {Wang, Jingyuan and Jiang, Jiawei and Jiang, Wenjun and Li, Chao and Zhao, Wayne Xin},
title = {LibCity: An Open Library for Traffic Prediction},
year = {2021},
isbn = {9781450386647},
publisher = {Association for Computing Machinery},
address = {New York, NY, USA},
url = {https://doi.org/10.1145/3474717.3483923},
doi = {10.1145/3474717.3483923},
pages = {145–148},
numpages = {4},
keywords = {Reproducibility, Spatial-temporal System, Traffic Prediction},
location = {Beijing, China},
series = {SIGSPATIAL '21},
booktitle = {Proceedings of the 29th International Conference on Advances in Geographic Information Systems},
pages = {145–148},
numpages = {4},
keywords = {Reproducibility, Spatial-temporal System, Traffic Prediction},
location = {Beijing, China},
series = {SIGSPATIAL '21}
}

@article{Kingma2014AdamAM,
  title={Adam: A Method for Stochastic Optimization},
  author={Diederik P. Kingma and Jimmy Ba},
  journal={CoRR},
  year={2014},
  volume={abs/1412.6980},
}

@article{JMLR:v25:22-1218,
  author  = {Isaac Gibbs and Emmanuel J. Cand{{\`e}}s},
  title   = {Conformal Inference for Online Prediction with Arbitrary Distribution Shifts},
  journal = {Journal of Machine Learning Research},
  year    = {2024},
  volume  = {25},
  number  = {162},
  pages   = {1--36},
  url     = {http://jmlr.org/papers/v25/22-1218.html}
}

@inproceedings{NEURIPS2019_5103c358,
 author = {Romano, Yaniv and Patterson, Evan and Candes, Emmanuel},
 booktitle = {Advances in Neural Information Processing Systems},
 editor = {H. Wallach and H. Larochelle and A. Beygelzimer and F. d\textquotesingle Alch\'{e}-Buc and E. Fox and R. Garnett},
 pages = {},
 publisher = {Curran Associates, Inc.},
 title = {Conformalized Quantile Regression},
 url = {https://proceedings.neurips.cc/paper_files/paper/2019/file/5103c3584b063c431bd1268e9b5e76fb-Paper.pdf},
 volume = {32},
 year = {2019}
}

@article{10.1561/2400000013,
author = {Hazan, Elad},
title = {Introduction to Online Convex Optimization},
year = {2016},
issue_date = {Aug 2016},
publisher = {Now Publishers Inc.},
address = {Hanover, MA, USA},
volume = {2},
number = {3–4},
issn = {2167-3888},
url = {https://doi.org/10.1561/2400000013},
doi = {10.1561/2400000013},
journal = {Found. Trends Optim.},
month = aug,
pages = {157–325},
numpages = {176}
}

@inproceedings{10.5555/3618408.3618508,
author = {Bhatnagar, Aadyot and Wang, Huan and Xiong, Caiming and Bai, Yu},
title = {Improved online conformal prediction via strongly adaptive online learning},
year = {2023},
publisher = {JMLR.org},
booktitle = {Proceedings of the 40th International Conference on Machine Learning},
articleno = {100},
numpages = {27},
location = {Honolulu, Hawaii, USA},
series = {ICML'23}
}

@article{Zhang2024TheBO,
  title={The Benefit of Being Bayesian in Online Conformal Prediction},
  author={Zhiyu Zhang and Zhou Lu and Heng Yang},
  journal={ArXiv},
  year={2024},
  volume={abs/2410.02561},
}

@inproceedings{3692070.3694489,
author = {Zhang, Zhiyu and Bombara, David and Yang, Heng},
title = {Discounted adaptive online learning: towards better regularization},
year = {2024},
publisher = {JMLR.org},
booktitle = {Proceedings of the 41st International Conference on Machine Learning},
articleno = {2419},
numpages = {31},
location = {Vienna, Austria},
series = {ICML'24}
}

@ARTICLE{10121511,
  author={Xu, Chen and Xie, Yao},
  journal={IEEE Transactions on Pattern Analysis and Machine Intelligence}, 
  title={Conformal Prediction for Time Series}, 
  year={2023},
  volume={45},
  number={10},
  pages={11575-11587},
  keywords={Predictive models;Time series analysis;Computational modeling;Data models;Prediction algorithms;Training;Uncertainty;Time series predictive inference;conformal prediction},
  doi={10.1109/TPAMI.2023.3272339}}

@InProceedings{pmlr-v230-jonkers24a,
  title = 	 {Conformal Predictive Systems Under Covariate Shift},
  author =       {Jonkers, Jef and Van Wallendael, Glenn and Duchateau, Luc and Van Hoecke, Sofie},
  booktitle = 	 {Proceedings of the Thirteenth Symposium on Conformal and Probabilistic Prediction with Applications},
  pages = 	 {406--423},
  year = 	 {2024},
  editor = 	 {Vantini, Simone and Fontana, Matteo and Solari, Aldo and Boström, Henrik and Carlsson, Lars},
  volume = 	 {230},
  series = 	 {Proceedings of Machine Learning Research},
  month = 	 {09--11 Sep},
  publisher =    {PMLR},
  pdf = 	 {https://raw.githubusercontent.com/mlresearch/v230/main/assets/jonkers24a/jonkers24a.pdf},
  url = 	 {https://proceedings.mlr.press/v230/jonkers24a.html},
  
}

@article{10.1214/23-AOS2276,
author = {Rina Foygel Barber and Emmanuel J. Cand{\`e}s and Aaditya Ramdas and Ryan J. Tibshirani},
title = {{Conformal prediction beyond exchangeability}},
volume = {51},
journal = {The Annals of Statistics},
number = {2},
publisher = {Institute of Mathematical Statistics},
pages = {816 -- 845},
keywords = {conformal prediction, distribution-free inference, exchangeability, jackknife, robust statistics},
year = {2023},
doi = {10.1214/23-AOS2276},
URL = {https://doi.org/10.1214/23-AOS2276}
}

@article{Lei2014,
author = {Lei, Jing and Wasserman, Larry},
year = {2014},
month = {01},
pages = {},
title = {Distribution-free Prediction Bands for Non-parametric Regression},
volume = {76},
journal = {Journal of the Royal Statistical Society: Series B (Statistical Methodology)},
doi = {10.1111/rssb.12021}
}

@inproceedings{10.5555/3692070.3693062,
author = {Kiyani, Shayan and Pappas, George and Hassani, Hamed},
title = {Conformal prediction with learned features},
year = {2024},
publisher = {JMLR.org},
booktitle = {Proceedings of the 41st International Conference on Machine Learning},
articleno = {992},
numpages = {21},
location = {Vienna, Austria},
series = {ICML'24}
}

@article{pnas2107794118,
author = {Victor Chernozhukov  and Kaspar Wüthrich  and Yinchu Zhu },
title = {Distributional conformal prediction},
journal = {Proceedings of the National Academy of Sciences},
volume = {118},
number = {48},
pages = {e2107794118},
year = {2021},
doi = {10.1073/pnas.2107794118},
URL = {https://www.pnas.org/doi/abs/10.1073/pnas.2107794118},
eprint = {https://www.pnas.org/doi/pdf/10.1073/pnas.2107794118},
}

@inproceedings{
sesia2021conformal,
title={Conformal Prediction using Conditional Histograms},
author={Matteo Sesia and Yaniv Romano},
booktitle={Advances in Neural Information Processing Systems},
editor={A. Beygelzimer and Y. Dauphin and P. Liang and J. Wortman Vaughan},
year={2021},
}

@inproceedings{3618408.3620021,
author = {Xu, Chen and Xie, Yao},
title = {Sequential predictive conformal inference for time series},
year = {2023},
publisher = {JMLR.org},

booktitle = {Proceedings of the 40th International Conference on Machine Learning},
articleno = {1613},
numpages = {21},
location = {Honolulu, Hawaii, USA},
series = {ICML'23}
}

@inproceedings{10.5555/3304222.3304273,
author = {Yu, Bing and Yin, Haoteng and Zhu, Zhanxing},
title = {Spatio-temporal graph convolutional networks: a deep learning framework for traffic forecasting},
year = {2018},
isbn = {9780999241127},
publisher = {AAAI Press},
booktitle = {Proceedings of the 27th International Joint Conference on Artificial Intelligence},
pages = {3634–3640},
numpages = {7},
location = {Stockholm, Sweden},
series = {IJCAI'18}
}

@inproceedings{
li2018diffusion,
title={Diffusion Convolutional Recurrent Neural Network: Data-Driven Traffic Forecasting},
author={Yaguang Li and Rose Yu and Cyrus Shahabi and Yan Liu},
booktitle={International Conference on Learning Representations},
year={2018},
url={https://openreview.net/forum?id=SJiHXGWAZ},
}

@inproceedings{10.1145/3394486.3403118,
author = {Wu, Zonghan and Pan, Shirui and Long, Guodong and Jiang, Jing and Chang, Xiaojun and Zhang, Chengqi},
title = {Connecting the Dots: Multivariate Time Series Forecasting with Graph Neural Networks},
year = {2020},
isbn = {9781450379984},
publisher = {Association for Computing Machinery},
address = {New York, NY, USA},
url = {https://doi.org/10.1145/3394486.3403118},
doi = {10.1145/3394486.3403118},
booktitle = {Proceedings of the 26th ACM SIGKDD International Conference on Knowledge Discovery \& Data Mining},
pages = {753–763},
numpages = {11},
keywords = {spatial-temporal graphs, multivariate time series forecasting, graph structure learning, graph neural networks},
location = {Virtual Event, CA, USA},
series = {KDD '20}
}

@inproceedings{10.5555/3367243.3367303,
author = {Wu, Zonghan and Pan, Shirui and Long, Guodong and Jiang, Jing and Zhang, Chengqi},
title = {Graph wavenet for deep spatial-temporal graph modeling},
year = {2019},
isbn = {9780999241141},
publisher = {AAAI Press},
booktitle = {Proceedings of the 28th International Joint Conference on Artificial Intelligence},
pages = {1907–1913},
numpages = {7},
location = {Macao, China},
series = {IJCAI'19}
}

@book{vershynin_high-dimensional_nodate,
	title = { High-Dimensional Probability: An Introduction with Applications in Data Science},
	author = {Vershynin, Roman},
	publisher= {Cambridge University Press},
year={2022}

}

@article{WANG20211,
title = {Target-oriented robust location–transportation problem with service-level measure},
journal = {Transportation Research Part B: Methodological},
volume = {153},
pages = {1-20},
year = {2021},
issn = {0191-2615},
doi = {https://doi.org/10.1016/j.trb.2021.08.010},
url = {https://www.sciencedirect.com/science/article/pii/S0191261521001612},
author = {Xin Wang and Yong-Hong Kuo and Houcai Shen and Lianmin Zhang},
keywords = {Location–transportation problem, Distributionally robust optimization, Target-oriented optimization, Service-level measure, Affine decision rule, Benders decomposition},
}

@article{HUANG202390,
title = {Column-and-constraint-generation-based approach to a robust reverse logistic network design for bike sharing},
journal = {Transportation Research Part B: Methodological},
volume = {173},
pages = {90-118},
year = {2023},
issn = {0191-2615},
doi = {https://doi.org/10.1016/j.trb.2023.04.010},
url = {https://www.sciencedirect.com/science/article/pii/S0191261523000735},
author = {Sen Huang and Kanglin Liu and Zhi-Hai Zhang},
keywords = {Bike-sharing, Reverse logistic network design, Location routing problem, Two-stage robust optimization programming, Column-and-constraint generation},
}

@article{CHEN2023235,
title = {A target-based optimization model for bike-sharing systems: From the perspective of service efficiency and equity},
journal = {Transportation Research Part B: Methodological},
volume = {167},
pages = {235-260},
year = {2023},
issn = {0191-2615},
doi = {https://doi.org/10.1016/j.trb.2022.12.002},
url = {https://www.sciencedirect.com/science/article/pii/S019126152200203X},
author = {Qingxin Chen and Chenyi Fu and Ning Zhu and Shoufeng Ma and Qiao-Chu He},
keywords = {Bike-sharing service, Equity, Risk measure, Stochastic distributionally robust optimization, Target-based model},
}

@inproceedings{Fan2023DishTSAG,
  title={Dish-TS: A General Paradigm for Alleviating Distribution Shift in Time Series Forecasting},
  author={Wei Fan and Pengyang Wang and Dongkun Wang and Dongjie Wang and Yuanchun Zhou and Yanjie Fu},
  booktitle={AAAI Conference on Artificial Intelligence},
  year={2023},

}

@inproceedings{Kim2022ReversibleIN,
  title={Reversible Instance Normalization for Accurate Time-Series Forecasting against Distribution Shift},
  author={Taesung Kim and Jinhee Kim and Yunwon Tae and Cheonbok Park and Jangho Choi and Jaegul Choo},
  booktitle={International Conference on Learning Representations},
  year={2022},
 
}

@inproceedings{
xie2023evolving,
title={Evolving Standardization for Continual Domain Generalization over Temporal Drift},
author={Mixue Xie and Shuang Li and Longhui Yuan and Chi Harold Liu and Zehui Dai},
booktitle={Thirty-seventh Conference on Neural Information Processing Systems},
year={2023},
url={https://openreview.net/forum?id=5hVXbiEGXB}
}

@article{Huang2024,
  author       = {Xiannan Huang and
                  Shuhan Qiu and
                  Yan Cheng and
                  Quan Yuan and
                  Chao Yang},
  title        = {Incorporating Long-term Data in Training Short-term Traffic Prediction
                  Model},
  journal      = {CoRR},
  volume       = {abs/2410.14726},
  year         = {2024},
  url          = {https://doi.org/10.48550/arXiv.2410.14726},
  doi          = {10.48550/ARXIV.2410.14726},
  eprinttype    = {arXiv},
  eprint       = {2410.14726},
  timestamp    = {Wed, 27 Nov 2024 21:02:50 +0100},
  biburl       = {https://dblp.org/rec/journals/corr/abs-2410-14726.bib},
  bibsource    = {dblp computer science bibliography, https://dblp.org}
}

@inproceedings{10.5555/3692070.3693235,
author = {Li, Zhonghang and Xia, Lianghao and Xu, Yong and Huang, Chao},
title = {FlashST: A simple and universal prompt-tuning framework for traffic prediction},
year = {2024},
publisher = {JMLR.org},
booktitle = {Proceedings of the 41st International Conference on Machine Learning},
articleno = {1165},
numpages = {11},
location = {Vienna, Austria},
series = {ICML'24}
}

@inproceedings{chen2024calibration,
  title={Calibration of time-series forecasting: Detecting and adapting context-driven distribution shift},
  author={Chen, Mouxiang and Shen, Lefei and Fu, Han and Li, Zhuo and Sun, Jianling and Liu, Chenghao},
  booktitle={Proceedings of the 30th ACM SIGKDD Conference on Knowledge Discovery and Data Mining},
  pages={341--352},
  year={2024}
}

@article{guo2024online,
  title={Online test-time adaptation of spatial-temporal traffic flow forecasting},
  author={Guo, Pengxin and Jin, Pengrong and Li, Ziyue and Bai, Lei and Zhang, Yu},
  journal={arXiv preprint arXiv:2401.04148},
  year={2024}
}

@article{chandra2009predictions,
  title={Predictions of freeway traffic speeds and volumes using vector autoregressive models},
  author={Chandra, Srinivasa Ravi and Al-Deek, Haitham},
  journal={Journal of Intelligent Transportation Systems},
  volume={13},
  number={2},
  pages={53--72},
  year={2009},
  publisher={Taylor \& Francis}
}

@article{okutani1984dynamic,
  title={Dynamic prediction of traffic volume through Kalman filtering theory},
  author={Okutani, Iwao and Stephanedes, Yorgos J},
  journal={Transportation Research Part B: Methodological},
  volume={18},
  number={1},
  pages={1--11},
  year={1984},
  publisher={Elsevier}
}

@article{van1996combining,
  title={Combining Kohonen maps with ARIMA time series models to forecast traffic flow},
  author={Van Der Voort, Mascha and Dougherty, Mark and Watson, Susan},
  journal={Transportation Research Part C: Emerging Technologies},
  volume={4},
  number={5},
  pages={307--318},
  year={1996},
  publisher={Elsevier}
}

@article{luo2023stg4traffic,
  title={Stg4traffic: A survey and benchmark of spatial-temporal graph neural networks for traffic prediction},
  author={Luo, Xunlian and Zhu, Chunjiang and Zhang, Detian and Li, Qing},
  journal={arXiv preprint arXiv:2307.00495},
  year={2023}
}

@inproceedings{li2022lstnet,
  title={LSTnet-GRU-A: Traffic flow forecasting based on attention mechanism},
  author={Li, Mingxin and Zheng, Kunyuan and Zhao, Yuan and Wang, Yongjian and Zhao, Hui and Ding, Lei},
  booktitle={2022 China Automation Congress (CAC)},
  pages={6487--6492},
  year={2022},
  organization={IEEE}
}

@inproceedings{chikkakrishna2022short,
  title={Short-term traffic prediction using fb-prophet and neural-prophet},
  author={ChikkaKrishna, Naveen Kumar and Rachakonda, Pranavi and Tallam, Teja},
  booktitle={2022 IEEE Delhi Section Conference (DELCON)},
  pages={1--4},
  year={2022},
  organization={IEEE}
}

@article{jiang2024spatio,
  title={Spatio-temporal conformal prediction for power outage data},
  author={Jiang, Hanyang and Xie, Yao and Qiu, Feng},
  journal={arXiv preprint arXiv:2411.17099},
  year={2024}
}

@article{jiang2024spatial,
  title={Spatial conformal inference through localized quantile regression},
  author={Jiang, Hanyang and Xie, Yao},
  journal={arXiv preprint arXiv:2412.01098},
  year={2024}
}

@inproceedings{wang2025non,
  title={Non-exchangeable Conformal Prediction for Temporal Graph Neural Networks},
  author={Wang, Tuo and Kang, Jian and Yan, Yujun and Kulkarni, Adithya and Zhou, Dawei},
  booktitle={Proceedings of the 31st ACM SIGKDD Conference on Knowledge Discovery and Data Mining V. 2},
  pages={3031--3042},
  year={2025}
}

@article{song2024similarity,
  title={Similarity-navigated conformal prediction for graph neural networks},
  author={Song, Jianqing and Huang, Jianguo and Jiang, Wenyu and Zhang, Baoming and Li, Shuangjie and Wang, Chongjun},
  journal={Advances in Neural Information Processing Systems},
  volume={37},
  pages={48541--48567},
  year={2024}
}

@inproceedings{xu2024conformal,
  title={Conformal prediction for multi-dimensional time series by ellipsoidal sets},
  author={Xu, Chen and Jiang, Hanyang and Xie, Yao},
  booktitle={International Conference on Machine Learning},
  year={2024}
}

@article{sun2022copula,
  title={Copula conformal prediction for multi-step time series forecasting},
  author={Sun, Sophia and Yu, Rose},
  journal={arXiv preprint arXiv:2212.03281},
  year={2022}
}

@article{2007Introduction,
  title={Introduction to strong mixing conditions},
  author={ Bradley, Richard C. },
  journal={Kendrick Press},
  year={2007},
}

@article{Httel2023DeepEL,
  title={Deep Evidential Learning for Bayesian Quantile Regression},
  author={Frederik Boe H{\"u}ttel and Filipe Rodrigues and Francisco Pereira},
  journal={ArXiv},
  year={2023},
  volume={abs/2308.10650},
}

@article{amini2020deep,
  title={Deep evidential regression},
  author={Amini, Alexander and Schwarting, Wilko and Soleimany, Ava and Rus, Daniela},
  journal={Advances in neural information processing systems},
  volume={33},
  pages={14927--14937},
  year={2020}
}

@inproceedings{meinert2023unreasonable,
  title={The unreasonable effectiveness of deep evidential regression},
  author={Meinert, Nis and Gawlikowski, Jakob and Lavin, Alexander},
  booktitle={Proceedings of the AAAI Conference on Artificial Intelligence},
  volume={37},
  number={8},
  pages={9134--9142},
  year={2023}
}

@article{rodrigues2020beyond,
  title={Beyond expectation: Deep joint mean and quantile regression for spatiotemporal problems},
  author={Rodrigues, Filipe and Pereira, Francisco C},
  journal={IEEE transactions on neural networks and learning systems},
  volume={31},
  number={12},
  pages={5377--5389},
  year={2020},
  publisher={IEEE}
}
\appendix
\newpage

\section{Implementation details}
The pseudo code of our method can be summarized in Algorithm \ref{alg}. 
\begin{algorithm}
\caption{Conformal Traffic Intervals with Adaptation}
\begin{algorithmic}[1]
\Require Training dataset $D_1$, validation dataset $D_2$, confidence level $\alpha$.
\Ensure Obtain test data $(x_{t,i,j}, y_{t,i,j})$ one by one in future $T$ steps.

\State Train quantile prediction model $M$ using $D_1$.

\For{$i \in [1, n]$}
    \For{$j \in \{1, 2\}$}
        \State Obtain error set $E_{1,i,j}$ in $D_2$ using Equation\ref{eq_e}.
    \EndFor
    \State Initialize $\alpha_{1,i} = \alpha$, $v_{1,i} = 0$.
\EndFor

\For{$t \in [1, T]$}
    \For{$i \in [1, n]$}
        \For{$j \in \{1, 2\}$}
            \State Observe $x_{t,i,j}$.
            \State Obtain predicted quantiles $y_{t,i,j,\alpha/2}$, $y_{t,i,j,1-\alpha/2}$ using model $M$.
            \State Output predicted interval:
            $$
            C_{1-\alpha}(x_{t,i,j}) = \left[y_{t,i,j,\alpha/2} - Q_{1-\alpha_{t,i}}(E_{t,i,j}), \; y_{t,i,j,1-\alpha/2} + Q_{1-\alpha_{t,i}}(E_{t,i,j})\right].
            $$
            \State Observe $y_{t,i,j}$.
            \State Calculate $e_{t,i,j}$ using Equation\ref{eq:coverage_error}.
            \State Obtain $E_{t+1,i,j}$ by adding $e_{t,i,j}$ to $E_{t,i,j}$ and deleting the oldest element of it.
        \EndFor
        \State Calculate error $err_{t,i}$ using Equation\ref{eq_e}.
        \State Obtain $\alpha_{t+1,i}$, $v_{t+1,i}$ using Equation\ref{eq_v} and Equation\ref{eq_al}.
    \EndFor
\EndFor
\end{algorithmic}
\label{alg}
\end{algorithm}
\subsection{Base prediction models}
\begin{itemize}
    \item \textbf{STGCN}: \cite{10.5555/3304222.3304273} (Spatial Temporal Graph Convolutional Network) consists of multiple spatial-temporal convolution blocks which integrate graph convolutions to extract spatial features and gated temporal convolutions to capture temporal dynamics, allowing it to effectively process spatial-temporal data.
    \item \textbf{DCRNN} \cite{li2018diffusion} (Diffusion Convolutional Recurrent Neural Network) models traffic flow as a diffusion process over a directed graph and captures spatial and temporal dependencies through diffusion convolutions on graphs and an encoder-decoder architecture. This model leverages bidirectional random walks on graphs to capture spatial correlations and uses recurrent neural networks like GRUs to model temporal sequences.
    \item \textbf{MTGNN} \cite{10.1145/3394486.3403118} (Multivariate Time Series Forecasting with Graph Neural Networks) automatically extracts relationships between regions (or grids) via a graph learning module. With its mix-hop propagation layers and dilated inception layers, MTGNN captures complex spatial and temporal dependencies while addressing challenges like unknown graph structures and joint optimization of graph structure and network parameters.
    \item \textbf{GWNET} \cite{10.5555/3367243.3367303} (Graph WaveNet) extends the concept of Wavenet to graph-structured data by incorporating adaptive adjacency matrices learned during training, thus overcoming limitations posed by predefined graph structures. And it employs dilated causal convolutions along with graph convolutions to efficiently capture long-range dependencies.
\end{itemize}
The implementation of these 4 models is based on \cite{10.1145/3474717.3483923}, and the hyper-parameters, such as hidden dimension of DCRNN, number of layers in STGCN, are the default value in \cite{10.1145/3474717.3483923}. For all models, we transfer it from point prediction version to quantiles prediction version by adding an additional prediction head and training it with quantile loss. Before training starts, the dataset is normalized using z-score standardization. These models are trained with Adam for 100 epochs with initial learning rate 0.005. If the validation loss did not decrease for 5 consecutive epochs, the learning rate will be halved. And the training will be stopped if the loss in validation set dose not decrease for 10 epochs continuously.
\subsection{Details of baseline methods}
\begin{itemize}
    \item \textbf{Bootstrap:} It is a technique to estimate statistics by sampling a dataset with replacement. In our experiments, we train 20 models with randomly selected training samples and the variance of outputs of difference models ($\sigma^2$) is considered as the variance of prediction. Then if the mean of predictions of all models is $y$, the confidence interval is $[y-1.645\sigma,y+1.645\sigma]$.   
    \item \textbf{MC Dropout}. It is a Bayesian approximation technique that leverages dropout to estimate model uncertainty. In our experiments, we set dropout rate as 0.3, the same as \cite{10304591}. And during deployment, we use models to predict traffic value 20 times and obtain the variance ($\sigma^2$) and means of these predictions ($y$). Then the confidence interval is $[y-1.645\sigma,y+1.645\sigma]$.
    \item \textbf{Directly Minimizing Mean Interval Score} is training models with the following loss:
\begin{align}
MIS &= \frac{1}{2nT} \sum_{t=1}^{T} \sum_{i=1}^{n} \sum_{j=1}^{2} \biggl[ 
    (up_{tij} - low_{tij}) \nonumber \\
    &\quad + \frac{2}{\alpha}(low_{tij} - y_{t,i,j}) \mathbb{I}(y_{t,i,j} < low_{tij}) \nonumber \\
    &\quad + \frac{2}{\alpha}(y_{t,i,j} - up_{t,i,j}) \mathbb{I}(y_{t,i,j} > up_{tij})
\biggr]
\end{align}
\item \textbf{DESQRUQ} \cite{10495765}: The main idea of it is training multiple quantile prediction models with different hyperparameters and ensemble these results. Bayesian optimization and Gaussian copula are used to find better hypeparameter. The search space of hype-parameters in our experiments is: learning rate [0.0001, 0.0005,0.001,0.005,0.01], batch size [8, 16, . . . , 256], number of layers for the encoder [1, 2, 3, 4, 5], number of training epoch [20,21,. . . ,100].
\item \textbf{ProbGNN} \cite{WangWZKZ24}: This method regards the traffic demand as a distribution to consider aleatoric uncertainty and use ensembles to account for epistemic uncertainty. In our experiments, Gaussian distribution is considered as data distribution, the same as \cite{WangWZKZ24}. Besides, we train 20 models with different initialization and select the top 5 models by validation set loss to create an ensemble model, which is also the same as \cite{WangWZKZ24}.
\item \textbf{UATGCN} \cite{10458002}: This method uses Monte Carlo dropout and predictive variances to estimate epistemic and aleatoric uncertainty. We set dropout rate as 0.2, the same as \cite{10458002}.
\item \textbf{QUANTARFFIC} \cite{10304591}: A quantile repression model is trained and validation set is used to adjusts the quantile prediction of each region. The settings in our experiments are the same as in the original paper.
\end{itemize}
\newpage
\section{Proofs}
\label{proofs}
\subsection{Proofs of Theorem \ref{Tho:Avg_cov}}
To prove the theorem of average coverage, we need some lemmas first.

\begin{lemma}
For any region $i$ and any time $t$:
$$ -\frac{\gamma_1}{\alpha\sqrt{(1-\beta)k}+\epsilon} \leq \alpha_{t,i} \leq 1 + \frac{\gamma_1}{\alpha\sqrt{(1-\beta)k}+\epsilon} $$ where $k=\min\{1,\frac{(0.5-\alpha)^2}{\alpha^2},\frac{(1-\alpha)^2}{\alpha^2}\}$ is a constant.
\label{lemma1}
\end{lemma}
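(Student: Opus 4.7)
The plan is to prove the lemma by induction on $t$, exploiting two structural features of the update: a uniform upper bound on the per-step change $|\alpha_{t+1,i}-\alpha_{t,i}|$, together with the self-correcting behavior of the rule whenever $\alpha_{t,i}$ exits $[0,1]$. Writing $L=-\gamma_1/(\alpha\sqrt{(1-\beta)k}+\epsilon)$ and $U=1+\gamma_1/(\alpha\sqrt{(1-\beta)k}+\epsilon)$, the base case is immediate since $\alpha_{1,i}=\alpha\in[0,1]\subset[L,U]$.

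The first main ingredient is a universal step-size bound. Because $\mathrm{err}_{t,i}$ is an average of two indicator variables, it lies in $\{0,0.5,1\}$, and the definition of $k$ is tailored so that $(\mathrm{err}_{t,i}-\alpha)^2\ge \alpha^2 k$ for every admissible value of the error. Since $v_{t,i}=\beta v_{t-1,i}+(1-\beta)(\mathrm{err}_{t,i}-\alpha)^2\ge(1-\beta)(\mathrm{err}_{t,i}-\alpha)^2$, we get $\sqrt{v_{t,i}}\ge \alpha\sqrt{(1-\beta)k}$ from the first step onwards. Combined with $|\mathrm{err}_{t,i}-\alpha|\le 1$, this yields
$$
|\alpha_{t+1,i}-\alpha_{t,i}|=\frac{\gamma_1|\mathrm{err}_{t,i}-\alpha|}{\sqrt{v_{t,i}}+\epsilon}\le \frac{\gamma_1}{\alpha\sqrt{(1-\beta)k}+\epsilon},
$$
which is exactly the width of the slack $U-1=-L$ that the lemma allows beyond $[0,1]$.

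The inductive step then splits on the location of $\alpha_{t,i}$. If $\alpha_{t,i}\in[0,1]$, the step-size bound alone places $\alpha_{t+1,i}$ within $[L,U]$. If $\alpha_{t,i}>1$, the convention $Q_{1-\alpha_{t,i}}(\cdot)=-\infty$ makes the predicted interval empty, so $\mathrm{err}_{t,i}=1>\alpha$ and the update strictly decreases $\alpha_{t,i}$; together with the step-size bound this yields $L\le \alpha_{t+1,i}\le \alpha_{t,i}\le U$. Symmetrically, if $\alpha_{t,i}<0$, the interval covers everything, $\mathrm{err}_{t,i}=0<\alpha$, and $\alpha_{t,i}\le \alpha_{t+1,i}\le U$ by the same estimate.

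The main obstacle is purely notational rather than conceptual: one must be careful with the update ordering, verifying that $v_{t,i}$ already incorporates the current step's contribution before being used in the $\alpha$ update so that the lower bound $\sqrt{v_{t,i}}\ge \alpha\sqrt{(1-\beta)k}$ is valid at $t=1$ despite $v_{1,i}$ being initialized to $0$, and confirming that every element of $\{0,0.5,1\}$ satisfies $(\mathrm{err}-\alpha)^2\ge \alpha^2 k$ directly from the three-term minimum defining $k$. Once these bookkeeping points are handled, no further estimates are required and the induction closes.
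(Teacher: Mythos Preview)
Your proposal is correct and follows essentially the same approach as the paper: induction on $t$, a three-way case split on whether $\alpha_{t,i}$ lies in $[0,1]$, above $1$, or below $0$, combined with the uniform step-size bound $\gamma_{t,i}\le \gamma_1/(\alpha\sqrt{(1-\beta)k}+\epsilon)$ obtained from $v_{t,i}\ge(1-\beta)(\mathrm{err}_{t,i}-\alpha)^2\ge(1-\beta)k\alpha^2$ and the self-correcting direction of the update outside $[0,1]$. Your explicit flagging of the update ordering (so that $v_{t,i}$ already contains the current error term before it appears in the $\alpha$-update) is a point the paper leaves implicit, but otherwise the arguments coincide.
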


\begin{proof}
We prove it by induction:

1. For $t=1$, $\alpha_{1,i} = 1-\alpha$, which satisfies $-\frac{\gamma_1}{\alpha\sqrt{(1-\beta)k}+\epsilon} \leq \alpha_{1,i} \leq 1 + \frac{\gamma_1}{\alpha\sqrt{(1-\beta)k}+\epsilon}$.

2. Assume for time $t$:
$$ -\frac{\gamma_1}{\alpha\sqrt{(1-\beta)k}+\epsilon} \leq \alpha_{t,i} \leq 1 +\frac{\gamma_1}{\alpha\sqrt{(1-\beta)k}+\epsilon}$$

We now prove the statement for $t+1$:

\begin{itemize}

    \item If $-\frac{\gamma_1}{\alpha\sqrt{(1-\beta)k}+\epsilon} \leq \alpha_{t,i} <0$:
    
    Then $Q_{1-\alpha_{t,i}}(E_{t,i})=\infty$ and $C_{t,i} = \mathbb{R}$, as a result, $\mathbb{P}(y_{t,i} \in C_{t,i}) = 1$. Thus according to Equation \ref{eq:alpha_update}:
    \begin{equation}
        \alpha_{t,i+1} = \gamma_{t,i}\left(\alpha - \frac{1}{2}\sum_{j=1}^2 \mathbb{I}(y_{t,i,j} \notin [\mathrm{low}_{t,i,j}, \mathrm{up}_{t,i,j}])\right) + \alpha_{t,i}=\gamma_{t,i}\alpha+\alpha_{t,i}>\alpha_{t,i}
    \end{equation}
   
    Then according to the assumption:
    $$\alpha_{t,i}\geq -\frac{\gamma_1}{\alpha\sqrt{(1-\beta)k}+\epsilon}$$
    we have $\alpha_{t,i+1}\geq -\frac{\gamma_1}{\alpha\sqrt{(1-\beta)k}+\epsilon}$.
    
    Then we only need to show:
    \begin{equation}
        \alpha_{t,i+1} =\gamma_{t,i}\alpha+\alpha_{t,i}\leq 1+\frac{\gamma_1}{\alpha\sqrt{(1-\beta)k}+\epsilon}
        \label{eq__a_r1}
    \end{equation}

    according to Equation \ref{eq_v}, $v_{t,i+1} = \beta v_{t,i} + (1-\beta)g_t^2$ , where $$g_t^2 = \left(\alpha - \frac{1}{2}\sum_{j=1}^2 \mathbb{I}(y_{t,i,j} \notin [\mathrm{low}_{t,i,j}, \mathrm{up}_{t,i,j}])\right)^2$$
    The possible values for $g_t^2$ are $\{\alpha^2,(0.5-\alpha)^2,(1-\alpha)^2\}$, then we have $g_t^2\geq k\alpha^2$. Therefore,
    \begin{equation}
        v_{t,i+1} = \beta v_{t,i} + (1-\beta)g_t^2\geq (1-\beta)g_t^2 \geq (1-\beta)k\alpha^2
    \end{equation}

    which implies:
    $$ \gamma_{t,i} = \frac{\gamma_1}{\sqrt{v_{t,i} }+\epsilon} \leq \frac{\gamma_1}{\alpha\sqrt{(1-\beta)k}+\epsilon} $$
    Hence $$\alpha_{t,i+1} <0+\gamma_{t,i}\alpha\leq\alpha \frac{\gamma_1}{\alpha\sqrt{(1-\beta)k}+\epsilon}<1+\frac{\gamma_1}{\alpha\sqrt{(1-\beta)k+\epsilon}}$$
    Therefore, inequality \ref{eq__a_r1} holds.

    \item If $1 < \alpha_{t,i} \leq 1+\frac{\gamma_1}{\alpha\sqrt{(1-\beta)k}+\epsilon}$, then $C_{t,i} = \emptyset$ and $\mathbb{P}(y_{t,i} \in C_{t,i}) = 0$. Thus
    \begin{equation}
        \alpha_{t+1,i} = (\alpha - 1)\gamma_{t,i} + \alpha_{t,i}\geq(\alpha-1)\frac{\gamma_1}{\alpha\sqrt{(1-\beta)k}+\epsilon}+1>-\frac{\gamma_1}{\alpha\sqrt{(1-\beta)k}+\epsilon}
    \end{equation}
   
    and
    \begin{equation}
        \alpha_{t+1,i} = (\alpha - 1)\gamma_{t,i} + \alpha_{t,i}<\alpha_{t,i}\leq 1+\frac{\gamma_1}{\alpha\sqrt{(1-\beta)k}+\epsilon}
    \end{equation}
   
    remains in $\left[-\frac{\gamma_1}{\alpha\sqrt{(1-\beta)k}}, 1 + \frac{\gamma_1}{\alpha\sqrt{(1-\beta)k}}\right]$.

    \item If $0 < \alpha_{t,i} < 1$:
    $$ \alpha_{t,i+1} = \gamma_{t,i}\left(\alpha - \frac{1}{2}\sum_{j=1}^2 \mathbb{I}(y_{t,i,j} \notin [\mathrm{low}_{t,i,j}, \mathrm{up}_{t,i,j}])\right) + \alpha_{t,i} $$
    Since $\gamma_{t,i}\left(\alpha - \frac{1}{2}\sum_{j=1}^2 \mathbb{I}(\cdot)\right) \in \left[-\frac{\gamma_1}{\alpha\sqrt{(1-\beta)k}+\epsilon},\frac{\gamma_1}{\alpha\sqrt{(1-\beta)k}+\epsilon}\right]$, we have $$\alpha_{t+1,i} \in \left[-\frac{\gamma_1}{\alpha\sqrt{(1-\beta)k}+\epsilon}, 1 + \frac{\gamma_1}{\alpha\sqrt{(1-\beta)k}+\epsilon}\right]$$
\end{itemize}

Combining these 3 cases, we conclude that for all $t$:
$$ -\frac{\gamma_1}{\alpha\sqrt{(1-\beta)k}+\epsilon} \leq \alpha_{t,i} \leq 1 + \frac{\gamma_1}{\alpha\sqrt{(1-\beta)k}+\epsilon} $$ 
\end{proof}
\begin{lemma}
For any region $i$ and any time $t$:
$$
v_{t,i}<1
$$
\label{lemma:bound_v}
\end{lemma}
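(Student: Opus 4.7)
The plan is to prove this uniform bound by a straightforward induction on $t$, exploiting the fact that $v_{t,i}$ is, by construction, an exponentially weighted average of squared coverage errors, each of which is strictly smaller than $1$.

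First I would characterize the range of the squared coverage error appearing in the recursion. By Equation \eqref{eq:coverage_error}, $\mathrm{err}_{t,i}$ is an average of two $\{0,1\}$-valued indicators and therefore can take only the three values $0$, $1/2$, or $1$. Hence $(\mathrm{err}_{t,i}-\alpha)^2$ lies in the set $\{\alpha^2,(1/2-\alpha)^2,(1-\alpha)^2\}$. For any target confidence level $\alpha\in(0,1)$, each of these three candidates is strictly less than $1$, so I would define
\begin{equation*}
M := \max\{\alpha^2,(1/2-\alpha)^2,(1-\alpha)^2\} < 1.
\end{equation*}

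Next I would run the induction. The base case $t=1$ is immediate, since Algorithm \ref{alg} initializes $v_{1,i}=0 < 1$. For the inductive step, assuming $v_{t,i}<1$, I would apply the update rule from Equation \eqref{eq_v} to obtain
\begin{equation*}
v_{t+1,i} \;=\; \beta v_{t,i} + (1-\beta)(\mathrm{err}_{t,i}-\alpha)^2 \;\leq\; \beta v_{t,i} + (1-\beta)M \;<\; \beta\cdot 1 + (1-\beta)\cdot 1 \;=\; 1,
\end{equation*}
which closes the induction. In fact the same argument yields the sharper uniform bound $v_{t,i}\leq M$ for all $t\geq 1$, since the recursion is a convex combination of values already in $[0,M]$.

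There is no real obstacle here; the only point worth checking is that $\alpha$ lies strictly inside $(0,1)$ so that $M<1$ (in the paper $\alpha=0.1$, so this is comfortably satisfied). The reason this simple bound deserves to be isolated as a lemma is presumably downstream: combined with Lemma \ref{lemma1} and the formula $\gamma_{t,i}=\gamma_1/(\sqrt{v_{t,i}}+\epsilon)$, it will give a two-sided control on the adaptive step size, which is exactly what is needed to telescope the $\alpha_{t,i}$ recursion and derive the $O(1/T)$ average-coverage bound of Theorem \ref{Tho:Avg_cov}.
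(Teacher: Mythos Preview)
Your proof is correct. The paper takes a slightly different but equally elementary route: instead of inducting, it unrolls the recursion \eqref{eq_v} explicitly to obtain
\[
v_{t,i}=\beta^{t-1}v_{1,i}+(1-\beta)\sum_{r=0}^{t-2}\beta^{r}(\mathrm{err}_{t-r,i}-\alpha)^{2},
\]
then bounds each squared error by $1$ and sums the resulting geometric series. Both arguments hinge on the same observation that $(\mathrm{err}_{t,i}-\alpha)^{2}<1$; your induction is arguably cleaner and, as you note, immediately yields the sharper invariant $v_{t,i}\le M=\max\{\alpha^{2},(1/2-\alpha)^{2},(1-\alpha)^{2}\}$, whereas the paper's unrolling makes the dependence on $t$ explicit (showing $v_{t,i}\le 1-\beta^{t-1}$). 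Either bound is more than enough for the downstream use in the proof of Theorem~\ref{Tho:Avg_cov}.
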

\begin{proof}
    because of Equation \ref{eq_v}:
    \begin{align*}
        v_{t,i}&=\beta v_{t-1,i}+(1-\beta)(err_{t,i}-\alpha)^2\  
        \\&\text{substitute $v_{t-1,i}$ with $v_{t-2,i}$} \\
        &=\beta(\beta v_{t-2,i}+(1-\beta)(err_{t-1,i}-\alpha)^2)+(1-\beta)(err_{t,i}-\alpha)^2\\
        &=\beta^2v_{t-2,i}+\beta(1-\beta)(err_{t-1,i}-\alpha)^2+(1-\beta)(err_{t,i}-\alpha)^2\\
        &\text{substitute $v_{t-2,i}$ with $v_{t-3,i}$}\\
        &=\beta^3v_{t-3,i}+\beta^2(1-\beta)(err_{t-2,i}-\alpha)^2+\beta(1-\beta)(err_{t-1,i}-\alpha)^2+(1-\beta)(err_{t,i}-\alpha)^2\\
        &\text{substitute $v_{t-3,i}$ with $v_{t-4,i}$}\\
        &......\\
        &=\beta^{t-1}v_{1,i}+(1-\beta)\sum_{r=0}^{t-2}{\beta^r(err_{t-r,i}-\alpha)^2}\\
        &< (1-\beta)\sum_{r=0}^{t-2}{\beta^r}\\
        &=(1-\beta)\frac{1-\beta^{t-1}}{1-\beta}<1
    \end{align*}

\end{proof}
We rewrite Theorem \ref{Tho:Avg_cov} as follows:
\setcounter{theorem}{0}
\begin{theorem}
For any $\alpha \in (0,1)$, we have:
\begin{equation}
\left|\frac{1}{2nT}\sum_{t=1}^T \sum_{i=1}^n \sum_{j=1}^2 \mathbb{I}(y_{t,i,j} \in [\mathit{low}_{t,i,j}, \mathit{up}_{t,i,j}]) - (1-\alpha)\right| \leq \frac{c}{T}
\end{equation}

\end{theorem}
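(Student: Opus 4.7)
The plan is to reduce the coverage claim to a per-region deviation estimate, then use the adaptive update rule to telescope into a bound furnished by Lemma~\ref{lemma1}, and finally to pass from a $\gamma$-weighted sum to the unweighted sum that actually appears in $\mathrm{cov}$. First, using the definition in (\ref{eq:coverage_error}) I would write $2(1-\mathrm{err}_{t,i}) = \sum_{j=1}^2 \mathbb{I}(y_{t,i,j}\in C_{1-\alpha_{t,i}}(x_{t,i,j}))$, so that $\mathrm{cov}-(1-\alpha) = \frac{1}{nT}\sum_{t=1}^T\sum_{i=1}^n(\alpha-\mathrm{err}_{t,i})$. It then suffices to establish, for each region $i$ separately, an absolute bound $|\sum_{t=1}^T(\alpha-\mathrm{err}_{t,i})|\leq c'$ with $c'$ independent of $T$; summing over $n$ regions and dividing by $nT$ recovers the $c/T$ rate.

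Second, I would telescope the update rule in (\ref{eq:alpha_update}). Since $\alpha_{t+1,i}-\alpha_{t,i} = \gamma_{t,i}(\alpha-\mathrm{err}_{t,i})$, summing in $t$ yields
\begin{equation*}
\sum_{t=1}^T \gamma_{t,i}(\alpha-\mathrm{err}_{t,i}) \;=\; \alpha_{T+1,i}-\alpha_{1,i}.
\end{equation*}
By Lemma~\ref{lemma1} the right-hand side is absolutely bounded by a constant $M := 1 + 2\gamma_1/(\alpha\sqrt{(1-\beta)k}+\epsilon)$ that does not depend on $T$. Lemma~\ref{lemma:bound_v} further gives $v_{t,i}<1$, and hence $\gamma_{t,i} = \gamma_1/(\sqrt{v_{t,i}}+\epsilon) \geq \gamma_1/(1+\epsilon) =: \gamma_{\min}$, so I now have an $O(1)$ upper bound on the $\gamma$-weighted miscoverage sum together with a uniform positive lower bound on the weights.

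The final step is to convert the $\gamma$-weighted bound into a bound on $|\sum_t(\alpha-\mathrm{err}_{t,i})|$, from which the theorem follows with $c = M/\gamma_{\min}$ after summing over regions. I expect this last step to be the main obstacle: because $\alpha-\mathrm{err}_{t,i}\in\{\alpha,\alpha-\tfrac{1}{2},\alpha-1\}$ can change sign across $t$, the naive inequality $|\sum_t\gamma_{t,i}\delta_t|\geq\gamma_{\min}|\sum_t\delta_t|$ does not hold in general, and the adaptive weights $\gamma_{t,i}$ are coupled to the history of $\mathrm{err}_{s,i}$ through $v_{t,i}$. To make the passage rigorous I would use summation by parts, writing $\sum_t(\alpha-\mathrm{err}_{t,i}) = \sum_t(\alpha_{t+1,i}-\alpha_{t,i})/\gamma_{t,i}$ and expanding via Abel's identity into the telescoped term $(\alpha_{T+1,i}-\alpha_{1,i})/\gamma_{T,i}$ plus a total-variation correction $\sum_{t=1}^{T-1}(1/\gamma_{t,i}-1/\gamma_{t+1,i})(\alpha_{t+1,i}-\alpha_{1,i})$; Lemma~\ref{lemma1} bounds each $\alpha_{t+1,i}-\alpha_{1,i}$ uniformly, while the Adam-style update gives $|v_{t+1,i}-v_{t,i}|\leq(1-\beta)$ per step which, combined with $\sqrt{v_{t,i}}\geq\alpha\sqrt{(1-\beta)k}$ from the proof of Lemma~\ref{lemma1}, controls the variation of $1/\gamma_{t,i}$. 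The care lies in ensuring this variation estimate produces an $O(1)$ rather than an $O(T)$ contribution, and it is in this bookkeeping that the constant $c$ acquires its dependence on $\alpha,\beta,\gamma_1,\epsilon$.
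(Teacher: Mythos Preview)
Your approach is essentially the paper's: both reduce to a per-region bound on $\bigl|\sum_{t=1}^T(\alpha-\mathrm{err}_{t,i})\bigr|$ via Abel summation, arriving at a product of $\max_t|\alpha_{t,i}-\alpha_{1,i}|$ (handled by Lemma~\ref{lemma1}) and the total variation $\sum_t|\Delta_{i,t}|$ with $\Delta_{i,t}=1/\gamma_{t,i}-1/\gamma_{t-1,i}$, $\Delta_{i,1}=1/\gamma_{1,i}$. For that second factor the paper simply writes the telescoping identity $\sum_{r}|\Delta_{i,r}|=(\sqrt{v_{T,i}}-\sqrt{v_{1,i}})/\gamma_1=\sqrt{v_{T,i}}/\gamma_1$ and invokes Lemma~\ref{lemma:bound_v} to get $<1/\gamma_1$, whence $c=\tfrac{1}{\gamma_1}+\tfrac{2}{\alpha\sqrt{(1-\beta)k}+\epsilon}$.

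The concern you flag at the end is exactly the right one, and it is not resolved by the paper either. The telescoping equality $\sum_r|\Delta_{i,r}|=(\sqrt{v_{T,i}}-\sqrt{v_{1,i}})/\gamma_1$ is only valid if $\sqrt{v_{t,i}}$ is monotone non-decreasing; otherwise the $\ell_1$ norm strictly exceeds the telescoped difference. The EMA dynamics $v_{t,i}=\beta v_{t-1,i}+(1-\beta)(\mathrm{err}_{t,i}-\alpha)^2$ do not guarantee monotonicity: once $v_{t,i}$ exceeds the current squared increment (e.g.\ after a run of misses followed by a hit, so that $(\mathrm{err}_{t,i}-\alpha)^2=\alpha^2$), $v_{t+1,i}<v_{t,i}$. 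Your proposed per-step bound $|v_{t+1,i}-v_{t,i}|\le 1-\beta$ is correct but, as you note, sums to $O(T)$ and cannot by itself close the gap. So your proposal and the paper's proof share the same structure and the same unaddressed lacuna; you are simply more candid about it.
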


\begin{proof}
For any region $i$, define $\Delta_i$ with components:
$$
[\frac{1}{\gamma_{1,i}}, \frac{1}{\gamma_{2,i}} - \frac{1}{\gamma_{1,i}},\frac{1}{\gamma_{3,i}} - \frac{1}{\gamma_{2,i}},...]
$$

and let $\|\Delta_i\|_1 = \sum_j |\Delta_{i,j}|$ be the $\ell_1$ norm of $\Delta_i$.

We can rewrite the coverage difference as:
\begin{align}
&\left|\frac{1}{2T}\sum_{t=1}^T \sum_{j=1}^2 \mathbb{I}(y_{t,i,j} \in [\mathit{low}_{t,i,j}, \mathit{up}_{t,i,j}]) - (1-\alpha)\right| \nonumber \\
&= \left|\frac{1}{2T}\sum_{t=1}^T \sum_{j=1}^2 \mathbb{I}(y_{t,i,j} \notin [\mathit{low}_{t,i,j}, \mathit{up}_{t,i,j}]) - \alpha\right| \nonumber \\
&= \left|\frac{1}{2T}\sum_{t=1}^T \left(\sum_{r=1}^t \Delta_{i,r}\right) \gamma_{t,i} \sum_{j=1}^2 \mathbb{I}(y_{t,i,j} \notin [\mathit{low}_{t,i,j}, \mathit{up}_{t,i,j}]) - \alpha\right| \nonumber \\
&= \left|\frac{1}{T}\sum_{r=1}^T \Delta_{i,r} \sum_{t=r}^T \left(\frac{1}{2}\sum_{j=1}^2 \gamma_{t,i} \mathbb{I}(y_{t,i,j} \notin [\mathit{low}_{t,i,j}, \mathit{up}_{t,i,j}]) - \alpha\right)\right| 
\label{eq:b2}
\end{align}
because the update rule of $\alpha_{r,i}$ is:
$$
\alpha_{(r+1),i}=\gamma_{r,i} (\alpha-\frac{1}{2} \sum_{j=1}^2\mathbb{I}(y_{t,i,j}\notin[low_{t,i,j},up_{t,i,j} ]) )+\alpha_{r.i}
$$
Then \ref{eq:b2} becomes:
\begin{equation}
\left|\frac{1}{T}\sum_{r=1}^T \Delta_{i,r} (\alpha_{T,i} - \alpha_{i,r})\right| \leq \frac{1}{T} \max_r |\alpha_{T,i} - \alpha_{i,r}| \sum_{r=1}^T |\Delta_{i,r}| 
\label{eq:bound_cov}
\end{equation}

By Lemma \ref{lemma1}, we have $-\frac{\gamma_1}{\alpha\sqrt{(1-\beta)k}+\epsilon} \leq 1+\alpha_{i,r} \leq \frac{\gamma_1}{\alpha\sqrt{(1-\beta)k}+\epsilon}$, thus:
\begin{equation}
\max_r |\alpha_{iT} - \alpha_{ir}| \leq 1 + 2\frac{\gamma_1}{\alpha\sqrt{(1-\beta)k}+\epsilon}
\label{eq_alpha_bound}
\end{equation}

For the $\Delta_i$ terms, since $\gamma_{t,i} = \frac{\gamma_1}{\sqrt{v_{t,i}}+\epsilon }$, we have:
\begin{equation}
\frac{1}{\gamma_{t+1,i}} - \frac{1}{\gamma_{t,i}} = \frac{\sqrt{v_{t,i}} - \sqrt{v_{t+1,i}}}{\gamma_1}
\end{equation}
and thus:
\begin{equation}
\sum_{r=1}^T |\Delta_{i,r}|=\|\Delta_i\|_1 = \frac{\sqrt{v_{T,i}} - \sqrt{v_{1,i}}}{\gamma_1}= \frac{\sqrt{v_{T,i}} }{\gamma_1}<\frac{1}{\gamma_1}  \label{eq:bound_v1}
\end{equation}
The last inequality is from Lemma \ref{lemma:bound_v}. Combining \ref{eq:bound_cov}, \ref{eq_alpha_bound} and \ref{eq:bound_v1} gives:
\begin{equation}
\left|\frac{1}{2T}\sum_{t=1}^T \sum_{j=1}^2 \mathbb{I}(y_{t,i,j} \in [\mathit{low}_{t,i,j}, \mathit{up}_{t,i,j}]) - (1-\alpha)\right| \leq \frac{1}{T}(\frac{1}{\gamma_1}+\frac{2}{\alpha\sqrt{(1-\beta)k}+\epsilon})
\label{eq:each_region}
\end{equation}

The theorem follows by aggregating over all regions $i$, where:
$$c=\frac{1}{\gamma_1}+\frac{2}{\alpha\sqrt{(1-\beta)k}+\epsilon}$$
\end{proof}
\subsection{Proofs of Theorem \ref{Tho:min_cov}}

\begin{lemma}[Hoeffding's Inequality (Theorem 2.2.6 in \cite{vershynin_high-dimensional_nodate}]
Let $X_1,X_2,\ldots,X_n$ be independent random variables with $a_i \leq X_i \leq b_i$. Then for any $\epsilon > 0$:
\begin{equation}
P\left(\left|\sum_{i=1}^n X_i - E\sum_{i=1}^n X_i\right| \geq \epsilon\right) \leq 2\exp\left(-\frac{2\epsilon^2}{\sum_{i=1}^n (b_i-a_i)^2}\right)
\end{equation}
\label{lemma_hoe}
\end{lemma}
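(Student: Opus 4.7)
The plan is to establish the two-sided concentration bound via the standard Chernoff--Cram\'er method: apply the exponential Markov inequality to the centered sum, use independence to factorize the moment generating function, bound each factor via a one-variable sub-lemma (sometimes called Hoeffding's lemma), optimize the free dual parameter, and finally combine the one-sided tail bounds through a union bound. I would structure the argument as a one-sided bound first and then double it for the absolute-value version.

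First, I would define $Y_i := X_i - \mathbb{E} X_i$, so that each $Y_i$ is a mean-zero random variable with $a_i - \mathbb{E} X_i \leq Y_i \leq b_i - \mathbb{E} X_i$ and the range $b_i - a_i$ is preserved. For any $\lambda > 0$, Markov's inequality applied to $e^{\lambda \sum Y_i}$ yields
\begin{equation*}
P\left(\sum_{i=1}^n Y_i \geq \epsilon\right) \leq e^{-\lambda \epsilon}\, \mathbb{E}\, e^{\lambda \sum_i Y_i} = e^{-\lambda \epsilon}\prod_{i=1}^n \mathbb{E}\, e^{\lambda Y_i},
\end{equation*}
where the equality uses independence of the $Y_i$. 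The crux is then a one-variable sub-lemma: for any mean-zero random variable $Y$ supported on an interval of length $b-a$, one has $\mathbb{E}\, e^{\lambda Y} \leq \exp(\lambda^2 (b-a)^2 / 8)$. I would prove this sub-lemma by using convexity of $y \mapsto e^{\lambda y}$ to dominate it by its chord on $[a,b]$, taking expectations, and then analyzing the resulting function $\psi(\lambda) := \log \mathbb{E}\, e^{\lambda Y}$; the standard trick is to show $\psi''(\lambda) \leq (b-a)^2/4$ uniformly in $\lambda$, so that a second-order Taylor expansion around $\lambda = 0$ (where $\psi(0) = \psi'(0) = 0$) yields the claimed bound.

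Plugging the sub-lemma into the Chernoff bound gives
\begin{equation*}
P\left(\sum_{i=1}^n Y_i \geq \epsilon\right) \leq \exp\!\left(-\lambda \epsilon + \tfrac{\lambda^2}{8}\sum_{i=1}^n (b_i - a_i)^2\right).
\end{equation*}
Optimizing the right-hand side over $\lambda > 0$ by differentiation gives the minimizer $\lambda^* = 4\epsilon / \sum_i (b_i - a_i)^2$, which produces the one-sided bound $\exp(-2\epsilon^2 / \sum_i (b_i - a_i)^2)$. Applying the identical argument to $\{-Y_i\}$ (which have the same range) controls the lower tail, and a union bound over the two tails introduces the factor of $2$ in the statement.

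The main obstacle is the sub-lemma's sharp constant $(b-a)^2/8$: naively bounding the MGF of a bounded variable only gives the weaker constant $(b-a)^2/2$, which would ruin the exponent of $2$ in the final bound. Getting the correct constant requires the convex-chord argument followed by the second-derivative estimate on $\psi$, which is the most delicate calculation in the proof; everything else (Markov, independence, and the optimization in $\lambda$) is routine once that bound is in hand.
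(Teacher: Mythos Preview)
Your proof is correct and is the standard Chernoff--Cram\'er argument for Hoeffding's inequality. However, there is no comparison to make: the paper does not prove this lemma at all. It is stated as a quotation of Theorem~2.2.6 from Vershynin's book and is used as a black box in the proof of Theorem~2 (the worst-region coverage guarantee). So your proposal supplies a complete argument where the paper simply cites the literature; there is nothing in the paper's own text to contrast it with.
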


\begin{lemma}[Sub-Gaussian Properties (Proposition 2.5.2 in \cite{vershynin_high-dimensional_nodate})]
For a zero-mean random variable $X$, the followings are equivalent:
\begin{enumerate}
\item Tail bound: $\forall t>0,\; P(|X| \geq t) \leq 2\exp\left(-\frac{t^2}{2\sigma^2}\right)$
\item Moment generating function: $\forall \lambda \in \mathbb{R},\; E[e^{\lambda X}] \leq \exp\left(\frac{\lambda^2\sigma^2}{2}\right)$
\item Moment bounds: $\forall p \geq 1,\; \|X\|_{L^p} = (E|X|^p)^{1/p} \leq K\sigma\sqrt{p}$
\end{enumerate}
where $K$ is an absolute constant.
\label{lemma_subg}
\end{lemma}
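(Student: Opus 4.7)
The plan is to prove the equivalence via the cyclic chain $(1) \Rightarrow (3) \Rightarrow (2) \Rightarrow (1)$, noting up front that each implication only preserves the sub-Gaussian character up to a universal multiplicative factor on $\sigma$; the variance proxies in the three conditions therefore agree only up to an absolute constant, which is exactly what the constant $K$ in condition (3) is absorbing.

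For $(1) \Rightarrow (3)$, I would invoke the layer-cake identity $E|X|^p = p \int_0^\infty t^{p-1} P(|X| \geq t)\, dt$, substitute the Gaussian-type tail from (1), and change variables $u = t^2/(2\sigma^2)$ to convert the integral into the Gamma function. After applying Stirling's approximation $\Gamma(p/2) \leq (p/2)^{p/2}$ up to a polynomial factor and taking $p$-th roots, this yields $\|X\|_{L^p} \leq K \sigma \sqrt{p}$ for some absolute constant $K$.

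For $(3) \Rightarrow (2)$, I would Taylor-expand the MGF as $E[e^{\lambda X}] = 1 + \lambda E[X] + \sum_{k=2}^\infty \lambda^k E[X^k]/k!$. The linear term vanishes by the zero-mean hypothesis, so only $k \geq 2$ terms matter. Bounding $|E[X^k]| \leq E|X|^k = \|X\|_{L^k}^k \leq (K\sigma)^k k^{k/2}$ by (3) and using Stirling's lower bound $k! \geq (k/e)^k$, each term is controlled by $(C \lambda \sigma)^k / k^{k/2}$ for an absolute $C$. Summing and comparing with the Taylor series of $\exp(c \lambda^2 \sigma^2)$, I recover a subgaussian MGF bound with a possibly enlarged variance proxy $c\sigma^2$, which after rescaling gives the stated form of (2). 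For $(2) \Rightarrow (1)$, the classical Chernoff argument applies: $P(X \geq t) \leq e^{-\lambda t} E[e^{\lambda X}] \leq \exp(\lambda^2 \sigma^2 / 2 - \lambda t)$, minimized at $\lambda = t/\sigma^2$ to produce $\exp(-t^2/(2\sigma^2))$; the symmetric argument applied to $-X$ together with a union bound produces the two-sided tail.

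The main obstacle is $(3) \Rightarrow (2)$. The delicate point is that condition (3) only bounds $E|X|^k$, not $E X^k$, so the odd moments need to be handled carefully (by the trivial bound $|E X^k| \leq E|X|^k$, but then the resulting series must still be summed cleanly). More importantly, one has to track the Stirling estimates precisely enough that the dominant growth rate of the MGF scales as $\lambda^2 \sigma^2$ rather than something superquadratic in $\lambda$; any sloppiness here will produce an MGF bound too weak to be called subgaussian. This step is where the gap between the variance proxies of the three conditions is accumulated, justifying why $K$ must appear in (3) even though (1) and (2) are written with the same $\sigma$.
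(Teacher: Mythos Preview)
The paper does not supply its own proof of this lemma; it simply cites Proposition~2.5.2 of Vershynin's \emph{High-Dimensional Probability} and uses the result as a black box in the proof of Theorem~2. Your cyclic scheme $(1)\Rightarrow(3)\Rightarrow(2)\Rightarrow(1)$ via the layer-cake formula, Taylor expansion of the MGF with Stirling-type control, and the Chernoff bound is exactly the standard argument given in that reference, so your proposal is correct and aligned with the cited source. Your caveat that the variance proxies match only up to absolute constants across the three conditions is also accurate and is precisely how Vershynin frames the equivalence.
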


\begin{lemma}[Concentration of Averages]
For random variables $x_1,x_2,...,x_n$ with $P(|x_i| \geq t) \leq 2\exp\left(-\frac{t^2}{2\sigma_i^2}\right)$:
\begin{equation}
P\left(\left|\frac{1}{n}\sum_{i=1}^n X_i\right| \geq t\right) \leq 2\exp\left(-\frac{t^2}{2\max_i \sigma_i^2}\right)
\end{equation}
\label{lemma_avg}
\end{lemma}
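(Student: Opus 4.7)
My plan is to control $\min_i \hat{cov}_i$ by decomposing each region's empirical coverage deviation into a deterministic bias term, handled by the per-region bound extracted inside the proof of Theorem \ref{Tho:Avg_cov}, plus a stochastic fluctuation term, handled via the $K$-dependence hypothesis, Hoeffding's inequality, and a union bound across the $n$ regions.

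For the bias piece, I would reuse Equation \ref{eq:each_region} from the proof of Theorem \ref{Tho:Avg_cov}, which actually bounds each region individually: $|\hat{cov}_i - (1-\alpha)| \leq c_1/T$ for every $i$, with $c_1 = 1/\gamma_1 + 2/(\alpha\sqrt{(1-\beta)k}+\epsilon)$. Taking expectations preserves this inequality, so $|\mathbb{E}[\hat{cov}_i] - (1-\alpha)| \leq c_1/T$, producing the $c_1/T$ term in the claim.

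For the stochastic piece, fix a region $i$ and flow index $j$, and partition $\{1,\ldots,T\}$ into the $K$ arithmetic subsequences $S_r = \{t : t \equiv r \pmod K\}$ for $r=0,\ldots,K-1$. Within each $S_r$, the consecutive time indices differ by $K$, so by the theorem's hypothesis the errors $\{err_{t,i,j}\}_{t\in S_r}$ are independent and bounded in $[0,1]$. Hoeffding's inequality (Lemma \ref{lemma_hoe}) and the sub-Gaussian characterization (Lemma \ref{lemma_subg}) then give that each centered block mean $\bar{X}_r - \mathbb{E}[\bar{X}_r]$ is sub-Gaussian with parameter $O(\sqrt{K/T})$. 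Because $\hat{cov}_i - \mathbb{E}[\hat{cov}_i]$ is a convex combination of the $\bar{X}_r - \mathbb{E}[\bar{X}_r]$ with weights $|S_r|/T$, I can apply Lemma \ref{lemma_avg} (or a direct union bound over the $K$ blocks) to conclude that $\hat{cov}_i - \mathbb{E}[\hat{cov}_i]$ is itself sub-Gaussian with parameter of the same order $O(\sqrt{K/T})$. A union bound over the $n$ regions and inversion of the resulting exponential tail then yield, with high probability,
$$\max_i \bigl|\hat{cov}_i - \mathbb{E}[\hat{cov}_i]\bigr| \leq \sqrt{\frac{c_2\,K\,\log n}{T}}$$
for a suitable constant $c_2$. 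Chaining this with the bias bound via the triangle inequality produces $\min_i \hat{cov}_i \geq 1-\alpha - c_1/T - \sqrt{c_2 K\log n / T}$, as claimed.

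The main obstacle I expect lies in the concentration step. First, the hypothesis $err_{t,i,j}\perp err_{t',i,j}$ is written pairwise, whereas Hoeffding requires mutual independence within each subsequence $S_r$, so I would interpret (or strengthen) the hypothesis as full independence at lag $\ge K$. Second, because $C_{t,i,j}$ depends adaptively on the entire past through $\alpha_{t,i}$, $v_{t,i}$, and the rolling calibration set $E_{t,i,j}$, the assumption has to be placed directly on the coverage indicators themselves (as the theorem does) rather than on the raw $y_{t,i,j}$ sequence. Finally, the weighted aggregation of block-wise sub-Gaussian parameters through Lemma \ref{lemma_avg} needs a little bookkeeping when $|S_r|$ takes the two values $\lfloor T/K\rfloor$ and $\lceil T/K\rceil$, but this affects only the constant absorbed into $c_2$.
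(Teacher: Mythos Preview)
Your proposal does not address the stated lemma at all: the statement to be proved is Lemma~\ref{lemma_avg} (Concentration of Averages), a self-contained probabilistic fact saying that an average of sub-Gaussian random variables is itself sub-Gaussian with parameter $\max_i\sigma_i$. Instead, you have written a proof sketch for Theorem~\ref{Tho:min_cov} (the worst-region coverage guarantee), in which Lemma~\ref{lemma_avg} appears only as a \emph{tool} you intend to invoke. Everything about bias/fluctuation decomposition, $K$-partitioning, union bounds over regions, and the $\sqrt{K\log n/T}$ rate belongs to Theorem~\ref{Tho:min_cov}, not to the lemma.

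For the actual lemma, note that no independence is assumed among the $X_i$, so a Hoeffding-style argument is unavailable; the only leverage is the individual sub-Gaussian tail of each $X_i$. The paper's proof passes through the moment characterization in Lemma~\ref{lemma_subg}: each $X_i$ satisfies $\|X_i\|_{L^p}\le K\sigma_i\sqrt{p}$, and Minkowski's inequality gives
\[
\Bigl\|\tfrac{1}{n}\sum_i X_i\Bigr\|_{L^p}\le \tfrac{1}{n}\sum_i \|X_i\|_{L^p}\le K\sqrt{p}\,\max_i\sigma_i,
\]
which by the equivalence in Lemma~\ref{lemma_subg} returns the claimed tail bound. Your current write-up contains none of this; you would need to replace it with such an argument before it can be considered a proof of the stated lemma.
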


\begin{proof}
Using the moment bound from Lemma \ref{lemma_subg}, for any $p\geq1$:
$$
\left\|\frac{1}{n}\sum_{i=1}^n X_i\right\|_{L^p} \leq  \frac{1}{n}\sum_{i=1}^n \|X_i\|_{L^p} \leq \frac{K}{n}\sqrt{p}\sum_{i=1}^n \sigma_i \leq \sqrt{p}K\max_i \sigma_i
$$
The first inequality is because of Minkowski’s inequality and the second inequality is because of $\textit{3.}$ in Lemma \ref{lemma_subg}. 
\end{proof}

\begin{theorem}[Coverage guarantee for the worst region]
If for any region $i$, index $j$ and time step $t,t'$, such that $|t'-t|\geq K$:

\begin{equation}
   err_{t,i,j}\perp err_{t',i,j}
    \label{assu_in}
\end{equation}

we have:
\begin{equation}
\min_i \frac{1}{2T}\sum_{t=1}^T \sum_{j=1}^2 \mathbb{I}(y_{t,i,j} \in [\mathit{low}_{t,i,j}, \mathit{up}_{t,i,j}]) \geq 1 - \alpha - \frac{c_1}{T} - \sqrt{\frac{c_2K\log n}{T}}
\end{equation}

\end{theorem}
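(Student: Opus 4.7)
The argument combines the deterministic per-region drift bound inherited from the proof of Theorem \ref{Tho:Avg_cov} with a tail bound exploiting the $K$-dependency hypothesis. First, I would recycle the telescoping and Abel-summation calculation that produced Equation \ref{eq:each_region}: because that argument is pathwise and relies only on Lemmas \ref{lemma1} and \ref{lemma:bound_v}, the same derivation yields, \emph{for every region $i$ separately}, a deterministic bound $|\mathrm{cov}_i - (1-\alpha)| \leq c_1/T$ for the per-region empirical coverage. Taking expectations gives the bias-level statement $|\mathbb{E}[\mathrm{cov}_i] - (1-\alpha)| \leq c_1/T$, which is the source of the $c_1/T$ term in the target inequality.

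Second, to pass from the in-expectation control to a uniform statement over regions I would exploit the $K$-dependency hypothesis. For each $(i,j)$, partition $\{1,\ldots,T\}$ into the $K$ residue classes modulo $K$; by hypothesis any two elements in the same class are separated by at least $K$ time steps and are therefore independent, and each $err_{t,i,j}$ lies in $[0,1]$. Apply Hoeffding's inequality (Lemma \ref{lemma_hoe}) to each class of size $\sim T/K$ and invoke the equivalence of Lemma \ref{lemma_subg} to conclude that the class-level mean is sub-Gaussian around its expectation with variance proxy of order $K/T$. Aggregating the $K$ classes via Lemma \ref{lemma_avg} shows that the full average $\frac{1}{T}\sum_{t=1}^T err_{t,i,j}$ is sub-Gaussian around its expectation with parameter of the same order $\sqrt{K/T}$.

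Third, I would invert this sub-Gaussian control into a tail probability and take a union bound over the $2n$ region-direction pairs; this introduces the $\sqrt{\log n}$ factor and yields, with high probability, the simultaneous deviation bound
$$\max_{i,j}\left|\frac{1}{T}\sum_{t=1}^T err_{t,i,j} - \mathbb{E}\left[\frac{1}{T}\sum_{t=1}^T err_{t,i,j}\right]\right| \leq \sqrt{\frac{c_2 K \log n}{T}}.$$
Combining this fluctuation estimate with the bias bound from the first step through the triangle inequality gives, simultaneously for every $i$, $\mathrm{cov}_i \geq (1-\alpha) - c_1/T - \sqrt{c_2 K \log n / T}$, and taking the minimum over $i$ finishes the proof.

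\textbf{Main obstacle.} The genuine technical friction is that the error process is not intrinsically independent: the feedback rule in Equation \ref{eq_al} couples $\alpha_{t,i}$ to the full past history through the second moment $v_{t,i}$, so in principle $err_{t,i,j}$ can depend on arbitrarily old outcomes through the adapted threshold $Q_{1-\alpha_{t,i}}(\cdot)$. The theorem sidesteps this by postulating $K$-dependence directly, and the care required in the proof is to invoke Hoeffding only along the $K$-spaced residue classes where that hypothesis is cleanly applicable, while keeping the pathwise Theorem \ref{Tho:Avg_cov} argument (which delivers the $c_1/T$ bias) formally separate from the concentration step (which delivers the $\sqrt{K\log n/T}$ fluctuation).
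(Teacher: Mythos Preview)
Your plan matches the paper's structure closely: both separate the per-region bias inherited from Equation \ref{eq:each_region} from a fluctuation term controlled by partitioning $\{1,\ldots,T\}$ into $K$ residue classes and applying Hoeffding along each class via Lemmas \ref{lemma_hoe}--\ref{lemma_avg}. The one substantive difference is how the $n$ regions are aggregated at the end. You propose a union bound over the $2n$ pairs, which yields a \emph{high-probability} statement. The paper instead bounds $\mathbb{E}[\max_i v_i]$ directly via the sub-Gaussian maximal inequality --- Jensen followed by $\mathbb{E}[\max_i e^{\lambda v_i}]\leq\sum_i \mathbb{E}[e^{\lambda v_i}]\leq n\exp(\lambda^2 K/(8T))$, optimized over $\lambda$ --- producing an \emph{in-expectation} lower bound on $\min_i \mathrm{cov}_i$, which is what the proof actually establishes. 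Both routes give the same $\sqrt{K\log n/T}$ rate; your union bound would need the extra step of integrating the tail to recover the expectation version the paper proves.
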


\begin{proof}
Define $M_{t,i}$ as:
$$
M_{t,i} = \frac{1}{2}\sum_{j=1}^2 \mathbb{I}(y_{t,i,j} \in [\mathit{low}_{t,i,j}, \mathit{up}_{t,i,j}])
$$
Then we have:
\begin{equation}
    \min_i \sum_{t=1}^T M_{t,i}=\min_i \frac{1}{2T}\sum_{t=1}^T \sum_{j=1}^2 \mathbb{I}(y_{t,i,j} \in [\mathit{low}_{t,i,j}, \mathit{up}_{t,i,j}])
    \label{eq_m_mean}
\end{equation}

Note $M_{t,i} \in [-1, 1]$. Partition time steps into $K$ sets:
$$
S_k = \{i \mid i = nK + k, n \in \mathbb{N}, i \leq T\}, \quad k = 1,\ldots,K
$$

For each $k$, $\{M_{t,i}\}_{t \in S_k}$ are independent because of our assumption \ref{assu_in}. By Lemma \ref{lemma_hoe}:
$$
P\left(\left|\sum_{t \in S_k} M_{t,i} - E\sum_{t \in S_k} M_{t,i}\right| \geq \epsilon\right) \leq 2\exp\left(-\frac{4\epsilon^2}{|S_k|}\right)
$$

Without loss of generality, we assume $T$ is divisible by $K$, we have $|S_k| = T/K$. According to Lemma \ref{lemma_avg}:
\begin{equation}
    P\left(\frac{1}{T}\left|\sum_{t=1}^T M_{t,i} - E\sum_{t=1}^T M_{t,i}\right| \geq \epsilon\right) \leq 2\exp\left(-\frac{4\epsilon^2 T}{K}\right)
    \label{eq_con_m}
\end{equation}

Because of \ref{eq:each_region}, we can find $c_1$, such that, $E\left[\frac{1}{T}\sum_{t=1}^T M_{t,i}\right] \geq 1 - \alpha - \frac{c_1}{T}$. We define:
$$
u_i = 1 - \alpha - \frac{c_1}{T} - \frac{1}{T}\sum_{t=1}^T M_{t,i}
$$
Then $Eu_i\leq 0$
\begin{equation}
     P(|u_i-Eu_i|\geq \epsilon) \leq 2\exp\left(-\frac{4\epsilon^2 T}{K}\right)
    \label{eq_con_u}
\end{equation}
The second $\leq$ is derived from Equation \ref{eq_con_m}.
Besides, we have:
\begin{equation}
    \max_{i} u_i=1-\alpha-\frac{c_1}{T}-\min_i\sum_{t=1}^T M_{t,i}
    \label{mean_u}
\end{equation}

If we define $v_i=u_i-Eu_i$, then we bound $E(\max_i{v_i})$ as follows:
\begin{align*}
&\exp\left(\lambda E[\max_i v_i]\right) \\
&\quad \leq E\left[\exp\left(\lambda \max_i v_i\right)\right] \quad \text{(Jensen's inequality)} \\
&\quad = E\left[\max_i e^{\lambda v_i}\right] \quad \leq E\left[\sum_{i=1}^n e^{\lambda v_i}\right] \quad \text{(since } \max_i e^{\lambda v_i} \leq \sum_i e^{\lambda v_i}) \\
&\quad = \sum_{i=1}^n E\left[e^{\lambda v_i}\right] \quad \leq n \exp\left(\frac{\lambda^2 K}{8T}\right) \quad \text{(by Equation \ref{eq_con_u} and Lemma \ref{lemma_subg})} \\
&\quad = \exp\left(\log n + \frac{\lambda^2 K}{16T}\right)
\end{align*}

\text{Taking logs on both sides:}
\begin{equation}
\lambda E[\max_i v_i] \leq \log n + \frac{\lambda^2 K}{16T}
\end{equation}
Therefore, we obtain:
\begin{equation}
    E\left[\max_{i} v_i\right] \leq \frac{\log n}{\lambda} + \frac{K\lambda}{16T}\leq\sqrt{\frac{K\log n}{4T}}
    \label{eq_bound_v}
\end{equation}

Then recall \ref{eq_m_mean} and \ref{mean_u}:
\begin{align*}
    E\min_i \frac{1}{2T}\sum_{t=1}^T 
    &\sum_{j=1}^2 \mathbb{I}(y_{t,i,j} \in [\mathit{low}_{t,i,j}, \mathit{up}_{t,i,j}])\\&=E\min_iM_i \quad\text{ (By \ref{eq_m_mean})}\\
    &=1-\alpha-\frac{c_1}{T}-E\max_iu_i \quad\text{ (By \ref{mean_u})}\\
    &\geq1-\alpha-\frac{c_1}{T}- E\max_iv_i\\
    &\geq1-\alpha-\frac{c_1}{T}-\sqrt{\frac{c_2K\log n}{T}} \quad\text{ (By \ref{eq_bound_v})}
\end{align*}

\end{proof}
\newpage
\section{Full result}
\subsection{Full result of all experiments}
\label{A:full_res}
We report the results of all prediction models in the following 4 Tables (Table \ref{tab:nycbike_all}, \ref{tab:nyctaxi_all}, \ref{tab:chibike_all}, \ref{tab:chibike_all}), each one represents the results for one dataset. 

It can be observed that our method achieves the best prediction results 15, 20, 13, and 11 times across four datasets, respectively. In cases where our method fails to achieve the best prediction result, it typically obtains the second-best prediction result. This undoubtedly demonstrates the competitiveness of our approach.
\begin{table}[!h]
\centering
\setlength{\tabcolsep}{1.pt}
\renewcommand{\arraystretch}{1.}
\fontsize{5.}{7.5}\selectfont 

\caption{Results of CHItaxi dataset}
\label{tab:chitaxi_all}
\end{table}
\newpage
\subsection{ Full result of sensitive analysis}
\label{section:sa_full}
We represent the result of sensitive analysis for NYCtaxi, CHIbike and CHItaxi datasets in the following Figure \ref{fig:1}, \ref{fig:2}, \ref{fig:3}.
\begin{figure}[!h]
    \centering
    \includegraphics[width=0.8\linewidth]{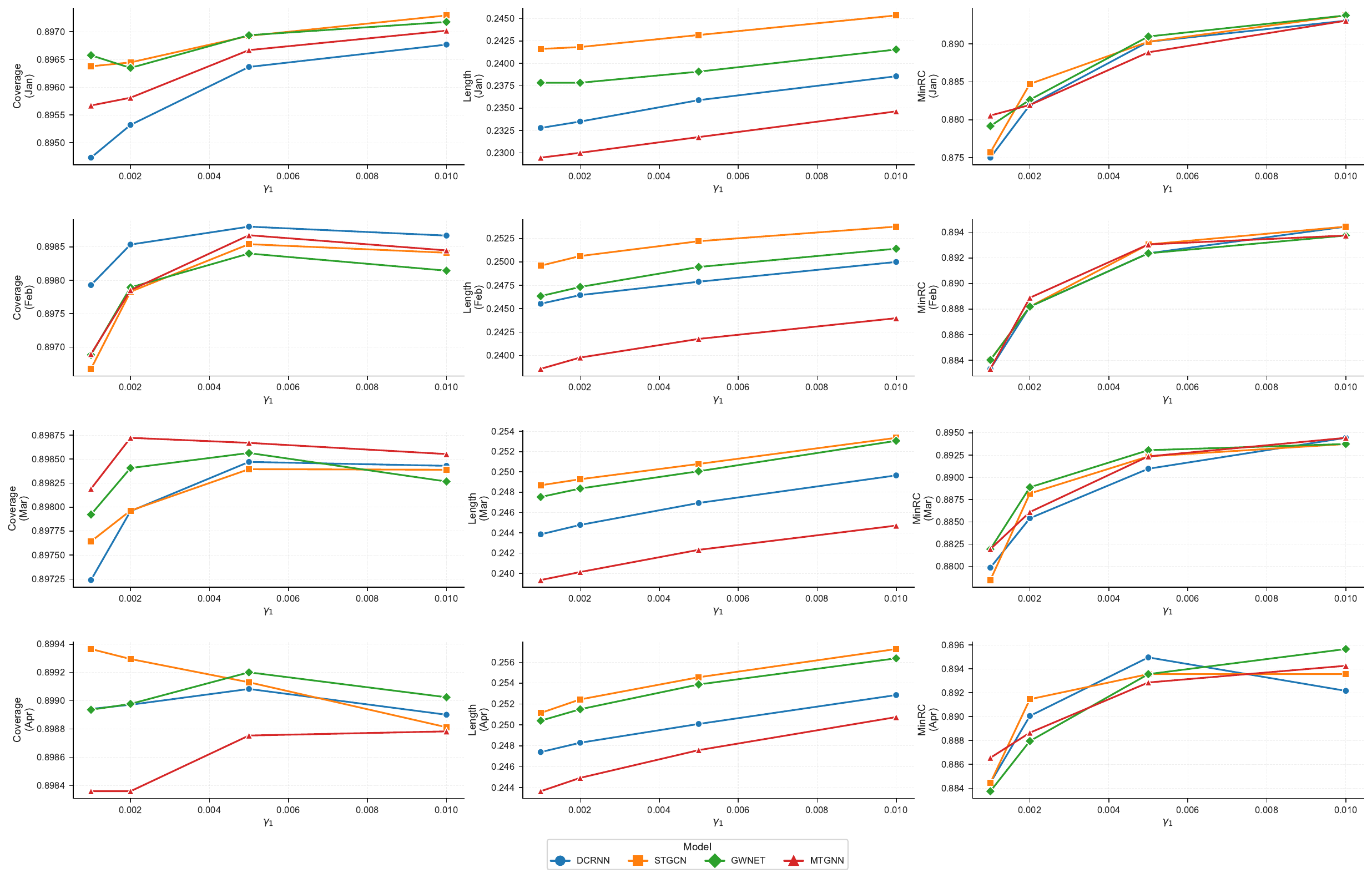}
    \caption{Results of sensitive analysis for NYCTaxi dataset}
    \label{fig:1}
\end{figure}
\begin{figure}[!h]
    \centering
    \includegraphics[width=0.8\linewidth]{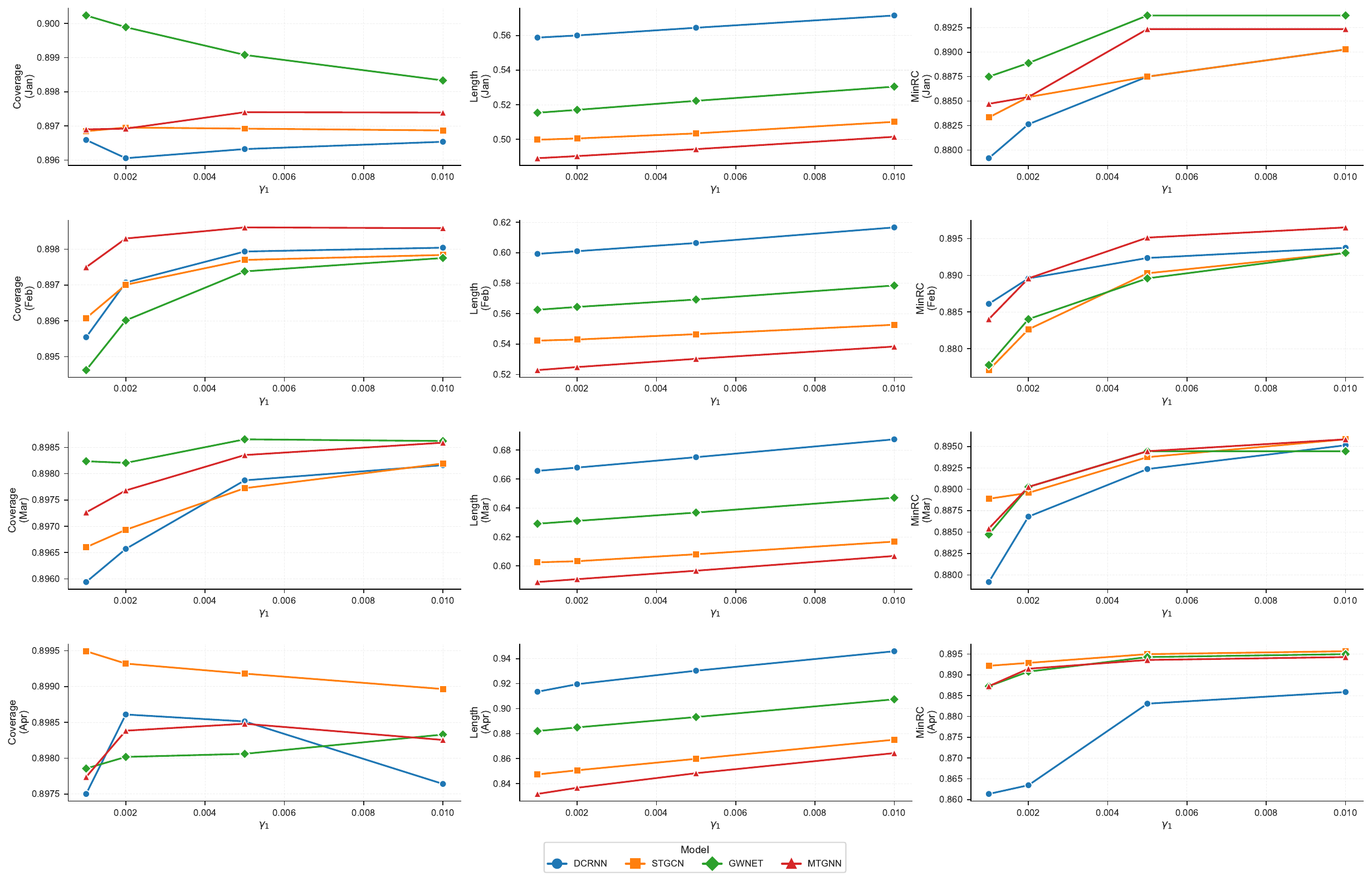}
    \caption{Results of sensitive analysis for CHIbike dataset}
    \label{fig:2}
\end{figure}
\begin{figure}[!h]
    \centering
    \includegraphics[width=0.8\linewidth]{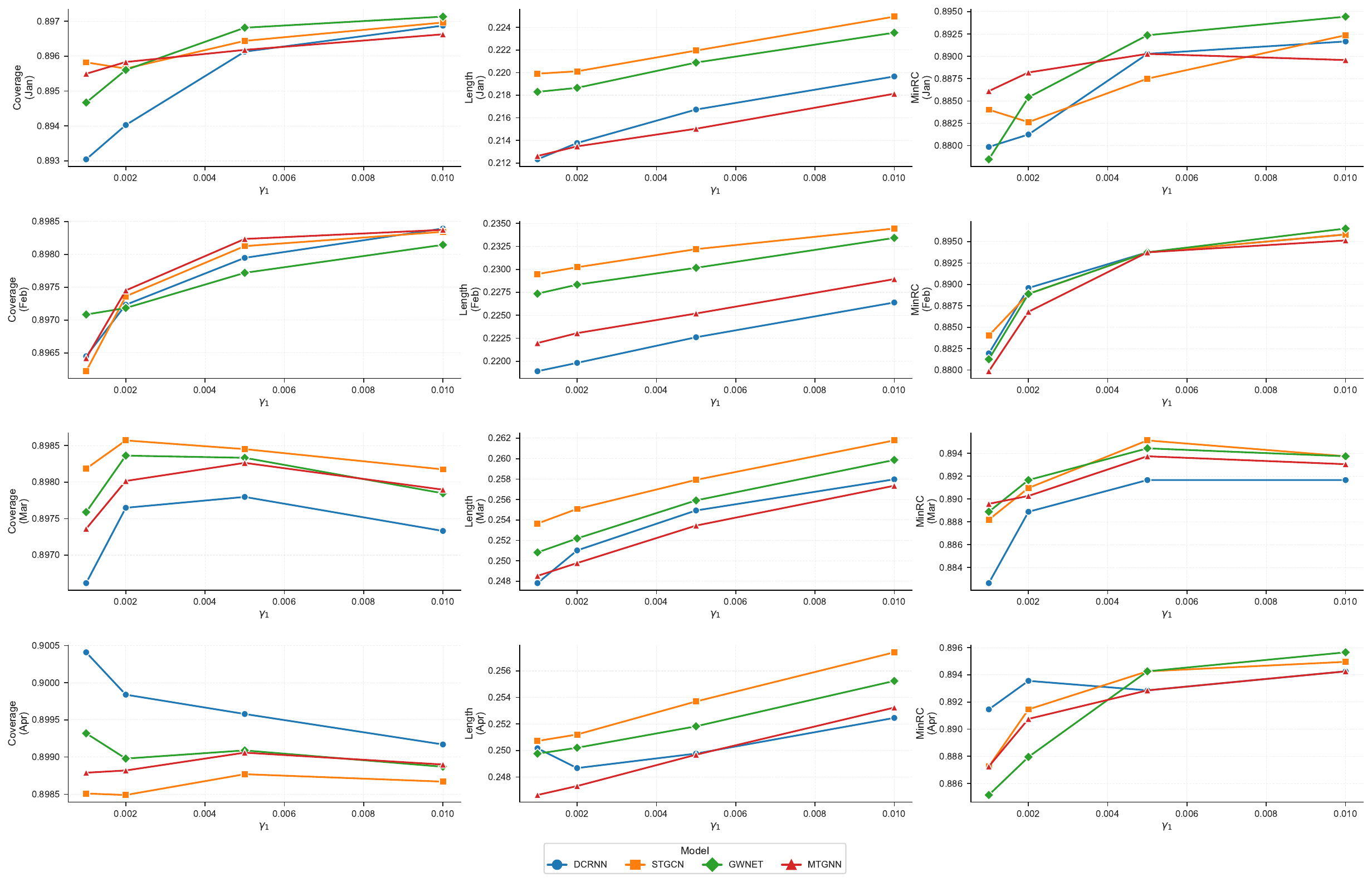}
    \caption{Results of sensitive analysis for CHIbike dataset}
    \label{fig:3}
\end{figure}

The results of sensitive analysis in NYCtaxi, CHIbike and CHItaxi datasets are similar to the results in NYCBike datasets. 
\subsection{Full result of adaptive learning rate}
\label{section:dr}
We plot the figure for STGCN, DCRNN and MTGNN in the following Figure \ref{fig:region_STGCN}, \ref{fig:region_DCRNN} and \ref{fig:region_MTGNN}.
\begin{figure}[htbp]
    \centering
    
    \subfloat[ Daily regionl coverage for NYCbike dataset]{\includegraphics[width=0.8\linewidth]{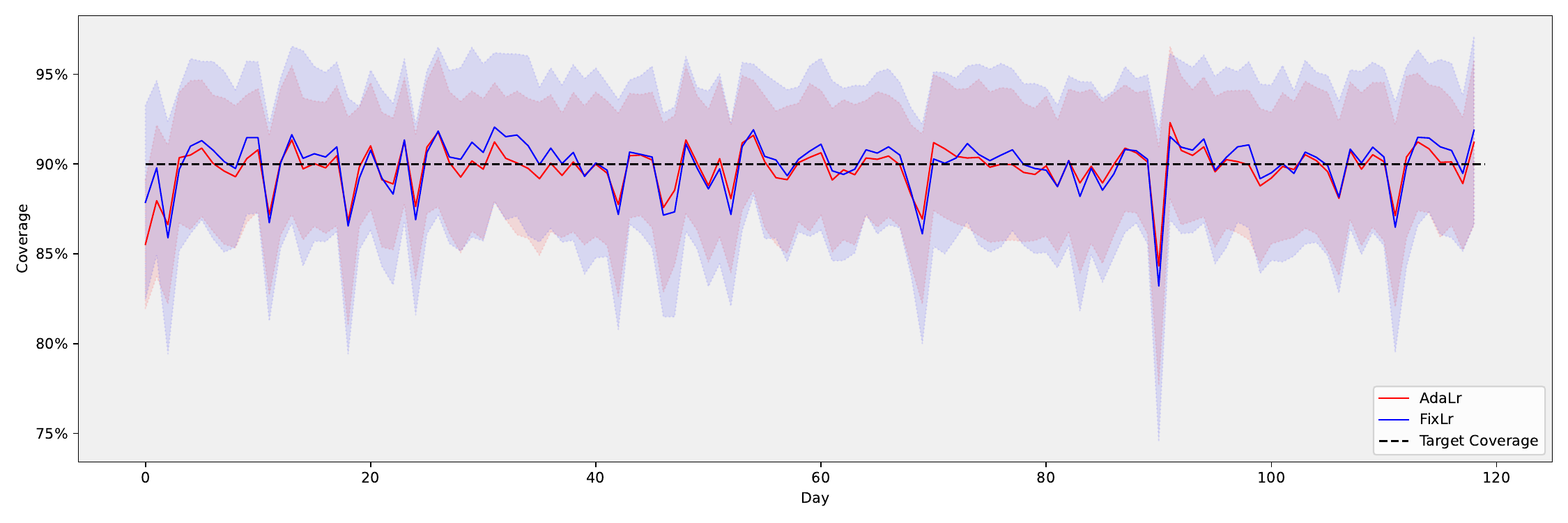}}
    \vspace{0.1cm}
    \subfloat[Daily regionl coverage for NYCatxi dataset]{\includegraphics[width=0.8\linewidth]{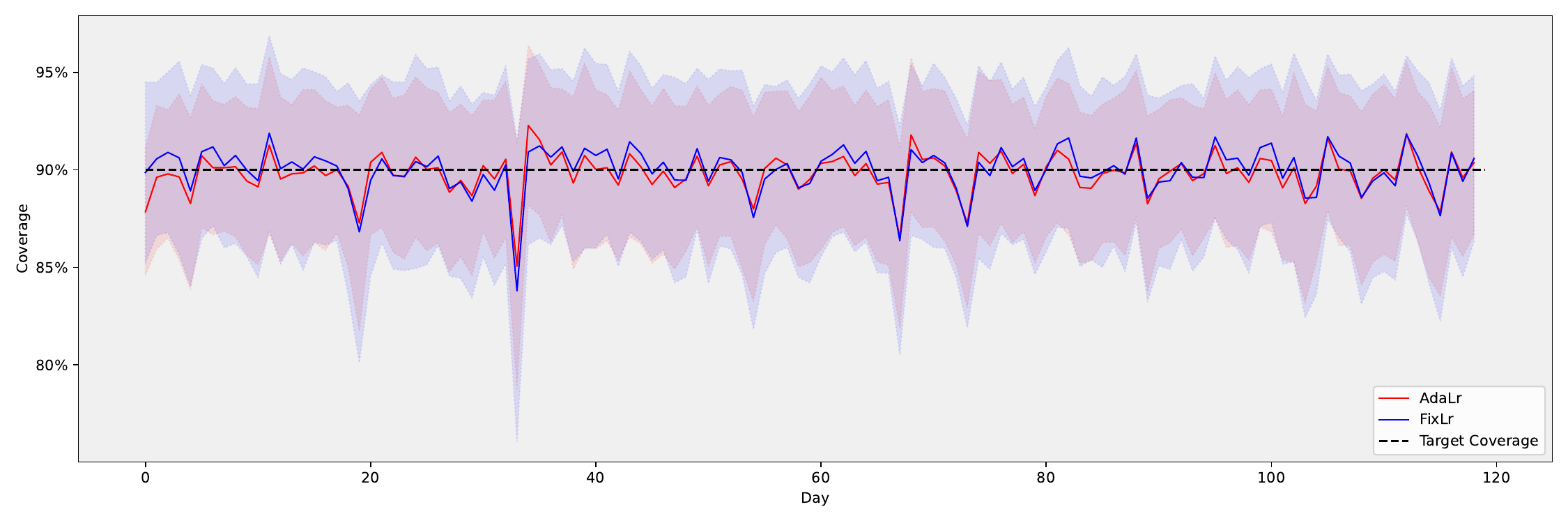}}
    \vspace{0.1cm}
    \subfloat[Daily regionl coverage for CHIbike dataset]{\includegraphics[width=0.8\linewidth]{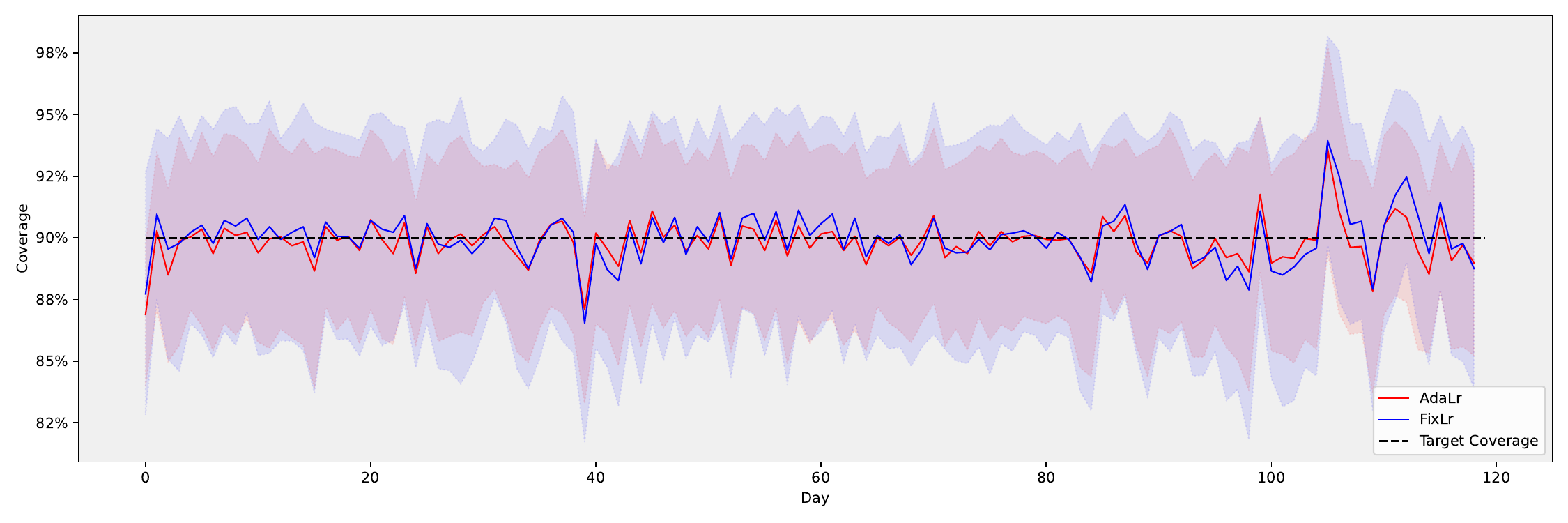}}
    \vspace{0.1cm}
    \subfloat[Daily regionl coverage for CHItaxi dataset]{\includegraphics[width=0.8\linewidth]{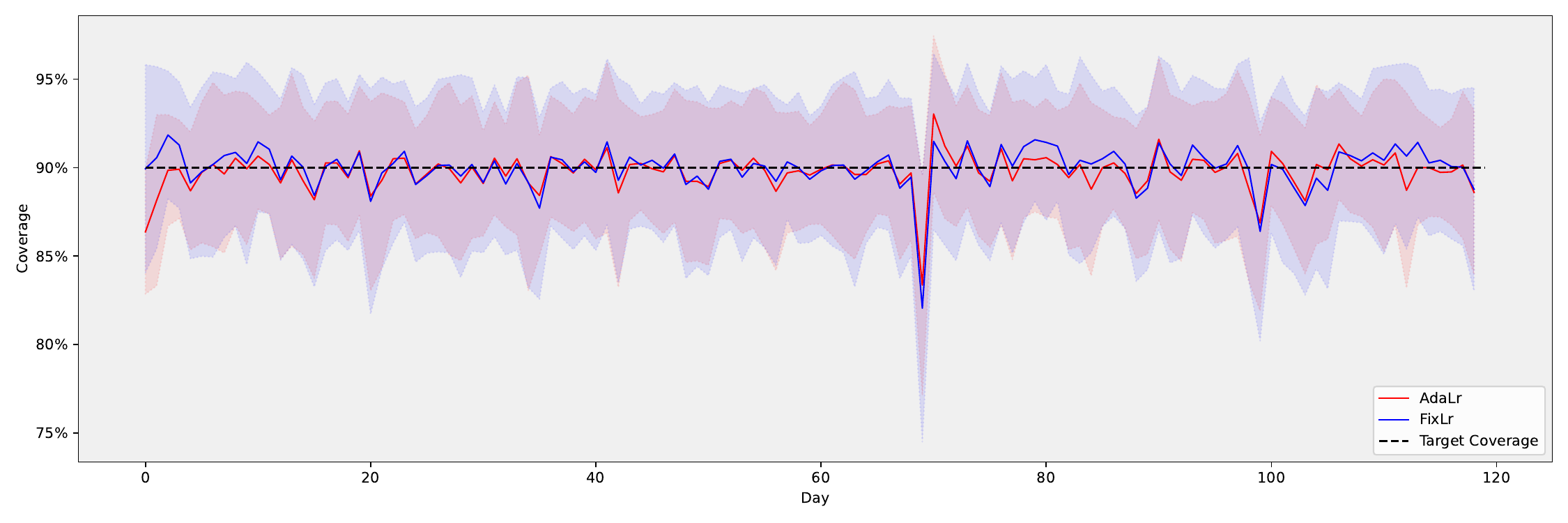}}
    \caption{Regionl coverage for STGCN model}
    \label{fig:region_STGCN}
\end{figure}
\begin{figure}[htbp]
    \centering
    
    \subfloat[ Daily regionl coverage for NYCbike dataset]{\includegraphics[width=0.8\linewidth]{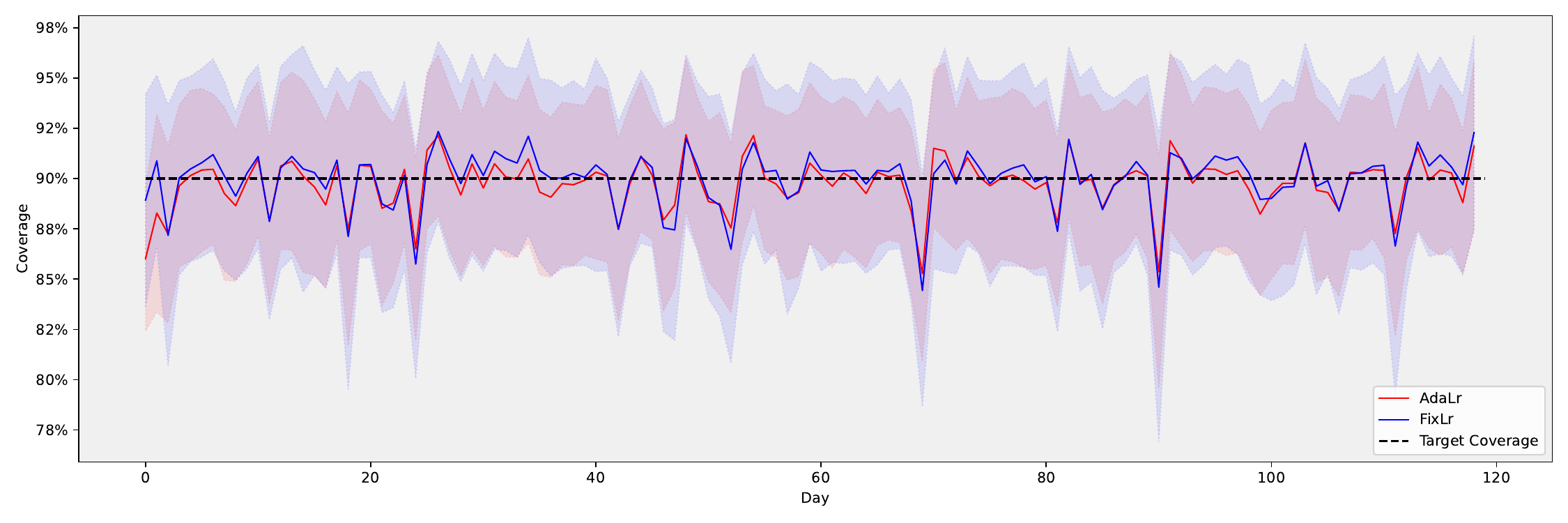}}
    \vspace{0.1cm}
    \subfloat[Daily regionl coverage for NYCatxi dataset]{\includegraphics[width=0.8\linewidth]{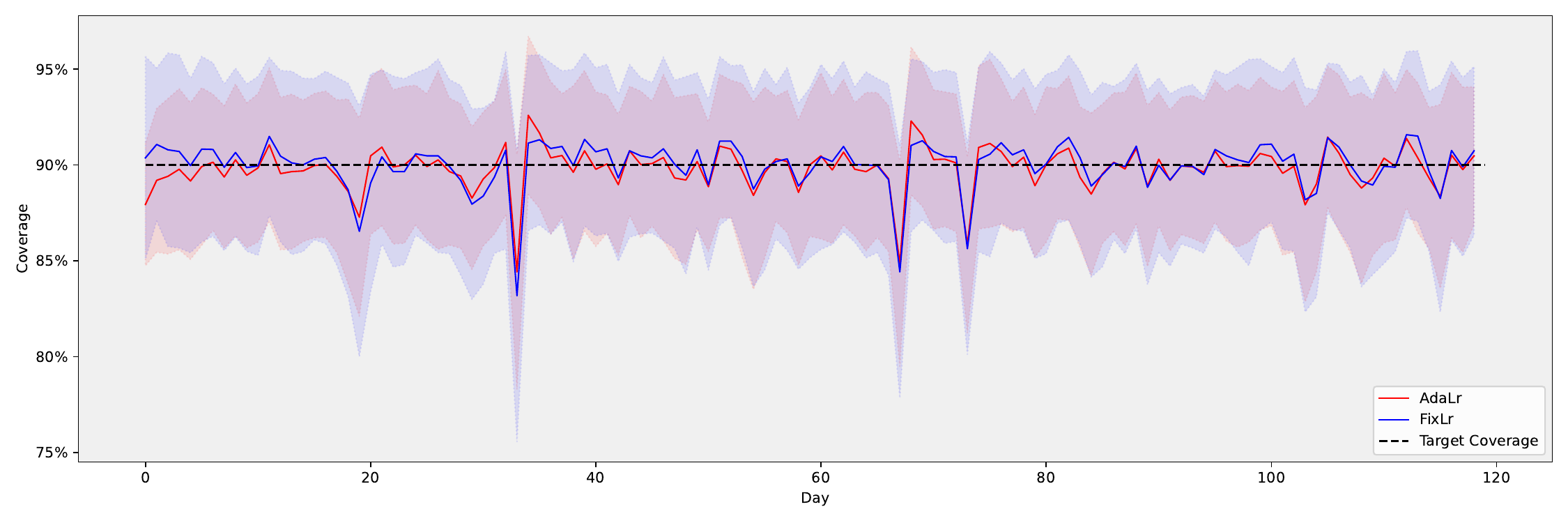}}
    \vspace{0.1cm}
    \subfloat[Daily regionl coverage for CHIbike dataset]{\includegraphics[width=0.8\linewidth]{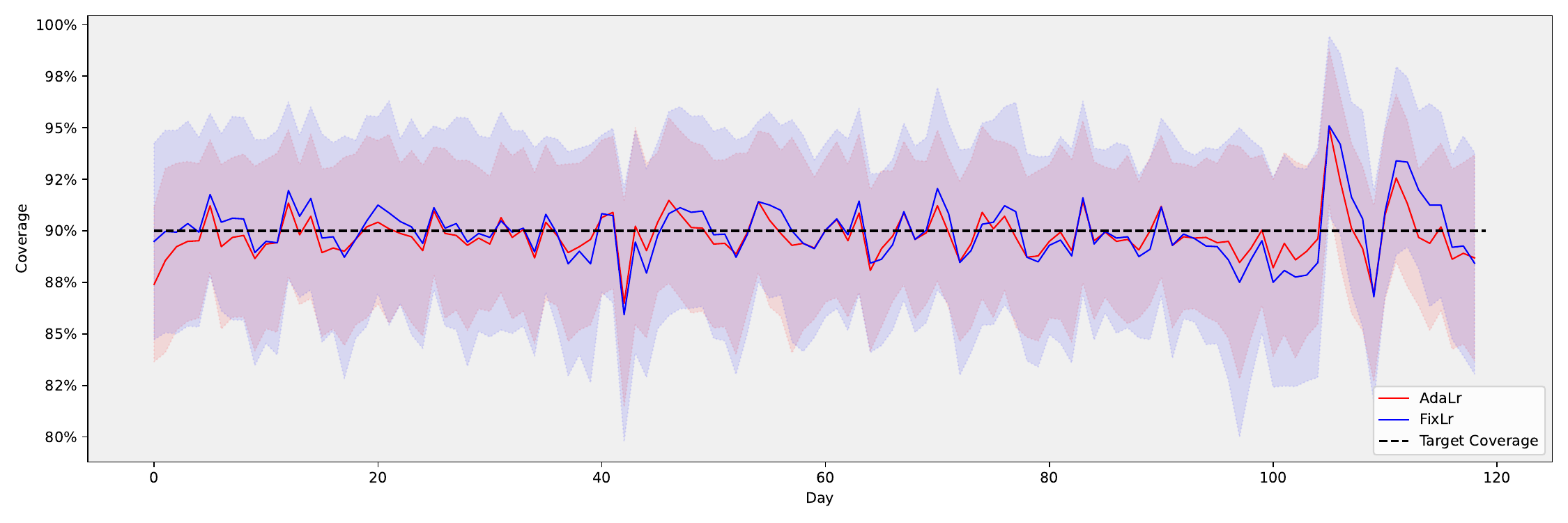}}
    \vspace{0.1cm}
    \subfloat[Daily regionl coverage for CHItaxi dataset]{\includegraphics[width=0.8\linewidth]{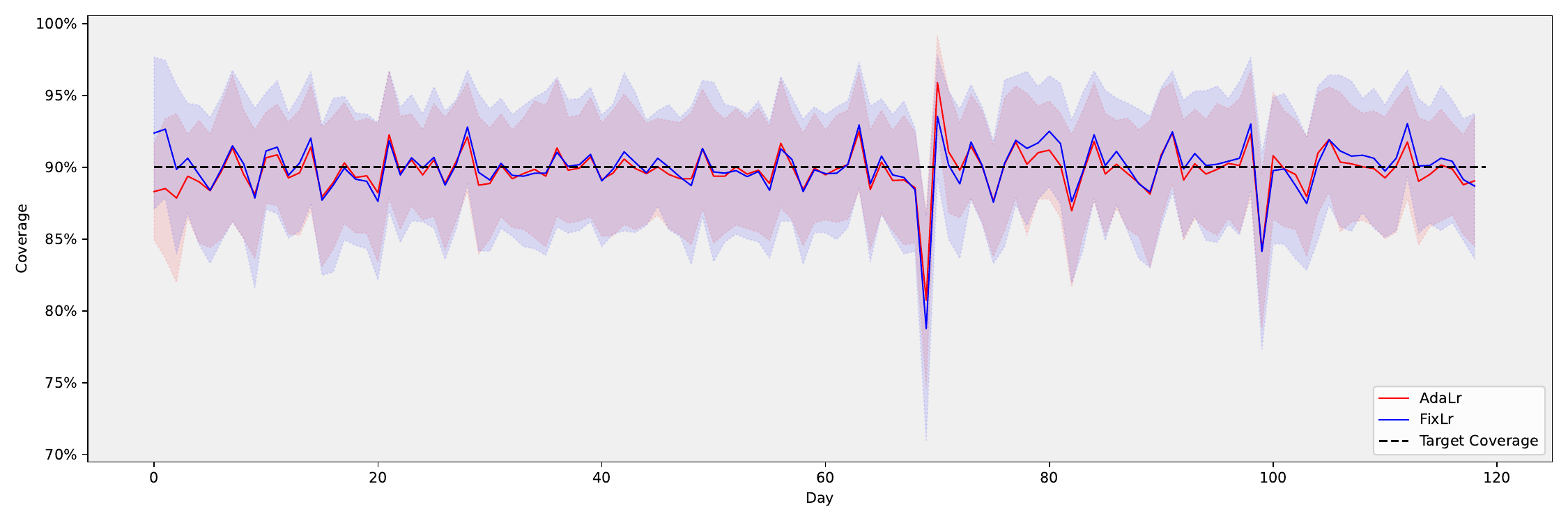}}
    \caption{Regionl coverage for DCRNN model}
    \label{fig:region_DCRNN}
\end{figure}
\begin{figure}[htbp]
    \centering
    
    \subfloat[ Daily regionl coverage for NYCbike dataset]{\includegraphics[width=0.8\linewidth]{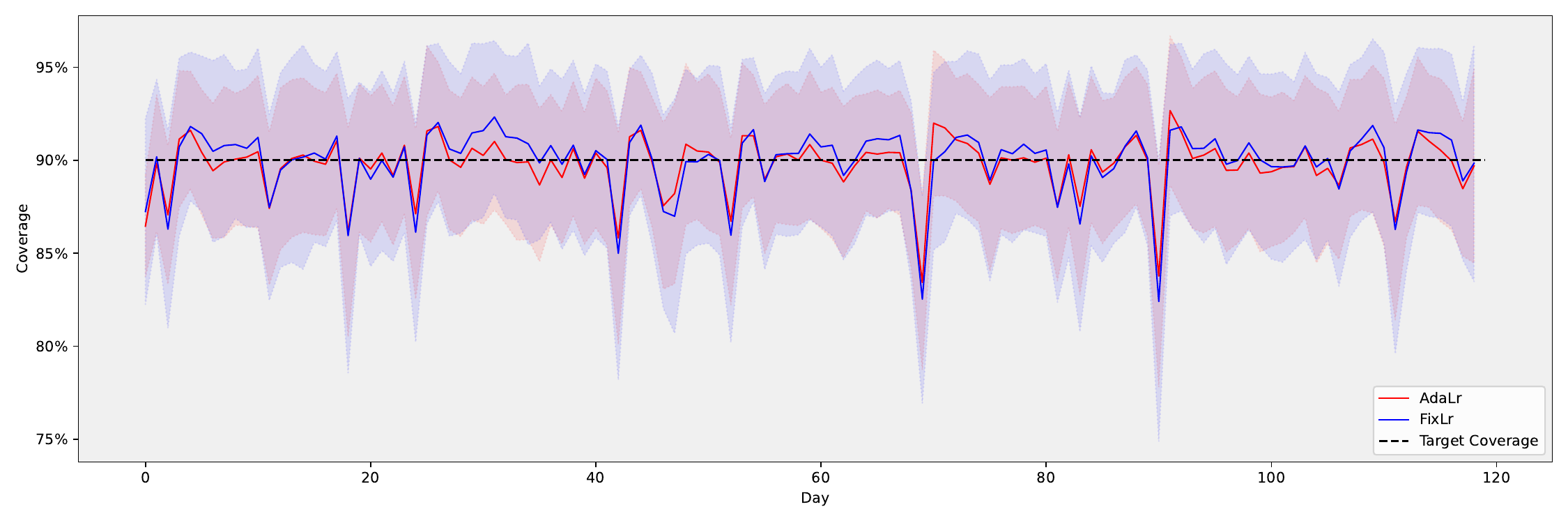}}
    \vspace{0.1cm}
    \subfloat[Daily regionl coverage for NYCatxi dataset]{\includegraphics[width=0.8\linewidth]{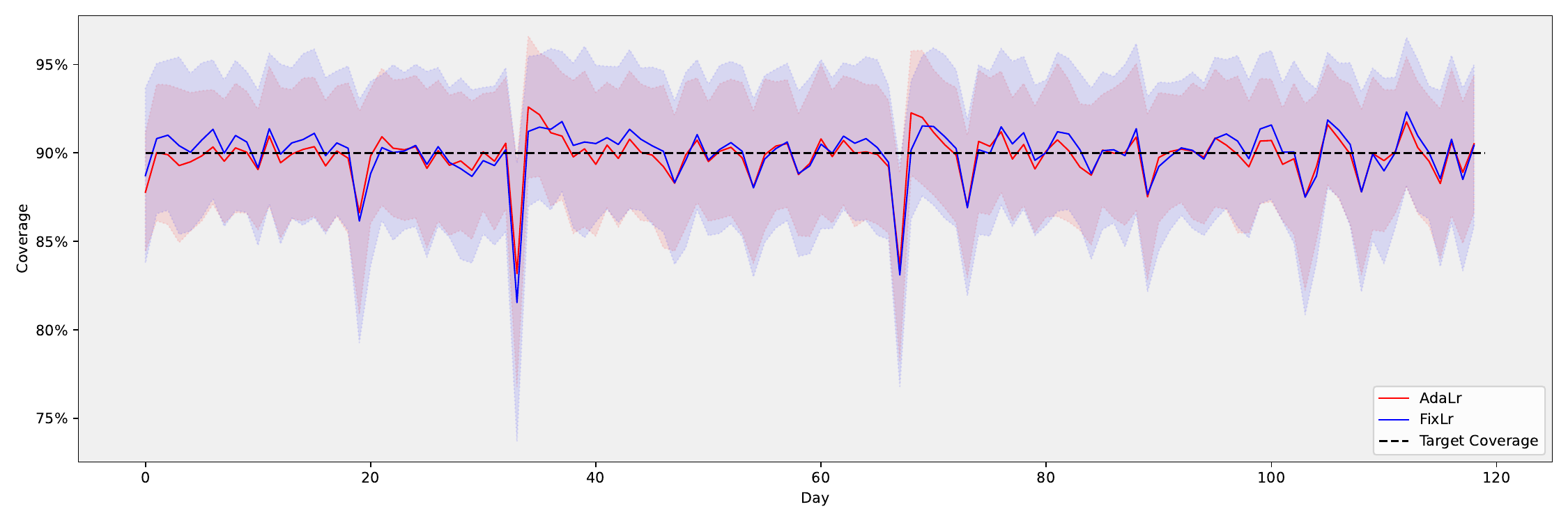}}
    \vspace{0.1cm}
    \subfloat[Daily regionl coverage for CHIbike dataset]{\includegraphics[width=0.8\linewidth]{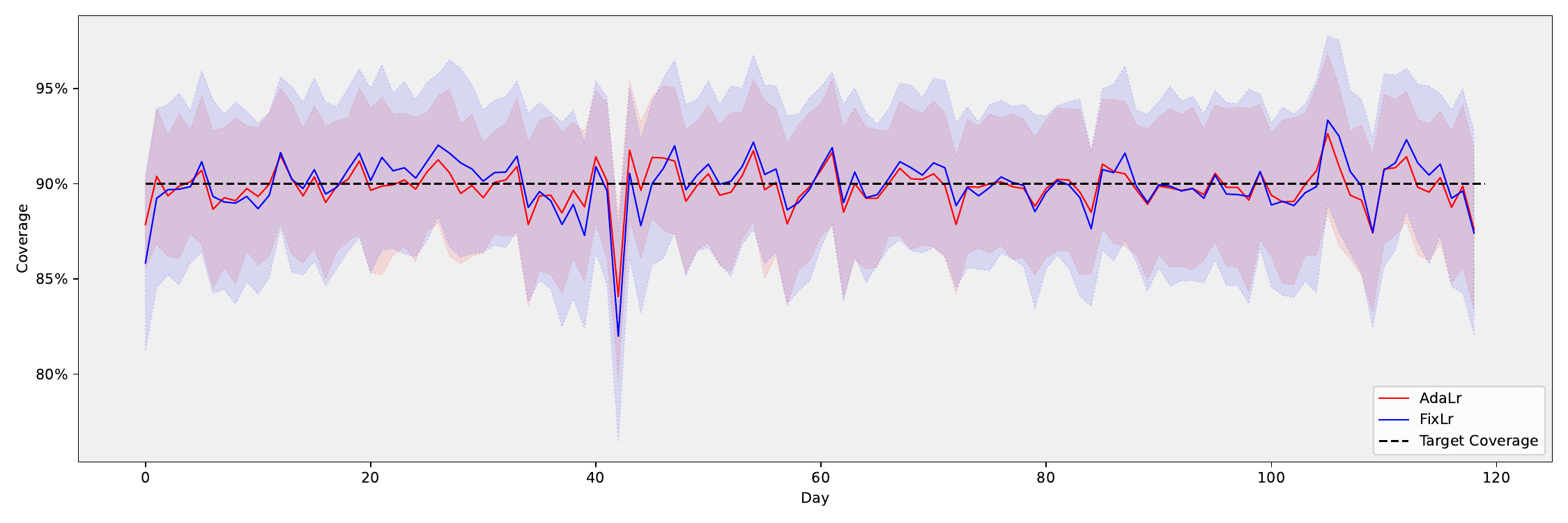}}
    \vspace{0.1cm}
    \subfloat[Daily regionl coverage for CHItaxi dataset]{\includegraphics[width=0.8\linewidth]{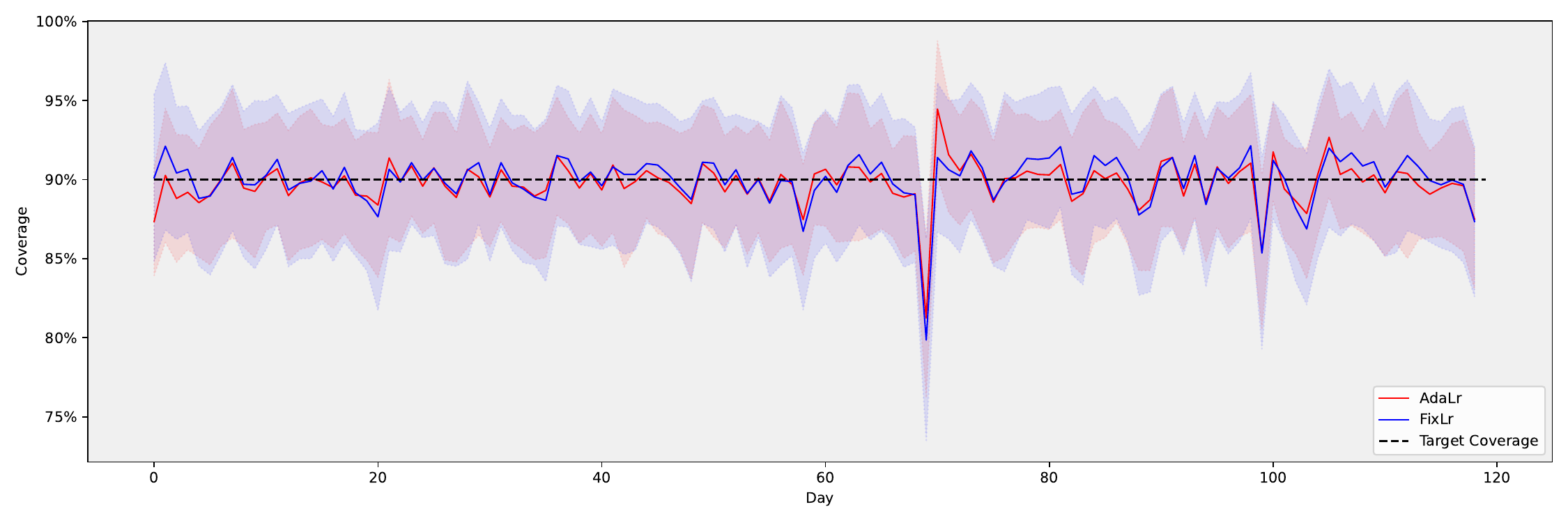}}
    \caption{Regionl coverage for MTGNN model}
    \label{fig:region_MTGNN}
\end{figure}
\clearpage
It could be found that the coverages of regions when using adaptive learning rate are more concentrated on 90\% than using fixed learning rate. And this observation is consistent with the conclusion in Section \ref{section:main_dr}.
\newpage
\end{document}